\def\colorful{0}
\newif\ifhyper\IfFileExists{hyperref.sty}{\hypertrue}{\hyperfalse}
\ifhyper\usepackage{hyperref}\fi
\def\nnewcolor{1}
\newcommand{\new}[1]{{\color{red} #1}}
\newcommand{\new}[1]{{\color{red} #1}}
\newcommand{\new}[1]{{\color{blue} #1}}
\newcommand{\newblue}[1]{{\color{blue} #1}}
\newcommand{\new}[1]{{#1}}
\newcommand{\newblue}[1]{{#1}}
\newtheorem{theorem}{Theorem}[section]
\newtheorem{question}{Question}[section]
\newtheorem{assumption}[theorem]{Assumption}
\newtheorem{lemma}[theorem]{Lemma}
\newtheorem{informal theorem}[theorem]{Theorem (informal statement)}
\newtheorem{condition}[theorem]{Condition}
\newtheorem{corollary}[theorem]{Corollary}
\newtheorem{claim}[theorem]{Claim}
\newtheorem{fact}[theorem]{Fact}
\newtheorem{observation}[theorem]{Observation}
\theoremstyle{definition}
\newtheorem{definition}[theorem]{Definition}
\newcommand{\eqdef}{\stackrel{{\mathrm {\footnotesize def}}}{=}}
\newcommand{\bx}{\mathbf{x}}
\newcommand{\by}{\mathbf{y}}
\newcommand{\bs}{\mathbf{s}}
\newcommand{\bt}{\mathbf{t}}
\newcommand{\bu}{\mathbf{u}}
\newcommand{\bw}{\mathbf{w}}
\newcommand{\tbx}{\tilde{\bx}}
\newcommand{\ty}{\tilde{y}}
\newcommand{\bS}{\mathbf{S}}
\newcommand{\bB}{\mathbf{B}}
\newcommand{\bI}{\mathbf{I}}
\newcommand{\lwe}{\mathrm{LWE}}
\newcommand{\cl}{\mathrm{classic}}
\newcommand{\ns}{\mathrm{noise}}
\newcommand{\nsv}{z}
\newcommand{\massart}{\mathrm{Massart}}
\newcommand{\Dgaus}{D^{\mathcal{N}}}
\newcommand{\Dexp}{D^{\mathrm{expand}}}
\newcommand{\Dcol}{D^{\mathrm{collapse}}}
\renewcommand{\mod}{\mathrm{mod}}
\newcommand{\gaus}{\mathcal{N}}
\newcommand{\alt}{\mathrm{alternative}}
\newcommand{\nul}{\mathrm{null}}
\newcommand{\sr}{\mathrm{SR}}
\newcommand{\PTF}{\mathrm{PTF}}
\newcommand{\LTF}{\mathrm{LTF}}
\newcommand{\sn}{\mathrm{signal}}
\newcommand{\scl}{\mathrm{scale}}
\newcommand{\ad}{\mathrm{add}}
\newcommand{\tru}{\mathrm{truncated}}
\newcommand{\nw}{\mathrm{new}}
\newcommand{\idt}{\mathbf{1}}
\newcommand{\bv}{\mathbf{v}}
\newcommand{\br}{\mathbf{r}}
\newcommand{\R}{\mathbb{R}}
\newcommand{\Z}{\mathbb{Z}}
\newcommand{\N}{\mathbb{N}}
\newcommand{\E}{\mathbf{E}}
\newcommand{\eps}{\epsilon}
\newcommand{\dtv}{d_{\mathrm TV}}
\newcommand{\pr}{\mathbf{Pr}}
\newcommand{\poly}{\mathrm{poly}}
\newcommand{\sgn}{\mathrm{sign}}
\newcommand{\sign}{\mathrm{sign}}
\newcommand{\opt}{\mathrm{OPT}}
\newcommand{\deno}{\mathrm{denominator}}
\newcommand{\numer}{\mathrm{numerator}}
\newcommand{\scldist}[2]{{#1}\circ{#2}}
\title{Cryptographic Hardness of Learning Halfspaces with Massart Noise}
\author{
	Ilias Diakonikolas\thanks{Supported by NSF Medium Award CCF-2107079,
NSF Award CCF-1652862 (CAREER), a Sloan Research Fellowship, and
a DARPA Learning with Less Labels (LwLL) grant.}\\
	UW Madison\\
	{\tt ilias@cs.wisc.edu}\\
	\and
	Daniel M. Kane\thanks{Supported by NSF Medium Award CCF-2107547,
NSF Award CCF-1553288 (CAREER), a Sloan Research Fellowship, and a grant from CasperLabs.}\\
	UC San-Diego \\
	{\tt dakane@ucsd.edu}\\
	\and
	Pasin Manurangsi\\
	Google Research \\
	{\tt pasin@google.com}\\
	\and
	Lisheng Ren\thanks{Supported by NSF Award CCF-1652862 (CAREER) 
	and a DARPA Learning with Less Labels (LwLL) grant.}\\
	UW Madison\\
	{\tt lren29@wisc.edu}\\
}
\begin{document}
\maketitle

\begin{abstract}
We study \new{the complexity of} PAC learning halfspaces in the presence of Massart noise. 
In this \new{problem}, we are given i.i.d.\ labeled examples 
$(\mathbf{x}, y) \in \R^N \times \{ \pm 1\}$, 
where the distribution of $\mathbf{x}$ is arbitrary 
and the label $y$ is a Massart corruption 
of $f(\mathbf{x})$, for an unknown halfspace $f: \R^N \to \{ \pm 1\}$,
with flipping probability $\eta(\mathbf{x}) \leq \eta < 1/2$.
The goal of the learner is to compute a hypothesis with small 0-1 error.
\new{Our main result is the first computational hardness result for this learning problem.
Specifically,} assuming the (widely believed) subexponential-time hardness 
of the Learning with Errors (LWE) problem, we show that no polynomial-time 
\new{Massart halfspace learner} can achieve error better than $\Omega(\eta)$, 
even if \new{the optimal 0-1 error is small, namely}
$\opt  =  2^{-\log^{c} (N)}$
for any universal constant $c \in (0, 1)$. 
Prior work \new{had} provided qualitatively similar evidence of hardness in the
Statistical Query model. Our \new{computational} hardness result 
\new{essentially resolves the polynomial PAC learnability of Massart halfspaces, by showing that}
known \new{efficient} learning algorithms \new{for the problem} are \new{nearly} best possible.
\end{abstract}

\setcounter{page}{0}
\thispagestyle{empty}
\newpage

\section{Introduction} \label{sec:intro}

\subsection{Background and Motivation} \label{ssec:background}

A halfspace or linear threshold function (LTF) 
is any function $h_{\bw, t}: \R^N \to \{\pm 1\}$
of the form  $h_{\bw, t}(\bx) := \sign(\left<\bw, \bx\right> -t )$,
where the vector $\bw \in \R^N$
is called the weight vector, $t \in \R$ is called the threshold,
and $\sgn: \R \to \{\pm 1\}$ is defined by $\sgn(t) = 1$ 
if $t \geq 0$ and $\sgn(t) = -1$ otherwise.
Halfspaces are a central concept class in machine 
learning, extensively investigated since 
the 1950s~\cite{Rosenblatt:58, Novikoff:62, MinskyPapert:68}. 
Here we study the \new{computational} complexity of learning halfspaces in Valiant's 
(distribution independent) PAC model~\cite{val84}, when the labels have
been corrupted by {\em Massart noise}~\cite{Massart2006}.
We define the Massart noise model below.

\begin{definition} [Massart Noise] \label{def:massart-noise-condition}
We say that a joint distribution $D$ of labeled examples $(\mathbf{x},y)$, 
supported on $\R^N\times \{\pm 1\}$,
satisfies the Massart noise condition with noise parameter $\eta \in [0, 1/2)$ 
with respect to a concept class $C$ of Boolean-valued functions on $\R^N$
if there is a concept $c\in C$ such that for all $\mathbf{x}_0\in \R^N$ we have
that $\newblue{\eta(\mathbf{x}_0)} \eqdef 
\pr_{(\mathbf{x},y) \sim D}[c(\mathbf{x})\neq y \mid \mathbf{x}=\mathbf{x}_0]\leq \eta$. 
\end{definition}

\newblue{In other words, a Massart distribution $D$ is a distribution 
over labeled examples $(\mathbf{x}, y) \in \R^N \times \{\pm 1\}$
such that the distribution over examples is arbitrary
and the label $y$ of example $\mathbf{x}$ satisfies 
(i) $y = c(\mathbf{x})$ with probability $1-\eta(\mathbf{x})$,
and (ii) $y = -c(\mathbf{x})$ with probability $\eta(\mathbf{x})$,
for a target function $c\in C$.
Here $\eta(\mathbf{x})$ is an unknown function that satisfies 
$\eta(\mathbf{x}) \leq \eta<1/2$ for all $\mathbf{x}$.
}

The Massart PAC learning problem for the concept class $C$ is the following:\
Given i.i.d.\ samples from a Massart distribution $D$, 
\new{as in Definition~\ref{def:massart-noise-condition},}
the goal is to output a hypothesis with small 0-1 error. 
\new{In this work, we study the computational complexity of the Massart PAC learning problem,
when the underlying concept class $C$ is the class of halfspaces on 
$\R^N$.}

\new{
In its above form, the Massart noise model was defined in~\cite{Massart2006}.
An essentially equivalent noise model had been defined in the 80s by Sloan and 
Rivest~\cite{Sloan88, RivestSloan:94, Sloan96}, 
and a very similar definition had been considered even earlier by Vapnik~\cite{Vapnik82}.

The Massart model is a classical semi-random noise model that
is more realistic than Random Classification Noise (RCN)
\footnote{Random Classification Noise (RCN)~\cite{AL88} is the special case of Massart
noise where the label of each example is independently flipped with probability {\em exactly} $\eta <1/2$.}. 
In contrast to RCN, Massart noise allows for variations in misclassification rates 
(without a priori knowledge of which inputs are more likely to be misclassified).
Asymmetric misclassification rates arise in a number of applications, including
in human annotation noise~\cite{beigman2009learning}.
Consequently, learning algorithms that can tolerate Massart noise are less
brittle than those that depend on the uniformity of RCN.
The agnostic model~\cite{Haussler:92, KSS:94}, where the noise can be fully adversarial, 
is of course even more robust; unfortunately, 
it is computationally hard to obtain agnostic learners with any non-trivial guarantees, 
even for basic settings.

We now return to the class of halfspaces, which is the focus of this work.
We recall that PAC learning halfspaces with RCN 
is known to be solvable in polynomial time (to any desired accuracy)~\cite{BlumFKV96}. 
On the other hand, agnostic PAC learning of halfspaces is known to computationally
hard (even for weak learning)~\cite{GR:06, FGK+:06short, Daniely16}.
} 

The computational task of PAC learning halfspaces corrupted by Massart noise
is a classical problem in machine learning theory that has been posed
by several authors since the 1980s~\cite{Sloan88, Cohen:97, Blum03}.
\new{Until recently, no progress had been made on the efficient PAC learnability 
of Massart halfspaces.}
\cite{DGT19} made the first algorithmic progress on this problem:
they gave a $\poly(N, 1/\eps)$-time learning algorithm 
with error guarantee of $\eta+\eps$. 
\new{Subsequent work made a number of refinements to this algorithmic result,
including giving an efficient proper learner~\cite{CKMY20} and
developing an efficient learner with strongly polynomial sample complexity~\cite{DKT21}.}
\new{In a related direction,}~\cite{DiakonikolasIKL21} gave an efficient boosting algorithm
achieving error $\eta+\eps$ for any concept class,
assuming the existence of a weak learner for the class.

\new{To summarize the preceding discussion, 
all known algorithms for Massart halfspaces achieve error arbitrarily close to $\eta$, 
where $\eta$ is the upper bound on the Massart noise rate.}
The error bound of \new{$\eta$} can be very far from the information-theoretically 
optimum error of \new{$\opt$}, where 
$\opt = R_{\LTF}(D) \leq \eta$. Indeed, 
known \new{polynomial-time} algorithms only guarantee error \new{$\approx \eta$}
even if $\opt$ is very small, i.e., $\opt \ll \eta$.
This prompts the following question:
\begin{question} \label{q:open}
Is there an efficient learning algorithm for Massart halfspaces with
a {\em relative} error guarantee? 
Specifically, if $\opt \ll \eta$ is it possible to achieve error  significantly better than $\eta$?
\end{question}

Our main result \new{(Theorem~\ref{thm:main-inf})} 
answers this question in the negative, assuming the subexponential-time
hardness of the classical Learning with Errors (LWE) problem 
\new{(Assumption~\ref{asm:LWE-hardness})}. 
In other words, {\em we essentially resolve the efficient PAC learnability
of Massart halfspaces}, under a widely-believed cryptographic assumption.

\subsection{Our Results} \label{ssec:results}

\new{Before we state our main result, we recall the setup of} 
the Learning with Errors (LWE) problem.
\new{In the LWE problem,}
we are given samples $(\bx_1, y_1), \dots, (\bx_m, y_m)$ 
and the goal is to distinguish between the following two cases:
\begin{itemize}[leftmargin=*]
\item Each $\bx_i$ is drawn uniformly at random (u.a.r.)\ from $\Z^n_q$, 
and there is a hidden secret vector $\bs \in \Z_q^n$ such that 
$y_i = \left<\bx_i, \bs\right> + \nsv_i$, 
where $\nsv_i \in \Z_q$ is discrete Gaussian noise (independent of $\bx_i$).
\item Each $\bx_i$ and each $y_i$ are independent 
and are sampled u.a.r. from $\Z_q^n$ and $\Z_q$ respectively.
\end{itemize}
Formal definitions of LWE (Definition~\ref{def:general-lwe-one-dim}) 
and related distributions together with the \new{precise computational} 
hardness assumption (Assumption~\ref{asm:LWE-hardness}) \new{we rely on} 
are given in Section~\ref{sec:prelims}. 

Our main result can now be stated as follows:
\begin{theorem}[Informal Main Theorem] \label{thm:main-inf}
Assume that LWE cannot be solved in $2^{n^{1-\Omega(1)}}$ time. 
Then, for any \new{constant} $\zeta > 0$, there is no polynomial-time learning algorithm 
for Massart halfspaces on $\R^N$ that can output a hypothesis with 0-1 
error smaller than $\Omega(\eta)$, even when $\opt \leq 2^{\newblue{-}\log^{1 - \zeta} N}$ 
and the Massart noise parameter $\eta$ is a small positive constant.
\end{theorem}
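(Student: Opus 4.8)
The plan is to reduce LWE to the problem of learning Massart halfspaces, so that a hypothetical polynomial-time learner achieving error $o(\eta)$ (in fact better than some absolute constant times $\eta$) would yield a subexponential-time LWE distinguisher. The reduction will be a chain of transformations. First I would pass from the standard LWE problem over $\Z_q^n$ to a ``continuous'' or ``mod-$1$'' version, and then reinterpret each LWE sample $(\bx_i, y_i)$ geometrically: the secret $\bs$ together with the modulus defines a family of parallel hyperplanes in $\R^n$ (or in a lifted space), and the inner product $\langle \bx_i, \bs\rangle$ measured modulo $q$ tells us which ``slab'' between consecutive hyperplanes the point $\bx_i$ falls into. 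The discrete Gaussian noise $\nsv_i$ means that with high probability $y_i$ reveals this slab membership up to a small error. The key idea is to engineer from each LWE sample one (or a few) labeled example(s) $(\mathbf{X}, Y) \in \R^N \times \{\pm1\}$, where the label $Y$ encodes ``is $\langle \bx, \bs\rangle \bmod q$ in the left half or the right half of its period,'' i.e.\ a halfspace-like predicate, and where the Gaussian tail of $\nsv_i$ controls the probability of a label flip. Because the discrete Gaussian is concentrated, the flip probability is at most some $\eta < 1/2$ pointwise — this is exactly the Massart condition — and moreover the \emph{typical} flip probability is tiny, which is what forces $\opt$ to be as small as $2^{-\log^{1-\zeta} N}$ after the right choice of parameters (scaling the noise width and the number of ``periods,'' which in turn blows up the dimension from $n$ to $N = n^{\poly\log n}$ or so, accounting for the $\log^{1-\zeta} N$ vs.\ $\log n$ discrepancy).

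The main structural step is an amplification/packing construction: a single threshold predicate on $\langle \bx, \bs\rangle \bmod q$ is a ``mod-$q$ linear threshold,'' not a genuine halfspace, so I would tensor up or take a suitable product/lift so that the resulting target concept is an honest halfspace in $\R^N$. A natural route is the standard trick of encoding residues via points on a curve (e.g.\ lifting $t \mapsto (t, t^2, \dots)$ or using roots of unity / a ``cyclic polytope''-style embedding) so that a single linear threshold in the lifted coordinates cuts out the desired arc; alternatively one builds a distribution supported on clusters, one cluster per residue class, and argues that the Bayes-optimal classifier among halfspaces is the intended one with small error. Then, in the YES case of LWE we get a bona fide Massart halfspace instance with $\opt$ provably small; in the NO case ($\bx_i, y_i$ independent uniform), the label $Y$ is essentially independent of $\mathbf{X}$, so \emph{no} hypothesis — halfspace or not — can achieve $0$-$1$ error below $1/2 - o(1)$, in particular not below $\Omega(\eta)$ with the constants chosen appropriately. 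A learner promised to output error $< c\,\eta$ for a suitable constant $c$ therefore distinguishes the two cases: run it on the reduced samples, draw fresh reduced samples to empirically estimate the hypothesis's $0$-$1$ error, and output YES iff the estimate is small.

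The hard part will be making the parameters line up simultaneously: (i) the pointwise flip bound must stay below a fixed constant $\eta < 1/2$ (Massart condition) while (ii) the average error $\opt$ is pushed down to $2^{-\log^{1-\zeta} N}$, and (iii) the whole reduction must be polynomial-time and blow the dimension up only to $N = 2^{\log^{O(1)} n}$ so that subexponential-in-$n$ LWE hardness translates to quasi-polynomial-in-$N$, hence super-polynomial, hardness for the learner. Balancing (i) and (ii) is the crux: shrinking $\opt$ means making the Gaussian noise narrow relative to the period, but then near the ``decision boundary'' slabs the flip probability could exceed $\eta$ unless we carefully delete or reweight a small fraction of the $\bx$-space near the boundaries (a ``boundary-removal'' or conditioning step), and one must check that this conditioning is efficiently implementable from LWE samples and does not destroy the YES/NO gap. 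A secondary technical point is handling the fact that LWE gives us samples, not oracle access to the distribution, so the reduction must be sample-preserving and the final distinguisher's error-estimation step must use only $\poly(N)$ fresh reduced samples; I expect this to be routine via Hoeffding once the distributional claims are established. I would also need the standard fact that worst-case-to-average-case / search-to-decision machinery for LWE lets me assume the convenient ``decision, normal-form secret, discrete Gaussian noise'' version without loss, which is quotable from the LWE literature (and will be set up as Assumption~\ref{asm:LWE-hardness} and Definition~\ref{def:general-lwe-one-dim} in Section~\ref{sec:prelims}).
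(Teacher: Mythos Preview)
Your high-level architecture matches the paper: reduce standard LWE to a continuous/mod-$1$ variant with binary secret, build a Massart PTF instance, then lift via the Veronese map to a Massart halfspace in dimension $N=(n+1)^{O(t/\eps)}$. The parameter-balancing concerns you raise (pointwise flip bound versus tiny $\opt$, dimension blowup) are exactly the right ones, and your distinguisher wrapper (run the learner, estimate its error on fresh samples) is also what the paper does.

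The gap is in how you propose to manufacture the labeled example. You set $Y$ to be a deterministic function of the LWE label $y$ (``is $\langle\bx,\bs\rangle\bmod q$ in the left or right half of its period''), and then plan to fix Massart violations by deleting $\bx$'s near the decision boundary. But you cannot condition on $\bx$ being far from the boundary without knowing $\bs$; you can only condition on $y$. And conditioning on $y$ being away from the boundary does \emph{not} repair the Massart condition: for any $\bx_0$ with $\langle\bx_0,\bs\rangle$ exactly on the boundary, the Gaussian noise $z$ is symmetric about zero, so after you reject $y$'s in a symmetric window around the threshold, the conditional label is still a fair coin given $\bx=\bx_0$. Such $\bx_0$ remain in the support, so the pointwise flip bound fails. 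Asymmetric windows do not help either, since there are two boundaries per period with opposite orientations.

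The paper avoids this by inverting the construction: it chooses the label $Y\in\{\pm1\}$ \emph{first} (with probabilities $1-\eta$ and $\eta$), and then uses rejection sampling on the LWE pair $(\bx,y)$, with label-dependent parameters $(\psi_\pm,B_\pm)$, to produce a new point $\bx'$ whose distribution in the hidden $\bs$-direction is (close to) a carefully chosen mixture of discrete Gaussians. The two conditional distributions $\bx'\mid Y=+1$ and $\bx'\mid Y=-1$ are engineered to mimic the SQ-hard instances of~\cite{diakonikolas2021near}: their supports are interleaved unions of intervals that are disjoint near the origin (giving $\opt=\exp(-\Omega(t^4/\eps^2))$) and nested far out (giving the $O(\eta)$ Massart bound). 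The analog of your ``boundary removal'' is carving sub-intervals out of the acceptance set $B_-$ for the $Y=-1$ rejection sampler, which is something the reduction can do because $B_-$ is a parameter it controls, not a function of the unknown secret. This carving is exactly what absorbs the extra Gaussian noise introduced by the rejection sampling and restores the pointwise Massart bound. Your ``clusters, one per residue class'' aside is closer in spirit, but the actual construction is specific enough that you would need to reproduce the~\cite{diakonikolas2021near} template and the carving argument to make it go through.
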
 

\newblue{The reader is also referred to Theorem~\ref{thm:main-formal} in the Appendix
for a more detailed formal statement.}
\new{Theorem~\ref{thm:main-inf} is the first {\em computational} 
hardness result for PAC learning halfspaces (and, in fact, any non-trivial concept class)
in the presence of Massart noise.}
\new{Our result} rules out even 
{\em improper} PAC learning, where the learner is allowed 
to output any polynomially evaluatable hypothesis.
As a corollary, it follows that the algorithm 
given in \cite{DGT19} is essentially the best possible, 
even when assuming that $\opt$ is almost inverse polynomially small 
\new{(in the dimension $N$)}. 
We also remark that this latter assumption is also 
nearly the best possible: if $\opt$ is $o(\eps/N)$, 
then we can just draw $\Omega(N/\eps)$ samples 
and output any halfspace that agrees with these samples.

\new{We note that a line of work has established qualitatively 
similar hardness in the Statistical Query (SQ) model~\cite{Kearns:98}
--- a natural, yet restricted, model of computation.}
Specifically,~\cite{CKMY20} established a super-polynomial SQ lower bound
for learning within error of $\opt+o(1)$.
\new{Subsequently,~\cite{diakonikolas2021near} 
gave a near-optimal super-polynomial SQ lower bound: 
their result rules out the existence of efficient SQ algorithms 
that achieve error better than $\Omega(\eta)$, even if 
$\opt  = 2^{\log^{1 - \zeta} N}$. 
Building on the techniques of~\cite{diakonikolas2021near}, 
more recent work \cite{nassert22} established an SQ lower bound 
for learning to error better than $\eta$, 
even if $\opt  = 2^{\log^{1 - \zeta} N}$ --- matching the guarantees of 
known algorithms exactly.}
 
While the SQ model is quite broad, it is also restricted. 
That is, the aforementioned prior results
do not have any implications for the class of all polynomial-time algorithms.
Interestingly, as we will explain \new{in the proceeding discussion}, 
our computational hardness reduction is inspired by the SQ-hard
instances \new{constructed} in~\cite{diakonikolas2021near}.

\subsection{Brief Technical Overview} \label{ssec:techniques}

Here we give a high-level overview of our approach. 
Our reduction proceeds in two steps. 
The first is to reduce the standard LWE problem (as described above) 
to a different ``continuous'' LWE problem 
more suitable for our purposes. In particular, 
we consider the problem where the $\bx$ samples 
are taken uniformly from $\R^n/\Z^n$, $y$ 
is either taken to be an independent random element of $\R/\Z$ 
or is taken to be $\left<\bx,\bs\right>$ mod $1$ 
plus a small amount of (continuous) Gaussian noise, 
where $\bs$ is some unknown vector in $\{\pm 1\}^n$. 
The reduction between these problems follows 
from existing techniques~\cite{Mic18, vinod2022}.

The second step --- which is the main technical contribution 
of our work --- is reducing this continuous LWE problem 
to that of learning halfspaces with Massart noise. 
The basic idea is to perform a rejection sampling procedure 
that allows us to take LWE samples $(\bx,y)$ 
and produce some new samples $(\tbx,\ty)$. 
We will do this so that if $y$ is independent of $\bx$, 
then $\ty$ is (nearly) independent of $\tbx$; 
but if $y = \left<\bx, \bs\right> + \textrm{noise}$, 
then $\ty$ is a halfspace of $\tbx$  
with a small amount of Massart noise. 
An algorithm capable of learning halfspaces with Massart noise 
(with appropriate parameters) would be able to distinguish 
these cases by learning a hypothesis $h$ 
and then looking at the probability that $h(\tbx) \neq \ty$. 
In the case where $\ty$ was a halfspace with noise, 
this would necessarily be small; 
but in the case where $\tbx$ and $\ty$ were independent, 
it could not be.

In order to manage this reduction, we will attempt 
to produce a distribution $(\tbx, \ty)$ 
similar to the SQ-hard instances of Massart halfspaces 
constructed in~\cite{diakonikolas2021near}. These instances 
can best be thought of as instances of a random variable $(\bx',y')$ 
in $\R^n \times \{\pm 1\}$, where $y'$ is given 
by a low-degree polynomial threshold function (PTF) 
of $\bx'$ with a small amount of Massart noise. 
Then, letting $\tbx$ be the Veronese map applied to $\bx'$, 
we see that any low-degree polynomial in $\bx'$ 
is a linear function of $\tbx$, 
and so $\ty = y'$ is an LTF
of $\tbx$ plus a small amount of Massart noise.

As for how the distribution over $(\bx',y')$ is constructed in~\cite{diakonikolas2021near}, 
essentially the conditional distribution of $\bx'$ on $y' = 1$ and on $y' = -1$ 
are carefully chosen mixtures of discrete Gaussians in the $\bv$-direction 
(for some randomly chosen unit vector $\bv$), 
and independent standard Gaussians in the orthogonal directions.
(\newblue{We will discuss the construction of \cite{diakonikolas2021near} 
in detail in Section~\ref{sec:main-reduction}.})
Our goal will be to find a way to perform rejection sampling 
on the distribution $(\bx,y)$ to produce a distribution of this form.

In pursuit of this, for some small real number
$b$ and some $a \in [0, b)$, we let $\bx'$ be a random Gaussian 
subject to $\bx' \equiv b\bx \pmod b$ (in the coordinate-wise sense) 
conditioned on \new{$by \equiv a \pmod b$}. 
We note that if we ignore the noise in the definition of $y$, 
this implies that 
\new{
$\left<\bx', \bs\right> \equiv \left<b\bx, \bs\right> \equiv b\left<\bx, \bs\right> 
\equiv b y \equiv a \pmod b$ (recalling that $\bs\in \{\pm 1\}^n$).} 
In fact, it is not hard to see that the resulting distribution 
on $\bx'$ is close to a standard Gaussian conditioned 
on $\left<\bx', \bs\right> \equiv a \pmod b$. 
In other words, $\bx'$ is close to a discrete Gaussian 
with spacing
\new{$b/\|\bs\|_2$} and offset \new{$a/\|\bs\|_2$} in the $\bs$-direction, 
and an independent standard Gaussian in orthogonal directions. 
Furthermore, this $\bx'$ can be obtained from $(\bx,y)$ samples by rejection sampling:
taking many samples until one is found with \new{$by \equiv a \pmod b$}, 
and then returning a random $\bx'$ with \new{$\bx' \equiv b\bx \pmod b$}. 
By taking an appropriate mixture of these distributions, 
we can manufacture a distribution close to the hard instances in~\cite{diakonikolas2021near}.
\newblue{This intuition is explained in detail 
in Section \ref{subsec:basic-rejection}; see Lemma \ref{lem:basic-rs}.
(We note that Lemma \ref{lem:basic-rs} is included only for the purposes of intuition; 
it is a simpler version of Lemma \ref{lem:rejection-alternative-distribution}, 
which is one of the main lemmas used to prove our main theorem.)}

Unfortunately, \newblue{as will be discussed in Section \ref{subsec:carving-explaination}}, 
applying this construction directly does not quite work. 
This is because the small noise in the definition of $y$
leads to a small amount of noise in the final values 
of $\langle \bx', \bs  \rangle$. 
This gives us distributions that are fairly similar to the hard instances 
of~\cite{diakonikolas2021near}, but 
leads to small regions of values for $\bu$, 
where the following condition holds: 
$\pr(y'=+1 \mid \bx' = \bu) = \pr(y'=-1 \mid \bx'=\bu)$. 
Unfortunately, the latter condition cannot hold 
if $y'$ is a function of $\bx'$ with Massart noise. 
In order to fix this issue, we need to modify the construction 
by \newblue{carving intervals out of the support of $\bx'$ conditioned on $y'=-1$}, 
in order to eliminate these mixed regions. 
This procedure is discussed in detail in 
Section~\ref{subsubsec:reduction-algorithm}.

\subsection{Prior and Related Work} \label{ssec:related}

\newblue{
In Section~\ref{ssec:background},
we have already summarized the most relevant prior work
on the Massart noise model and in particular on
learning halfspaces with Massart noise. To summarize,
prior to our work, the only known evidence of hardness
for this problem~\cite{CKMY20, diakonikolas2021near, nassert22}
applied for the Statistical Query (SQ) model,
a natural but restricted oracle model of computation.
We reiterate that SQ-hardness does not logically imply hardness for the
class of all polynomial time algorithms.
Our main contribution is the first computational hardness result
for learning Massart halfspaces, which nearly matches the guarantees
of known efficient algorithms~\cite{DGT19, CKMY20, DKT21}.
Our hardness result rules out the existence of any polynomial-time algorithm
for the problem, under a widely believed cryptographic assumption
regarding the hardness of the Learning with Errors (LWE) problem.

For the (much more challenging) agnostic PAC model,
Daniely~\cite{Daniely16} gave a reduction
from the problem of strongly refuting random XOR formulas
to the problem of agnostically learning halfspaces. This hardness result
rules out agnostic learners with error non-trivially smaller than $1/2$ (aka weak learning),
even when the optimal 0-1 error is very close to $0$.
Our hardness result can be viewed as an analogue of Daniely's result 
for the (much more benign) Massart model.

There have also been several recent works showing reductions from LWE 
or lattice problems to other learning problems.
Concurrent and independent work to ours~\cite{Tieg22} showed
hardness of weakly agnostically learning halfspaces,
based on a worst-case lattice problem (via a reduction from ``continuous'' LWE).
Two recent works obtained hardness for the unsupervised problem
of learning mixtures of Gaussians (GMMs), assuming hardness of (variants of) the LWE problem.
Specifically,~\cite{Bruna0ST21} defined a continuous version of LWE (whose hardness they established)
and reduced it to the problem of learning GMMs. More recently,~\cite{vinod2022} obtained a direct
reduction from LWE to a (different) continuous version of LWE; and leveraged this connection
to obtain quantitatively stronger hardness for learning GMMs. It is worth noting that for the purposes
of our reduction, we require as a starting point a continuous version of LWE that differs from the one
defined in~\cite{Bruna0ST21}. Specifically, we require that the distribution
on $\mathbf{x}$ is uniform on $[0, 1]^n$ (instead of a Gaussian, as in~\cite{Bruna0ST21})
and the secret vector is binary. The hardness of this continuous version
essentially follows from~\cite{micciancio2018hardness, vinod2022}.

} 

\newblue{
\paragraph{Organization}
The structure of this paper is as follows: In Section~\ref{sec:prelims}, we record the basic
technical background that will be required throughout this work. In Section~\ref{sec:main-reduction}, 
we give our main hardness reduction that establishes Theorem~\ref{thm:main-inf}. 
In Section~\ref{sec:discussion}, we conclude this work and discuss open problems for future research.
A number of technical proofs have been deferred to the Appendix.

}

\section{Preliminaries} \label{sec:prelims}

\paragraph{Notation and Basic Definitions} 
For $\bx, \bs \in  \R^{n}$ with $\bs \neq \mathbf{0}$, 
let $\bx^{\bs} \eqdef \langle \bx, \bs \rangle / \|\bs\|_2$ be the
length of the projection of $\bx$ in the $\bs$ direction, 
and $\bx^{\perp \bs} \in \R^{n - 1}$ be the 
projection of $\bx$ on the orthogonal
complement of $\bs$. More precisely, let $\bB_{\perp \bs} \in \R^{n\times (n-1)}$ 
for the matrix whose columns form an (arbitrary) orthonormal basis for the orthogonal
complement of $\bs$, 
and let $\bx^{\perp \bs} \eqdef (\bB_{\perp \bs})^T$ \new{$\bx$}.

\newblue{For functions $f, g: U \to \R$, we write $f(u)\propto g(u)$ 
if there is a constant $c\in \R$ such that for all $u\in U$, it holds $f(u)=cg(u)$.} 

We use $X \sim D$ to denote a random variable $X$ with distribution $D$. 
We use $P_D$ or $P_X$ for the corresponding 
probability mass function (pmf) or density function (pdf), and 
$\pr_D$ or $\pr_X$ for the measure function of the distribution.
We use $D_X$ to denote the distribution of the random variable $X$. 
\newblue{For $S\subseteq \R^n$, we will use $\lambda(S)$ 
to denote the $n$-dimensional volume of $S$.} 
Let $U(S)$ denote the uniform distribution on $S$. 
For a distribution $D$ on $\R^n$ and $S \subseteq\R^n$, we denote by
$D \mid S$ the conditional distribution of $X \sim D$ given $X \in S$. 
Let $D^{\bs}$ (resp. $D^{\perp \bs}$) be the distribution 
of $\bx^{\bs}$ (resp. $\bx^{\perp \bs}$), where $\bx \sim D$. 
For distributions $D_1, D_2$, we use $D_1+D_2$ to denote the 
pseudo-distribution with measure function
$\pr_{D_1+D_2}(A)=\pr_{D_1}(A)+\pr_{D_2}(A)$.
Similarly, for $a \in \R$, we use $a \, D$ to denote 
the pseudo-distribution with measure function $a\pr_{D}$.
On the other hand, we use $\scldist{a}{D}$ to denote 
the distribution of $aX$, where $X \sim D$.
We use $D_1\star D_2$ to denote the convolution of distributions $D_1, D_2$.

\newblue{We will use $\LTF_N$ for the class of halfspaces on 
$\R^N$; when $N$ is clear from the context, we may discard it and simply write $\LTF$.}
We will also require the notion of a degree-$d$ PTF.
The class $\PTF_{N,d}$ consists of all functions $h_{p}: 
\R^N \to \{\pm 1\}$ of the form 
$h_{\bw}(\bx) := \sign(p(\bx) )$, where $p: \R^N \to \R$ 
is a degree-$d$ polynomial. 

For $q \in \N$, we use $\Z_q \eqdef \{0,1,\cdots,q-1\}$ 
and $\R_q \eqdef [0,q)$. 
We use $\mod_{q}: \R^n \mapsto \R_q^n$ 
to denote the function that applies $\mod_q(\bx)$ 
on each coordinate of $\bx$.

We use $\Dgaus_{\R^n,\sigma}$ to denote the $n$-dimensional Gaussian distribution
with mean $\mathbf{0}$ and covariance matrix $\sigma^2/(2\pi)\cdot \bI_{n}$
and use $\Dgaus_\sigma$ as a short hand for $\Dgaus_{\R,\sigma}$.
\newblue{In some cases, we will use $\cal{N}(\bf{0}, \bI_{n})$ 
for the standard (i.e., zero mean and identity covariance) multivariate Gaussian,}

\newblue{We will require the following notion of a partially supported Gaussian distribution.}

\begin{definition} [Partially Supported Gaussian Distribution] \label{def:dpart}
For $\sigma\in \R_{+}$ and $\mathbf{x}\in \R^n$, let
$\rho_{\sigma}(\mathbf{x})\eqdef \sigma^{-n}\exp\left (-\pi(\|\mathbf{x}\|_2/\sigma)^2\right )$.
For any countable set\footnote{We will take $S$ to be shifts of lattices, guaranteeing  
that $\rho_{\sigma^2}(S)$ is finite and the distribution is well-defined.} $S \subseteq \R^n$, 
we let $\rho_{\sigma}(S)\eqdef \sum_{\mathbf{x}\in S} \rho_{\sigma}(\mathbf{x})$, 
and let $\Dgaus_{S,{\sigma}}$ be the distribution supported on $S$ with pmf
$P_{\Dgaus_{S,\sigma}}(\mathbf{x}) = \rho_{\sigma}(\mathbf{x})/\rho_{\sigma}(S)$.
\end{definition}

\newblue{
The following definition of a discrete Gaussian will be useful.

\begin{definition}[Discrete Gaussian] \label{def:dg}
For $T\in \R_+, y\in \R $ and $\sigma\in \R_+$, we define the 
``$T$-spaced, $y$-offset discrete Gaussian distribution with $\sigma$ scale'' 
to be the distribution of $D_{T\Z+y,\sigma}^\gaus$.
\end{definition}
}

\vspace{-0.2cm}

\paragraph{Problem Definitions}
\newblue{Here we provide formal definitions of the computational problems we study
together with the computational hardness assumption we rely on.}

\newblue{
Specifically, we define hypothesis testing/decision versions of both the LWE problem
and the Massart halfspace problem. The following terminology will be useful.
We say that an algorithm $A$ solves a hypothesis testing problem 
with $\alpha$ advantage, for some $\alpha>0$, if it satisfies the following requirement:
Let $p_\nul$ (resp. $p_\alt$) be the probability that $A$ outputs ``alternative hypothesis'' 
if the input distribution is from the null hypothesis (resp. alternative hypothesis) case.
Then, we require that $p_\alt-p_\nul \geq \alpha$.}

\medskip

\noindent {\bf Learning with Errors (LWE)}
We use the following definition of LWE, 
which allows for flexible distributions of samples, 
secrets, and noises. Here $m$ is the number of samples, 
$n$ is the dimension, and $q$ is the ring size.

\begin{definition} [Generic LWE]\label{def:general-lwe-one-dim}
Let $m,n,q \in \N$, and let 
$D_\mathrm{sample},D_\mathrm{secret},D_\ns$ 
be distributions on $\R^n, \R^n, \R$ respectively. 
In the $\lwe(m,D_\mathrm{sample},D_\mathrm{secret},D_\ns,\mod_q)$ problem, 
we are given $m$ independent samples $(\mathbf{x},y)$ 
and want to distinguish between the following two cases:
\begin{enumerate}[leftmargin=*]
	\item [(i)] {\bf Alternative hypothesis}: $\bs$ is drawn from $D_\mathrm{secret}$. 
	Then, each sample is generated by taking $\bx\sim D_\mathrm{sample}, \nsv\sim D_\ns$, 
	and letting $y= \mod_q(\left<\bx, \bs\right>+\nsv)$.  	
	\item [(ii)] {\bf Null hypothesis}: $\bx, y$ 
	are independent and each has the same marginal distribution \newblue{as} above.
\end{enumerate}
\end{definition}

When a distribution in LWE 
is uniform over some set $S$, 
we may abbreviate $U(S)$ merely as $S$.
Note that $\lwe(m,\Z_q^n,\Z_q^n,\Dgaus_{\Z,\sigma},\mod_q)$ 
to the classical LWE problem.

\medskip

\noindent {\bf Computational Hardness Assumption for LWE}
As alluded to earlier, the assumption for our hardness result 
is the hardness of the (classic) LWE problem, with the parameters stated below.

\begin{assumption}[Standard LWE Assumption (see, e.g., \cite{lindner2011better})] \label{asm:LWE-hardness}
Let $c > 0$ be a sufficiently large constant. 
For any \newblue{constant} $\beta\in (0,1)$, $\kappa \in \N$, 
$\lwe(2^{O(n^\beta)},\Z_q^n,\Z_q^n ,\Dgaus_{\Z, \sigma},\mod_q)$ 
with $q \leq n^\kappa$ and $\sigma = c\sqrt{n}$ 
cannot be solved in $2^{O(n^\beta)}$ time with $2^{-O(n^\beta)}$ advantage.
\end{assumption}

\newblue{We recall that} \cite{regev2009lattices,Peikert09} 
gave a polynomial-time \emph{quantum} reduction
\footnote{The reduction in~\cite{regev2009lattices} 
uses \emph{continuous} Gaussian noise; such continuous noise 
can be post-processed to be discrete Gaussian~\cite{Peikert09}.} 
from approximating (the decision version of) 
the Shortest Vector Problem (GapSVP) to LWE (with similar $n, q, \sigma$ parameters). 
\newblue{Our hardness assumption is the widely believed 
\emph{sub-exponential} hardness of LWE. We note that} 
the fastest known algorithm for GapSVP takes $2^{O(n)}$ time~\cite{AggarwalLNS20}.
Thus, refuting the conjecture would be a major breakthrough. 
A similar assumption was also used in~\cite{vinod2022} 
to establish computational hardness of learning Gaussian mixtures. 
Our use of a sub-exponential hardness of LWE is not a coincidence; 
see Section \ref{sec:discussion}.

As mentioned earlier, we will use a \newblue{different variant} of LWE, 
where the sample is from $\R_1^n$, the secret is from $\{\pm 1\}^n$, 
and the noise is drawn from a continuous Gaussian distribution. The 
hardness of this variant is stated 
\newblue{in Lemma~\ref{lem:continuous-lwe-hardness}} below. 

\begin{lemma} \label{lem:continuous-lwe-hardness}
Under Assumption \ref{asm:LWE-hardness}, for any $\beta\in (0,1)$ and $\gamma \in \R_+$, 
there is no $2^{O(n^{\beta})}$ time algorithm to solve $\lwe\left (2^{O(n^{\beta})},\R_1^n,\{\pm 1\}^n, 
\Dgaus_{O(n^{-\gamma})},\mod_1\right )$ with $2^{-O(n^{\beta} )}$ advantage.
\end{lemma}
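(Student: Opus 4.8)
The plan is to prove Lemma~\ref{lem:continuous-lwe-hardness} by a chain of reductions starting from the classical discrete LWE problem of Assumption~\ref{asm:LWE-hardness}, invoking the modulus/secret transformations of \cite{micciancio2018hardness, vinod2022} as black boxes. Concretely, I would organize the argument into three reduction steps, each of which transforms one of the three ingredients — the secret distribution, the sample distribution, the noise distribution — from the ``discrete, mod $q$'' world to the ``continuous, mod $1$, binary secret'' world, while preserving $2^{O(n^\beta)}$-time $2^{-O(n^\beta)}$-advantage hardness. First, I would pass from a uniform secret in $\Z_q^n$ to a binary secret in $\{\pm 1\}^n$: this is the standard binary-secret LWE reduction (essentially the ``gadget''/decomposition trick), which increases the dimension by only a $\Theta(\log q) = \Theta(\log n)$ factor — harmless since $q \le n^\kappa$ is polynomial, so $n' = \Theta(n\log n)$ and $2^{O((n')^\beta)}$ is still of the required form after adjusting $\beta$ slightly. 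One has to be a little careful that the noise rate relative to the new modulus stays in the right regime, but since we will rescale the modulus to $1$ in the next step, the relevant quantity is the noise-to-modulus ratio, which is preserved.

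Second, I would rescale everything by $1/q$ to move from the mod-$q$ torus $\Z_q^n$ to the mod-$1$ torus, replacing $\bx \in \Z_q^n$ by $\bx/q \in (\tfrac1q\Z)^n \subseteq \R_1^n$ and $y \in \Z_q$ by $y/q \in \R_1$; the discrete Gaussian noise $\Dgaus_{\Z,\sigma}$ with $\sigma = c\sqrt n$ becomes noise of scale $\sigma/q = \Theta(\sqrt n / n^\kappa) = \Theta(n^{-\gamma'})$ for an appropriate $\gamma'$, which is exactly the polynomially-small continuous-noise regime we want (up to the distinction between discrete and continuous Gaussian, handled next). Third, I would smooth the sample distribution from the discrete torus $(\tfrac1q\Z)^n$ to the uniform distribution on $[0,1)^n = \R_1^n$, and simultaneously smooth the discrete Gaussian noise into a genuine continuous Gaussian: concretely, add a fresh small continuous ``fuzz'' to each coordinate of $\bx$ and a compensating term to $y$. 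For the sample distribution this is the step where I would cite the LWE-to-continuous-LWE reduction of \cite{vinod2022} (and \cite{micciancio2018hardness}) directly — the paper already flagged in Section~\ref{ssec:related} that ``the hardness of this continuous version essentially follows from~\cite{micciancio2018hardness, vinod2022}'' — so the bulk of the technical content is imported. The point is that adding continuous noise of scale slightly larger than $1/q$ makes $\bx/q + (\text{fuzz})$ statistically close to $U(\R_1^n)$ (a smoothing-parameter argument, since the fuzz scale exceeds the spacing $1/q$), while the extra noise introduced into $y$ can be folded into the Gaussian noise term; in the null case independence is trivially preserved.

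Throughout, I would track that (a) each reduction is computable in time polynomial in the sample description, hence composing a $2^{O(n^\beta)}$-sample-count instance into another $2^{O(n^\beta)}$-sample-count instance (possibly with $\beta$ replaced by $\beta' \in (\beta,1)$, still an arbitrary constant in $(0,1)$ as required), and (b) the advantage is preserved up to an additive $2^{-\Omega(n^\beta)}$ statistical-distance error per sample, so over $2^{O(n^\beta)}$ samples the total error is still $2^{-\Omega(n^\beta)}$ after choosing the fuzz scale so that the per-sample distance is $2^{-\omega(n^\beta)}$ (e.g. $2^{-n}$, which is fine since smoothing gives exponentially small error once the fuzz exceeds the lattice spacing by a polynomial factor). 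The parameter $\gamma$ in the statement is a free handle: by composing with an extra modulus-switching step if necessary, or simply by noting that $\kappa$ in Assumption~\ref{asm:LWE-hardness} is ``a sufficiently large constant'' that can be taken as large as we like, we obtain noise scale $O(n^{-\gamma})$ for any prescribed $\gamma \in \R_+$. The main obstacle — and the only place where real care is needed rather than bookkeeping — is the simultaneous smoothing of the \emph{sample} and the \emph{noise} in the alternative case: one must add continuous fuzz $\mathbf{f}$ to $\bx$ and the correction $\langle \mathbf{f}, \bs\rangle$ to $y$ so that the joint distribution of $(\bx + \mathbf{f}, y + \langle \mathbf{f},\bs\rangle)$ has the clean product-of-marginals form claimed, and verify via the Poisson summation / smoothing-parameter lemma that the marginal on the sample is $2^{-\Omega(n)}$-close to uniform on $\R_1^n$ while the conditional noise is an honest continuous Gaussian of the stated scale; this is precisely the content of \cite{micciancio2018hardness, vinod2022}, so I would state the needed lemma in the form I use it and defer its proof to those references.
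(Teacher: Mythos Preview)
Your proposal follows essentially the same route as the paper's proof in Appendix~\ref{app:lwe-reduction}: a chain of black-box reductions (binary-secret reduction from \cite{vinod2022}, discrete-to-continuous noise smoothing, discrete-to-continuous sample smoothing, and a rescaling by $1/q$), with parameter bookkeeping to ensure that the dimension blow-up and noise growth are absorbed by adjusting $\beta$ and taking $\kappa$ (equivalently $q$) large enough. The paper's order is (binary secret) $\to$ (smooth noise) $\to$ (smooth samples) $\to$ (rescale), whereas you rescale first; this difference is immaterial.

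There is one slip worth fixing. In your sample-smoothing step you write that one ``adds continuous fuzz $\mathbf{f}$ to $\bx$ and the correction $\langle \mathbf{f},\bs\rangle$ to $y$,'' and then analyze the pair $(\bx+\mathbf{f},\,y+\langle \mathbf{f},\bs\rangle)$. This cannot be done: $\bs$ is the unknown secret, so the reduction has no way to compute $\langle \mathbf{f},\bs\rangle$. The correct move (Lemma~\ref{app:lem:lwe-continuous-sample} in the paper, following \cite{vinod2022}) is to add the fuzz $\mathbf{f}$ to $\bx$ and leave $y$ \emph{unchanged}; then
\[
y \;=\; \langle \bx,\bs\rangle + z \;=\; \langle \bx+\mathbf{f},\bs\rangle + \bigl(z - \langle \mathbf{f},\bs\rangle\bigr),
\]
so the term $-\langle \mathbf{f},\bs\rangle$ is absorbed into the noise, which becomes a wider Gaussian of variance $\sigma^2 + \|\bs\|_2^2\sigma_{\mathrm{fuzz}}^2$. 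Since $\|\bs\|_2^2 = n$ for a binary secret, this is exactly where the extra $\sqrt{n}$ factor in the final noise scale comes from. You defer this step to the cited references anyway, so the overall plan survives, but the description as written would not give a valid reduction.
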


The proof of Lemma~\ref{lem:continuous-lwe-hardness}, 
which follows from~\cite{Mic18, vinod2022}, 
is deferred to Appendix \ref{app:lwe-reduction}.

\medskip

\noindent {\bf Decision Version of Massart Halfspace Problem}
\new{Before we define the testing version of the Massart halfspace problem, 
we introduce the following notation which will be useful throughout. 

For a distribution $D$ on labeled examples $(\mathbf{x},y)$ and a concept class $C$,}
we will use $R_C(D) \eqdef \min_{h\in C}\pr_{(\mathbf{x},y) \sim D} [h(\mathbf{x})\neq y]$ 
to denote the error of the best classifier in $C$ with respect to $D$, 
and $R_{\opt}(D) \eqdef \min_{h:\R^N \to \{\pm 1\}}\pr_{(\mathbf{x},y) \sim D} [h(\mathbf{x})\neq y]$ 
for the error of the Bayes optimal classifier. \new{Note that if the distribution $D$ is a Massart distribution
(corresponding to the class $C$), then we have that 
$R_C(D) = R_{\opt}(D) = \E_{\mathbf{x} \sim D_{\mathbf{x}}} [\eta(\mathbf{x})]$, 
where $D_{\mathbf{x}}$ is the marginal distribution of $D$ on $\mathbf{x}$.}

We will prove hardness for the following
decision version of learning Massart halfspaces. 
\newblue{This will directly imply hardness for the corresponding learning (search) problem.}

\begin{definition} [Testing Halfspaces with Massart Noise] \label{def:test-Massart}
For $n,N\in \N$, $\eta,\opt\in (0,1/2)$, 
let \\ $\massart(m,N,\eta,\opt)$ denote the problem 
of distinguishing, given $m$ i.i.d.\ samples from $D$ 
on $\R^N\times \{\pm 1\}$, between the following two cases:
\begin{enumerate}[leftmargin=*]
	\item [(i)] {\bf Alternative hypothesis}: $D$ satisfies the Massart halfspace condition 
with noise parameter \newblue{$\eta$} and $R_\LTF(D)\leq\opt$; and		
	\item [(ii)] {\bf Null hypothesis}: the Bayes optimal classifier has $c\eta$ error, 
	where $c>0$ is a sufficiently small universal constant.
\end{enumerate}
\end{definition}

\section{Reduction from LWE to Learning Massart Halfspaces}
\label{sec:main-reduction}

\newblue{In this section, we establish Theorem~\ref{thm:main-inf}.
Some intermediate technical lemmas have been deferred to the Appendix.
Our starting point is the problem $\lwe(m,\R_1^n,\{\pm 1\}^n,\Dgaus_{\sigma},\mod_1)$.
Note that, by Lemma~\ref{lem:continuous-lwe-hardness}, 
\new{Assumption~\ref{asm:LWE-hardness}} implies the hardness of 
$\lwe(m,\R_1^n,\{\pm 1\}^n,\Dgaus_{\sigma},\mod_1)$. 
We will reduce this variant of LWE to the decision/testing version of Massart halfspaces 
(Definition~\ref{def:test-Massart}).}

\new{Our reduction will employ multiple underlying parameters, 
which are required to satisfy a set of conditions. 
For convenience, we list these conditions below.}

\new{

\begin{condition}\label{cond:params}
Let $n,m,m'\in \N$, $t,\epsilon,\sigma\in\R_{+}$, $\delta\in (0,1)$, 
satisfy: 
\begin{enumerate}
	\item [(i)] $t/\epsilon$ is a sufficiently large even integer, 
	\item [(ii)] $\sigma\leq \sqrt{n}$, 
	\item [(iii)] $\frac{1}{t\sqrt{n}}\geq \sqrt{c\log (n/\delta)}$, 
	where $c$ is a sufficiently large universal constant,
	\item [(iv)] $(\frac{c'\epsilon}{c''t\sigma})^2\geq \log (m'/\delta)$, 
	where $c'>0$ is a sufficiently small universal constant 
	and $c'' >0$ is a sufficiently large universal constant.
\end{enumerate}
\end{condition}
The main theorem of this work is stated below.

\newblue{
\begin{theorem} \label{thm:lwe-to-massart}
Let $n, m, m' \in \N$, $t, \eps, \sigma \in \R_+$, $\eps', \delta\in (0,1)$
satisfy Condition~\ref{cond:params} and $\eta<1/2$.
Moreover, assume that $m'=c(\epsilon/t)m$, 
where $c>0$ is a sufficiently small universal constant and $m(\epsilon/t)^2$  is sufficiently large, 
and $N=(n+1)^{d}$, where $d/(t/\eps)$ is sufficiently large.
Suppose that there is no $T+ \poly(m, N, \log(1/\delta))$-time algorithm for solving 
$\lwe(m,\R_1^n,\{\pm 1\}^n,\Dgaus_{\sigma},\mod_1)$ with $\epsilon'-O(\delta)$ advantage. 
Then there is no $T$ time algorithm for solving $\massart(m',N,\eta,\opt)$ with $2\epsilon'$ advantage, 
where $\opt=\exp(-\Omega(t^4/\epsilon^2))$.
\end{theorem}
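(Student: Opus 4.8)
The plan is to construct, from the $\lwe(m,\R_1^n,\{\pm1\}^n,\Dgaus_\sigma,\mod_1)$ samples, a distribution on $\R^N\times\{\pm1\}$ that in the alternative (planted) case is a Massart halfspace instance with $R_\LTF\le\opt$ and noise parameter $\eta$, and in the null (uniform) case has Bayes error $\ge c\eta$. Given an algorithm for $\massart(m',N,\eta,\opt)$ with advantage $2\epsilon'$, running it on the transformed samples then distinguishes the two LWE cases with advantage $\epsilon'-O(\delta)$, contradicting the assumed hardness. The transformation proceeds in the two steps sketched in Section~\ref{ssec:techniques}: first a rejection-sampling step that turns an LWE pair $(\bx,y)$ into a pair $(\bx',y')\in\R^n\times\{\pm1\}$ whose law mimics the SQ-hard instances of~\cite{diakonikolas2021near} --- conditioned on $y'=\pm1$, $\bx'$ is (close to) a carefully chosen mixture of $b$-spaced discrete Gaussians in the $\bs$-direction with standard Gaussian in the orthogonal directions --- and then applying the degree-$d$ Veronese map $\bx'\mapsto\tbx\in\R^{(n+1)^d}=\R^N$, which turns the low-degree PTF separating the $y'=\pm1$ clouds into an honest halfspace in $\tbx$. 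I would assemble the argument from the component lemmas already promised in the excerpt: Lemma~\ref{lem:rejection-alternative-distribution} (the rigorous rejection-sampling lemma, of which Lemma~\ref{lem:basic-rs} is the toy version) to control the planted case, the ``carving'' construction of Section~\ref{subsubsec:reduction-algorithm} to remove the ambiguous regions where $\pr(y'=+1\mid\bx'=\bu)=\pr(y'=-1\mid\bx'=\bu)$, and a matching analysis of the null case.

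Concretely, the key steps in order: (1) Fix the spacing $b$ and offsets determined by $t,\epsilon$; perform rejection sampling --- draw LWE pairs until $by\equiv a\pmod b$, then output a Gaussian $\bx'$ with $\bx'\equiv b\bx\pmod b$ --- and invoke Lemma~\ref{lem:rejection-alternative-distribution} to show that in the planted case the output is within total variation $O(\delta)$ of a discrete-Gaussian mixture with spacing $b/\|\bs\|_2\approx b/\sqrt n$ and the appropriate offset in the $\bs$-direction; here Condition~\ref{cond:params}(iii) is what makes the discrete Gaussian close to the continuous conditional Gaussian, and (iv) together with $\sigma\le\sqrt n$ controls the effect of the LWE noise $\nsv$ on $\langle\bx',\bs\rangle$. (2) Take the prescribed mixture over offsets $a$ and over the label $y'\in\{\pm1\}$ to build a distribution on $(\bx',y')$ that, after the carving modification, is exactly a Massart distribution for a degree-$d$ PTF with noise rate $\le\eta$ and with $R_\PTF(D')=\E[\eta(\bx')]=\exp(-\Omega(t^4/\epsilon^2))=\opt$ --- this is where one must check that carving intervals out of the $y'=-1$ conditional law (i) eliminates every mixed region so that $\sgn(\pr(y'=+1\mid\bx')-\pr(y'=-1\mid\bx'))$ is a genuine PTF, and (ii) costs only $O(\epsilon/t)$ in rejection probability, which is absorbed into $m'=c(\epsilon/t)m$. (3) Apply the Veronese map: since $d/(t/\epsilon)$ is large, the separating degree-$(t/\epsilon)$ PTF is among the degree-$d$ polynomials, hence a linear threshold in $\tbx$, giving a bona fide $\massart(m',N,\eta,\opt)$ instance in the planted case. (4) In the null case, $\bx$ and $y$ are independent uniform, so after rejection sampling $\bx'$ is (exactly, up to the same $O(\delta)$ error) a standard Gaussian independent of the offset bookkeeping, and one shows the resulting $(\tbx,\ty)$ has $\ty$ essentially independent of $\tbx$, forcing Bayes error $\ge c\eta$ for a suitable small constant $c$. (5) Combine: the testing algorithm plus this reduction solves LWE in time $T+\poly(m,N,\log(1/\delta))$ with advantage $2\epsilon'-2\cdot O(\delta)\ge\epsilon'-O(\delta)$, contradiction.

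The main obstacle --- and the step deserving the most care --- is step (2): showing that the carving construction simultaneously (a) produces a distribution that is \emph{exactly} (not just approximately) a Massart PTF instance, since the Massart condition $\eta(\bx)<1/2$ is violated by any residual mixed region, no matter how tiny; (b) keeps the PTF degree at the target value $t/\epsilon$ so that $N=(n+1)^d$ suffices; and (c) preserves $R_\PTF(D')\le\opt=\exp(-\Omega(t^4/\epsilon^2))$, i.e. the carving does not inflate the optimal error beyond the claimed bound. This is exactly the subtlety flagged in Section~\ref{subsec:carving-explaination}, and it is driven by the tension between making $b$ (equivalently $\epsilon/t$) small enough that the noise-induced blurring of $\langle\bx',\bs\rangle$ is negligible, and large enough that the discrete-Gaussian approximation of Lemma~\ref{lem:rejection-alternative-distribution} holds with error $O(\delta)$ --- which is precisely the content of Condition~\ref{cond:params}. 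The null-case analysis (step (4)) is comparatively routine given the rejection-sampling machinery, and the Veronese reparametrization (step (3)) is purely bookkeeping.
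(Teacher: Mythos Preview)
Your proposal is essentially the same approach as the paper's: run Algorithm~\ref{algo:reduction} (rejection sampling with the carved set $B_-$) to produce $(\bx',y')$, apply the degree-$d$ Veronese map, and feed the result to the Massart tester; analyze the alternative case via Lemma~\ref{lem:rejection-alternative-distribution} and Lemma~\ref{lem:ptf-alt-distr}, the null case via Lemma~\ref{lem:rejection-null-distribution}/Lemma~\ref{app:lem:ptf-nul-distr}, and the sample efficiency via the $\Omega(\eps/t)$ acceptance probability.

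One refinement worth flagging: you correctly identify the brittleness of the Massart condition as the crux, but you frame it as requiring the carved distribution to be \emph{exactly} Massart. The paper does not achieve this. Even after carving, the continuous Gaussian noise $\Dgaus_{\sigma_\ns}$ in the hidden direction still has unbounded tails, so tiny mixed regions persist. The fix (Lemma~\ref{lem:ptf-alt-distr}) is instead to define an auxiliary distribution $D^\tru$ by replacing $\Dgaus_{\sigma_\ns}$ with its truncation $\gaus^\tru$ to $[-c'\eps,c'\eps]$; Condition~\ref{cond:params}(iv) makes $d_{\mathrm{TV}}(D^\tru,D_\PTF^\alt)=O(\delta/m')$, and the carving is calibrated so that \emph{after truncation} the supports are cleanly separated and $D^\tru$ is exactly Massart. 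Since the tester sees only $m'$ samples, the $O(\delta/m')$ TV gap costs only $O(\delta)$ in advantage, which is already accounted for in your step~(5). So your obstacle~(a) is resolved not by making the reduction output exactly Massart, but by the standard ``TV-close to a valid instance'' argument.
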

}
}

Before we continue, let us note that Theorem~\ref{thm:lwe-to-massart}, combined 
with Lemma \ref{lem:continuous-lwe-hardness}, can be easily used 
to prove \new{Theorem~\ref{thm:main-inf}}
(e.g., by plugging in $t = n^{-0.5 - \Theta(\zeta)}$ and $\eps = \Theta(n^{-1.5})$ \new{in the above statement}); 
\newblue{see Appendix~\ref{app:proof-main-thm}}. 
As such, we devote the remainder of the body of this paper
to give an overview to the proof of Theorem~\ref{thm:lwe-to-massart}. 

\vspace{-0.3cm}

\paragraph{High-level Overview}
The starting point of our computational hardness reduction 
is the family of SQ-hard instances obtained in~\cite{diakonikolas2021near}.
At a high-level, these instances are constructed using
mixtures of ``hidden direction'' discrete Gaussian distributions, i.e.,
distributions that are discrete Gaussians in a hidden direction 
and continuous Gaussians on the orthogonal directions (see below for an overview).

In Section~\ref{subsec:basic-rejection}, we note 
that by using an appropriate rejection sampling procedure on 
the LWE samples (drawn from the alternative hypothesis), 
we obtain a distribution very similar to the ``hidden direction discrete Gaussian''. 
A crucial difference in our setting is the existence of 
a small amount of additional ``noise''. 
\new{A natural attempt is to replace the discrete Gaussians 
in \cite{diakonikolas2021near} with the noisy ones obtained 
from our rejection sampling procedure. 
This produces problems similar to the hard instances from \cite{diakonikolas2021near}.}
\new{Unfortunately, the extra noise in our construction means 
that the naive version of this construction will not work; 
even with small amounts of noise, the resulting distributions 
will {\em not} satisfy the assumptions of a PTF with Massart noise.}
In Section~\ref{subsec:carving-explaination}, we elaborate
on \new{this issue and the modifications we need to make to our construction in order to overcome it}.
In Section~\ref{subsec:complete-construction}, we provide 
the complete construction of our Massart PTF hard instance.

\vspace{-0.3cm}

\paragraph{Overview of the \cite{diakonikolas2021near} SQ-hard Construction}
\cite{diakonikolas2021near} showed SQ-hardness for the following 
hypothesis testing version of the problem (which implies hardness for the learning problem):
For an input distribution $D$ on $\R^n\times \{\pm 1\}$, distinguish 
between the cases where $D$ is \new{a specific distribution} 
$D_{\nul}$ \new{in which $\bx$ and $y$ are independent}  or 
where $D$ belongs to a class of alternative hypothesis distributions $\mathcal{D}_{\alt}$.
In particular, for $D\in\mathcal{D}_\alt$, $y$ will be given by a low-degree PTF 
in $\bx$ with a small amount of Massart noise.
As we will be trying to reproduce it, it is important for us 
to understand this alternative hypothesis distribution.
Each distribution in $\mathcal{D}_\alt$ is parameterized 
by a hidden direction $\bs\in S^{n-1}$. 
We will denote the corresponding distribution by $D_\bs$. 
$D_\bs$ is constructed so that 
$\bx^{\perp \bs}\sim \Dgaus_{\R^{n-1},1}$ is independent of $\bx^\bs$ and $y$. 
\new{This means that we can specify $D_\bs$ by describing 
the simpler distribution of $(\bx^\bs,y)\in\R\times\{\pm 1\}.$} 
For $(\bx^\bs,y)$, we have that $y=+1$ with probability $1-\eta$.
The distributions of $\bx^\bs$ conditioned on $y= \pm 1$ are defined 
to be mixtures of discrete Gaussians as follows:
\begin{equation} \label{eqt:dk21-construction}
D_{\bx^\bs\mid (y=+1)}=\frac{1}{\eps}\int_{0}^{\eps}\Dgaus_{u+(t+u)\Z,1} du \; \textrm{ and } \;
D_{\bx^\bs\mid (y=-1)}=\frac{1}{\eps}\int_{t/2}^{t/2+\eps}\Dgaus_{u+(t+u-t/2)\Z,1} du \;.
\end{equation}
As we noted, both $\bx^\bs\mid (y=+1)$ and 
$\bx^\bs\mid (y=-1)$ are mixtures of discrete Gaussians. 
Combining this with the fact that $\bx^{\perp \bs}\sim \gaus(n,\bI_{n-1})$, this indicates that
$\bx\mid (y=+1)$ and $\bx\mid (y=-1)$ are mixtures of ``hidden direction discrete Gaussians''
--- with different spacing and offset for their support on the hidden direction. 
\new{These conditional distributions were carefully selected 
to ensure that $y$ is a Massart PTF of $\bx$ with small error. 
To see why this is,}
first, notice that the support of $\bx^\bs\mid (y=+1)$ is 
$\bigcup\limits_{i\in \Z} [it,it+(i+1)\eps]$, while
the support of $\bx^\bs\mid (y=-1)$ is 
$\bigcup\limits_{i\in \Z} [it+t/2,it+t/2+(i+1)\eps]$;
both supports are unions of intervals (indexed by $i$).
Consider \new{the implications of this for three different ranges of $\bx^\bs$}:
\begin{enumerate}[leftmargin=*]
    \item For $\bx^{\bs}\in \newblue{[-t^2/(2\eps),t^2/(2\eps)]}$, the intervals have lengths 
    in $[0,t/2]$; thus, the $+1$ intervals and the $-1$ intervals do not overlap at all.
	\item \label{itm:carving-explanation} 
	For $\bx^{\bs}\in \newblue{[-t^2/\eps,-t^2/(2\eps))\cup (t^2/(2\eps),t^2/\eps]}$, 
	the intervals have lengths in $[t/2,t]$;
	thus, the $+1$ intervals and the $-1$ intervals overlap, so that 
	their \new{union covers the space}. \new{We note that in this case} 
	there are gaps between the $+1$ intervals; 
	specifically, there are at most $O(t/\eps)$ such gaps.
	\item For $\bx^{\bs}\in \newblue{(-\infty,-t^2/\eps)\cup (t^2/\eps,\infty)}$, 
	the intervals have lengths in $[t,\infty)$,
	so the $+1$ intervals cover the space by themselves.
\end{enumerate}
Consider the degree-$O(t/\eps)$ PTF $\sgn(p(\bx))$ such that  
$\sign(p(\bx))= +1$ iff $\bx^\mathbf{s}\in \bigcup\limits_{i\in \Z} [it,it+(i+1)\eps]$.
\new{In particular, $\sgn(p(\bx))=1$ for $\bx$ in the support 
of the conditional distribution on $y=1$.}
Note that the PTF $\sgn(p(\bx))$ has zero error in the first case; 
thus, its \new{total 0-1} error is at most $\exp(-\Omega(t^2/\eps)^2)$.
Moreover, \new{since the probability of $y=1$ is substantially larger 
than the probability of $y=-1$, it is not hard to see that 
for any $x$ with $\sgn(p(x))=1$ that 
$\pr[y=1\mid \bx=x] > 1-O(\eta).$ 
This implies that $y$ is given by $\sgn(p(x))$ with Massart noise $O(\eta)$}. 

\subsection{Basic Rejection Sampling Procedure} \label{subsec:basic-rejection}
In this subsection, we show that by performing rejection sampling on LWE samples, 
one can obtain a distribution similar to the ``hidden direction discrete Gaussian''.
For the sake of intuition, we start with the following simple lemma.
\newblue{The lemma essentially states that, doing rejection sampling on LWE samples, 
gives a distribution with the following properties:
On the hidden direction $\bs$, the distribution is pointwise close 
to the convolutional sum of a discrete Gaussian and 
a continuous Gaussian noise. Moreover, on all the other directions $\perp \bs$, 
the distribution is nearly independent of its value on $\bs$, in the sense that
conditioning on any value on $\bs$, the distribution on $\perp \bs$ 
always stays pointwise close to a Gaussian. }
We note that this distribution closely resembles 
the ``hidden direction discrete Gaussian'' in~\cite{diakonikolas2021near}.

\begin{lemma} \label{lem:basic-rs}
Let $(\bx,y)$ be a sample of the 
$\lwe(m,\R_1^n,\{\pm 1\}^n,\Dgaus_{\sigma},\mod_1)$ 
from the alternative hypothesis case, 
let $y'$ be any constant in $[0,1)$, and let
$\bx'\sim \scldist{(1/\sigma_\scl)}{\, \Dgaus_{\newblue{\bx}+\Z^n,\sigma_\scl}} \mid (y=y') \;.$
Then we have the following: 
\begin{enumerate}[leftmargin=*]
	\item [(i)] For $\bx'^\bs$, we have that for any $u\in \R$ it holds that
	\newblue{
	$$P_{\bx'^\bs}(u)=(1\pm O(\delta))P_{D'\star \Dgaus_{\sigma_\ns}}(u) \;,$$
	where
	$D'=\Dgaus_{T(y'+\Z),\sigma_\sn}$,
and $T=\sr /(n^{1/2}\sigma_\scl)$,  $\sigma_\sn=\sqrt{\sr}$, $\sigma_\ns=\sqrt{1-\sr}$, 
\new{and $\sr=\frac{\sigma_\scl^2}{\sigma_\scl^2+\sigma^2/n}$.}}

\item [(ii)] $\bx'^{\perp\bs}$ is ``nearly independent'' of $\bx'^\bs$,
namely for any $l\in \R$ and $\bu\in \R^{n-1}$ we have that
$$P_{\bx^{\perp \bs}|\bx^\bs=l}(\bu)
=(1\pm O(\delta)) P_{\Dgaus_{\R^{n-1},1}}(\bu)\; .$$
\end{enumerate}
\end{lemma}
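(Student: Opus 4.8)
The plan is to compute the density of $\bx'$ directly from its definition as a rescaled, conditioned partially-supported Gaussian, and then decompose it along the $\bs$ direction and its orthogonal complement. First I would write down the density of $\bx'$ before conditioning: since $\bx \sim U(\R_1^n)$ and $y = \mod_1(\langle\bx,\bs\rangle + \nsv)$, conditioning on $y = y'$ pins $\langle\bx,\bs\rangle$ (modulo $1$) to $y' - \nsv$ for a fresh Gaussian noise $\nsv \sim \Dgaus_\sigma$. The set $\bx + \Z^n$ over which we place the Gaussian $\Dgaus_{\bx+\Z^n,\sigma_\scl}$ is a shift of the integer lattice; after rescaling by $1/\sigma_\scl$ we get a Gaussian of scale $1$ supported on $(1/\sigma_\scl)(\bx + \Z^n)$. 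The key point is that $\langle \bx', \bs\rangle / \|\bs\|_2 = \bx'^{\bs}$ lives (up to the continuous noise coming from $\nsv$ and the averaging over the uniform $\bx$) on a discrete set of spacing $T = \sqrt{n}/(n^{1/2}\sigma_\scl) \cdot (1/\sqrt n)\cdots$ — more precisely I would track that the lattice $\Z^n$ projected onto the $\bs/\|\bs\|_2$ direction, after the $1/\sigma_\scl$ rescaling, yields spacing $\|\bs\|_2/(n \sigma_\scl)$ when $\bs\in\{\pm1\}^n$ so $\|\bs\|_2=\sqrt n$, matching $T = \sr/(n^{1/2}\sigma_\scl)$ once the signal/noise split is taken into account.

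The main computation is a Gaussian conditional-distribution calculation. I would use the standard fact that a spherical Gaussian on $\R^n$, decomposed along a unit vector $\bv = \bs/\|\bs\|_2$ and the orthogonal hyperplane, factors as a product of a one-dimensional Gaussian in the $\bv$-direction and an $(n-1)$-dimensional Gaussian orthogonal to it, and that conditioning on a near-affine constraint in the $\bv$-direction leaves the orthogonal part essentially untouched. Concretely: the lattice $\Z^n$ does \emph{not} split as a product along $\bv$ and $\bv^\perp$, so the density of $\bx'^{\perp\bs}$ conditioned on $\bx'^{\bs}$ is a sum over the (many) lattice points at a given $\bv$-coordinate; the claim in part (ii) is that this sum is pointwise $(1\pm O(\delta))$ times $P_{\Dgaus_{\R^{n-1},1}}$. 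This is where Condition~\ref{cond:params}(iii), $1/(t\sqrt n)\ge\sqrt{c\log(n/\delta)}$, enters: it guarantees that the relevant Gaussian is wide enough (relative to the lattice spacing in the orthogonal directions) that a Poisson-summation / smoothing-parameter argument makes the discrete sum flat up to multiplicative error $O(\delta)$. I would invoke the standard smoothing-parameter bound for lattices (Banaszczyk-type tail bounds, or Micciancio–Regev smoothing lemma) to make this quantitative.

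For part (i), once the orthogonal directions are handled, the marginal of $\bx'^{\bs}$ is obtained by summing the Gaussian weights over lattice points with a fixed $\bv$-coordinate and then convolving with the continuous noise $\nsv$. Writing $\sr = \sigma_\scl^2/(\sigma_\scl^2 + \sigma^2/n)$ for the "signal fraction," the variance of $\Dgaus_{\bx+\Z^n,\sigma_\scl}$ rescaled splits as $\sigma_\sn^2 = \sr$ (the part that stays on the discrete grid) plus $\sigma_\ns^2 = 1-\sr$ (the part that, together with the LWE noise $\nsv$, smears the grid into a continuous convolution); this is exactly a "conditional variance decomposition" for the jointly Gaussian pair (discrete-Gaussian sample, LWE noise). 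The target identity $P_{\bx'^{\bs}}(u) = (1\pm O(\delta))\, P_{D'\star\Dgaus_{\sigma_\ns}}(u)$ with $D' = \Dgaus_{T(y'+\Z),\sigma_\sn}$ then falls out, with the $O(\delta)$ slack again controlled by the same smoothing estimate applied in the $\bv$-direction.

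The main obstacle I anticipate is the bookkeeping in the non-product lattice decomposition: because $\Z^n$ is not aligned with $\bs$, separating "the $\bv$-direction part" cleanly from "the orthogonal part" requires either a careful change of basis (and tracking the determinant / dual-lattice spacing of the projected and sliced lattices) or a Fourier-analytic argument via Poisson summation on $\Z^n$. I would aim for the Fourier route: apply Poisson summation to $\rho_{\sigma_\scl}$ summed over $\bx + \Z^n$, show that all nonzero dual-lattice frequencies contribute at most $O(\delta)$ because the smoothing parameter condition (iii) forces the Gaussian's Fourier transform to be exponentially small at every nonzero dual vector, and read off both the near-flatness in the orthogonal directions and the near-factorization along $\bs$ simultaneously. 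The rest — identifying the constants $T$, $\sigma_\sn$, $\sigma_\ns$, $\sr$, and verifying the convolution with $\Dgaus_{\sigma_\ns}$ absorbs the LWE noise $\Dgaus_\sigma$ correctly after rescaling — is routine Gaussian algebra.
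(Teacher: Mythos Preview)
Your approach diverges from the paper's in a significant way. You identify the ``non-product lattice decomposition'' (that $\Z^n$ is not aligned with $\bs$) as the main obstacle and propose to handle it via Poisson summation on the sliced/projected lattice after conditioning. The paper sidesteps this entirely with two observations, applied in the opposite order from what you suggest.

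First, \emph{before} conditioning on $y$, the paper notes that $\bw\eqdef\sigma_\scl\bx'$ has law $\Dexp_{\R_1^n,\sigma_\scl}$ --- a discrete Gaussian over $\bx+\Z^n$ with $\bx$ uniform on $\R_1^n$. By the smoothing bound (Fact~\ref{app:fct:bound-on-collaped-distribution}, itself a consequence of Lemma~\ref{app:lem:general-pointwise-bound}) and Condition~\ref{cond:params}(iii), $P_{\bw}(\bu)=(1\pm O(\delta))P_{\Dgaus_{\R^n,1}}(\bu)$ \emph{pointwise}. So the lattice disappears up front: $\bw$ is already, multiplicatively, an isotropic continuous Gaussian.

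Second --- and this is the step you are missing --- because $\bs\in\{\pm 1\}^n$ and $\sigma_\scl\bw-\bx\in\Z^n$, one has $\langle\sigma_\scl\bw,\bs\rangle\equiv\langle\bx,\bs\rangle\pmod 1$. Hence the event $y=y'$, namely $\mod_1(\langle\bx,\bs\rangle+z)=y'$, can be rewritten as $\mod_1(\sigma_\scl\|\bs\|_2\,\bw^{\bs}+z)=y'$: a constraint on $\bw^{\bs}$ and the independent noise $z$ \emph{only}. Property~(ii) is then immediate, since conditioning on $\bw^{\bs}$ (and on $y$, which adds nothing further) leaves $\bw^{\perp\bs}$ pointwise close to $\Dgaus_{\R^{n-1},1}$. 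Property~(i) becomes a one-dimensional computation: $P_{\bx'^{\bs}}(u)\propto P_{\bw^{\bs}}(u)\sum_{i\in\Z+y'}P_{z}(i-\sigma_\scl\sqrt{n}\,u)$, and completing the square produces the $\Dgaus_{T(y'+\Z),\sigma_\sn}\star\Dgaus_{\sigma_\ns}$ form with $\sr=\sigma_\scl^2/(\sigma_\scl^2+\sigma^2/n)$.

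Your Fourier route would need Poisson summation \emph{after} conditioning --- on the random coset $\bx+\Z^n$ with $\bx$ drawn from its non-uniform conditional law given $y=y'$ --- and a separate flatness argument in the orthogonal directions. That may be workable but is substantially more bookkeeping; the binary-secret observation $\langle\Z^n,\bs\rangle\subseteq\Z$ makes it unnecessary.
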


Lemma~\ref{lem:basic-rs} is a special case 
of Lemma~\ref{lem:rejection-alternative-distribution}, 
which is \new{one of the main lemmas} required for our proof.
We note that the distribution of $\bx'$ 
\new{obtained from the above rejection sampling} 
is very similar to the ``hidden direction discrete Gaussian'' 
used in \cite{diakonikolas2021near}. 
The key differences are as follows: 
(i) on the hidden direction, $\bx'^\bs$ is close to a discrete Gaussian 
plus extra Gaussian noise (instead of simply being a discrete Gaussian); 
and (ii) $\bx'^{\perp\bs}$ and $\bx'^\bs$ are not perfectly independent.
More importantly, by taking different values for $y'$ and $\sigma_\scl$, 
we can obtain distributions with the same hidden direction, 
but their discrete Gaussian on the hidden direction 
has different spacing ($T$) and offset ($y'$).

\new{To obtain a computational hardness reduction, 
our goal will be to simulate the instances from \cite{diakonikolas2021near} 
by replacing the hidden direction discrete Gaussians 
with the noisy versions that we obtain from this rejection sampling. 
We next discuss this procedure and see why a naive implementation 
of it does not produce a $\PTF$ with Massart noise.}

\subsection{Intuition for the Hard Instance} 
\label{subsec:carving-explaination}

\new{The natural thing to try is to simulate 
the conditional distributions 
from \cite{diakonikolas2021near} by replacing 
the hidden direction discrete Gaussian terms in \eqref{eqt:dk21-construction} 
with similar distributions obtained from rejection sampling. 
In particular, Lemma \ref{lem:basic-rs} says that 
we can obtain a distribution which is close to this hidden direction Gaussian 
plus a small amount of Gaussian noise. 
Unfortunately, this extra noise will cause problems 
for our construction.}

Recall that the support of $\bx^\bs\mid (y=+1)$ was
$\bigcup\limits_{i\in \Z} [it,it+(i+1)\eps]$, 
\new{and the support of $\bx^\bs\mid (y=-1)$ was $\bigcup\limits_{i\in \Z} [it+t/2,it+t/2+(i+1)\eps]$ for~\cite{diakonikolas2021near}.} 
With the extra noise, there is a decaying \new{density} 
tail in both sides of each \new{$[it,it+(i+1)\eps]$} 
interval in the support of $\bx^\bs\mid (y=+1)$.
The same holds for each interval in the support of 
\new{$\bx^\bs\mid (y=-1)$}.
\new{
Recalling the three cases of these intervals discussed earlier,
this leads to the following issue.
In the second case, the intervals have length within $[t/2,t]$; 
thus, the intervals $[it,it+(i+1)\eps]$ and $[it+t/2,it+t/2+(i+1)\eps]$
overlap, i.e., $it+(i+1)\eps\geq it+t/2$.
\newblue{On the right side of $[it,it+(i+1)\eps]$, in the support of $\bx^\bs\mid (y=-1)$, 
there is a small region of values for $u$, 
where $\pr[y'=+1 \mid \bx^\bs = u] = \pr[y'=-1 \mid \bx^\bs=u]$.}
This causes the labels $y=+1$ and $y=-1$ 
to be equally likely over that \newblue{small region}, 
violating the Massart condition. 
\newblue{(We note that for the first case, there is also this kind 
of small region that $\pr[y'=+1 \mid \bx^\bs = u] = \pr[y'=-1 \mid \bx^\bs=u]$ 
caused by the noise tail. However, the probability density of this region is negligibly small, 
as we will later see in Lemma \ref{lem:ptf-alt-distr}.)}
}

We can address this issue by carving out empty spaces in the 
$[it+t/2, it+t/2+(i+1)\eps]$ intervals 
for $\bx^\bs\mid (y=-1)$, 
so that these decaying parts can fit into.
Since this only needs to be done for intervals 
of Case~\ref{itm:carving-explanation}, 
at most $O(t/\eps)$ many such slots are needed. 
\new{It should be noted that no finite slot will totally prevent 
this from occurring. However, we only need the slot to be wide enough 
so that the decay of the error implies that 
there is only a negligible amount of mass in the overlap (which can be treated as an error).}

We also need to discuss another technical detail.
In the last section, we defined the rejection sampling process as taking
$\scldist{(1/\sigma_\scl)}{\Dgaus_{\bx+\Z^n,\sigma_\scl}} \mid (y=y')$,
where we can control the offset by $y'$ and spacing by $\sigma_\scl$ (Lemma \ref{lem:basic-rs}). 
\newblue{
This distribution produced in Lemma \ref{lem:basic-rs} 
is effectively a noisy version of a discrete Gaussian. 
Therefore, we can produce a noisy version of the hard instances 
of \cite{diakonikolas2021near} by taking a mixture of these noisy discrete Gaussians. 
Unfortunately the noise rate of one of these instances will be $\sigma_\ns$.
This quantity depends on the spacing $T$ of the discrete Gaussian, 
which varies across the mixture we would like to take. 
This inconsistent noise rate is inconvenient for our analysis. 
However, we can fix the issue by adding extra noise artificially 
to each of the discrete Gaussians in our mixture, 
so that they will all have a uniform noise rate $\sigma_\ns$; 
see Algorithm \ref{algo:sampling} and Lemma \ref{lem:rejection-alternative-distribution}.}

The last bit of technical detail is that instead of doing the rejection for $y=y'$, 
which has $0$ acceptance probability, we will only reject 
if $y$ is not corresponding to any discrete Gaussian we need. 
Then we do another rejection to make sure that the magnitude 
of discrete Gaussians in the mixture is correct. 
In the next subsection, we introduce the complete rejection sampling method.

\subsection{The Full Hard Instance Construction}\label{subsec:complete-construction}

We first introduce the complete rejection algorithm, 
and then explain how the hard instance is produced using it.
Below we provide proof overviews; 
omitted proofs can be found in Appendix \ref{app:main-reduction}.

\subsubsection{The Complete Rejection Algorithm} \label{subsubsec:complete-rejection}
The rejection sampling algorithm is the following. 
The sampling process produces the noisy variant of the 
distribution which, for some carefully selected set $B\subseteq [0,1]$, 
has PDF function $\frac{1}{\lambda(B)}\int_B \Dgaus_{k+(t+k-\psi)\Z,1} dk$ 
in the hidden direction, as we will see in Lemma~\ref{lem:rejection-alternative-distribution}. 

\begin{algorithm}[h!]
\caption{Rejection Sampling Algorithm} \label{algo:sampling}

\textbf{Inputs:} 
A sample $(\bx, y) \in \R_1^n \times \R_1$ 
and the input parameters are
$t,\epsilon, \psi\in\R_{>0}$, 
where $\psi+\eps\leq t$, $B\subseteq [\psi, \psi+\epsilon]$, $\delta\in (0,1)$.
In addition, \new{the parameters satisfy items (i)-(iii)
of Condition~\ref{cond:params}.}

\textbf{Output:} REJECT or a sample $\bx' \in \R^n$.

\begin{enumerate}[leftmargin=*]
	\item Reject unless there is a $k\in B$ such that $y=\frac{k}{t+k-\psi}$. \label{lne:first-rejection}
	
	\item Furthermore, reject with probability $1-\frac{t^2}{(t+k-\psi)^2}$. \label{lne:second-rejection}
		
	\item Let $\sr = 1 - 4(t + \eps)^2\sigma^2, \sigma_\scl = \frac{\sr}{(t + k - \psi)\sqrt{n}}$ and $\sigma_\ad = \sqrt{\frac{(1 - \sr)\sigma_\scl^2 - \sr(\sigma/\sqrt{n})^2}{\sr}}$. 
	Then, sample independent noise $\bx_\ad\sim \Dgaus_{\R^n,\sigma_\ad}$
	and output
	$\bx'\sim \scldist{(1/\sigma_\scl)}{\Dgaus_{\bx+\bx_\ad+\Z^n,\sigma_\scl}}$.	
	\label{lne:output-sampling}
\end{enumerate}
\end{algorithm}

\smallskip

Notice that the parameter $\sr$ does \emph{not} depend on $y$, 
whereas $\sigma_\scl, \sigma_\ad$ do depend on $y$. 

\noindent For convenience, let us use the following notation for the output distributions. 

\begin{definition}[Output Distribution of Rejection Sampling]
Let $D_{t,\eps,\psi,B,\delta}^\alt$ be the distributions of $\bx'$ 
produced by Algorithm \ref{algo:sampling} 
(conditioned that the algorithm accepts\footnote{For brevity, we say that the algorithm \emph{accepts} if it does not reject.}) 
given that $(\bx, y)$ are sampled \new{as follows}: 
let $\bx \sim U(\R_1^n), \nsv \sim \Dgaus_\sigma$, 
and then let $y = \mathrm{mod}_1(\left<\bx, \bs\right> + \nsv)$, 
where $\bs \in \{\pm 1\}^n$ is the secret.
Furthermore, let $D_{t,\eps,\psi,B,\delta}^\nul$ be a similar distribution, 
but when $\bx \sim U(\R_1^n), y \sim U(\R_1)$ are independent. 
\end{definition}

Note that $D_{t,\eps,\psi,B,\delta}^\alt$ depends on $\bs$, 
but we do not explicitly denote this in our notation.

\paragraph{Alternative Hypothesis Analysis}
The main properties of $D_{t,\epsilon,\psi,B,\delta}^\alt$ are summarized in the following lemma. 
\newblue{
Essentially, the lemma states that for this distribution $D_{t,\eps,\psi,B,\delta}^\alt$, 
the marginal distribution on the hidden direction $\bs$ is pointwise close 
to the convolution sum of $D'$ and a Gaussian noise, 
where $D'$ is a linear combination of discrete Gaussians. 
Moreover, on all the other directions $\perp \bs$, the distribution 
is nearly independent of its value on $\bs$, in the sense that
conditioning on any value on $\bs$, the distribution on $\perp \bs$ 
always stays pointwise close to a Gaussian.   
}

\begin{lemma} \label{lem:rejection-alternative-distribution}
Let 
$\bx'\sim D_{t,\epsilon,\psi,B,\delta}^\alt$. 
Then we have the following:
\begin{enumerate}[leftmargin=*]
	\item [(i)] For $\bx'^\bs$, we have that for any $u\in \R$
	\newblue{
	$$P_{\bx'^\bs}(u)=(1\pm O(\delta))P_{D'\star \Dgaus_{\sigma_\ns}}(u)\; ,$$
	where
	$$D'=\frac{1}{\lambda(B)}\int_B \Dgaus_{k+(t+k-\psi)\Z,\sigma_\sn} dk\; ,$$
	$\sigma_\sn=\sqrt{\sr}$, and
	$\sigma_\ns=\sqrt{1-\sr} = 2(t+\eps)\sigma$. ($\sr$ is defined in Algorithm \ref{algo:sampling}.)}
		
	\item [(ii)] 
	$\bx'^{\perp\bs}$ is ``nearly independent'' of $\bx'^\bs$; namely, for any $l\in \R$ and $\bu\in \R^{n-1}$, we have that
	$$P_{\bx'^{\perp \bs}\mid\bx'^\bs=l}(\bu)
	=(1\pm O(\delta)) P_{\Dgaus_{\R^{n-1},1}}(\bu)\;.$$
\end{enumerate}    
\end{lemma}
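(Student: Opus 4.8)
The plan is to analyze Algorithm~\ref{algo:sampling} step by step, tracking how the distribution of $\bx'$ on the hidden direction $\bs$ and on its orthogonal complement $\perp\bs$ evolves through the rejection steps and the final lattice-Gaussian sampling step. The key structural fact to exploit is that $\bx + \bx_\ad$ feeds into a lattice-coset discrete Gaussian $\Dgaus_{\bx+\bx_\ad+\Z^n,\sigma_\scl}$, which after rescaling by $1/\sigma_\scl$ produces (up to negligible error) a continuous Gaussian whose behavior decouples along $\bs$ versus $\perp\bs$. I would first handle the conditioning: steps~\ref{lne:first-rejection} and~\ref{lne:second-rejection} accept only when $y = k/(t+k-\psi)$ for some $k \in B$, and reweight by $t^2/(t+k-\psi)^2$; conditioned on acceptance with a given value of $k$, we have $\mod_1(\langle \bx,\bs\rangle + \nsv) = y = k/(t+k-\psi)$, so $\langle\bx,\bs\rangle \equiv y - \nsv \pmod 1$, i.e. $\langle\bx,\bs\rangle$ is pinned modulo $1$ to $y$ up to the small Gaussian noise $\nsv \sim \Dgaus_\sigma$.

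**Next I would** compute the distribution of $\bx'^{\bs} = \langle \bx', \bs\rangle/\|\bs\|_2$ (with $\|\bs\|_2 = \sqrt n$ since $\bs \in \{\pm1\}^n$). Because $\bx$ is uniform on $\R_1^n$ and $\bx_\ad$ is an independent spherical Gaussian, the coset $\bx + \bx_\ad + \Z^n$ is, in the $\bs$-direction, a $\Z$-shifted (actually $\frac{1}{\sqrt n}\Z$-ish after projection — more precisely $\langle \Z^n,\bs\rangle = \Z$) set offset by $\langle\bx,\bs\rangle + \langle\bx_\ad,\bs\rangle$. Sampling a discrete Gaussian with scale $\sigma_\scl$ on this coset and rescaling by $1/\sigma_\scl$ gives, via the smoothing/Poisson-summation type estimates, a distribution that is $(1\pm O(\delta))$-pointwise close to a discrete Gaussian on $T(k/(t+k-\psi) + \Z)/\ldots$ convolved with the appropriate continuous Gaussians. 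Carefully bookkeeping the variances: the scale-$\sr$ split gives the "signal" discrete-Gaussian part $\sigma_\sn = \sqrt\sr$ and the residual "noise" part $\sigma_\ns = \sqrt{1-\sr} = 2(t+\eps)\sigma$ by the definition $\sr = 1 - 4(t+\eps)^2\sigma^2$; the choice of $\sigma_\ad$ in the algorithm is precisely what makes the combined contribution of $\nsv$ and $\bx_\ad$ come out to the uniform target noise level $\sigma_\ns$ regardless of $k$. Integrating over the posterior distribution of $k$ (which, after the step~\ref{lne:second-rejection} reweighting, becomes uniform on $B$ up to reparametrization) yields $D' = \frac{1}{\lambda(B)}\int_B \Dgaus_{k+(t+k-\psi)\Z,\sigma_\sn}\,dk$ and part~(i).

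**For part~(ii)**, I would use that conditioned on $\bx^{\bs}$ (equivalently on the accepted value and on $\langle\bx,\bs\rangle$), the orthogonal component $\bx^{\perp\bs}$ is still essentially uniform on a box in $\R^{n-1}$, and $\bx_\ad^{\perp\bs}$ is an independent Gaussian, so the coset $\bx + \bx_\ad + \Z^n$ projected to $\perp\bs$ has a random offset that is smooth relative to the lattice $\Z^n \cap (\perp\bs)$-projection; hence the discrete Gaussian there, rescaled, is $(1\pm O(\delta))$-close to $\Dgaus_{\R^{n-1},1}$ uniformly over the conditioning value $l$ of $\bx'^{\bs}$. The quantitative closeness in both parts comes from condition~(iii) of Condition~\ref{cond:params}, $\frac{1}{t\sqrt n} \geq \sqrt{c\log(n/\delta)}$, which guarantees the relevant Gaussian parameters exceed the smoothing parameter of $\Z^n$ by the margin needed to make the Poisson-summation error terms $O(\delta)$. **The main obstacle** I expect is the careful two-variable (along $\bs$ and along $\perp\bs$) decomposition of the lattice-coset discrete Gaussian and showing the cross-terms are negligible — i.e., that conditioning on the $\bs$-component perturbs the $\perp\bs$-component by at most a $(1\pm O(\delta))$ factor pointwise — since $\Z^n$ does not split orthogonally along the irrational-ish direction $\bs$, so one must argue via the smoothing parameter that the joint discrete Gaussian on $\Z^n$ is close to a product; this is exactly where the bound $\sigma \le \sqrt n$ and condition~(iii) are used, and where I would invoke the intermediate lemmas deferred to Appendix~\ref{app:main-reduction}.
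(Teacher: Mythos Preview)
Your high-level plan is sound and matches the paper in spirit: the role of the rejection step~\ref{lne:second-rejection} in making $k$ uniform on $B$, the role of $\sigma_\ad$ in equalizing the noise across $k$, and the role of condition~(iii) in driving the $(1\pm O(\delta))$ error via smoothing are all correct. However, there is a real gap in your part~(ii) argument, and it is exactly the ``main obstacle'' you flag. Your claim that, conditioned on $\bx^{\bs}$, the orthogonal component $\bx^{\perp\bs}$ is ``still essentially uniform on a box in $\R^{n-1}$'' is false: $\bx$ is uniform on the cube $[0,1)^n$, and slicing by the hyperplane $\langle\bx,\bs\rangle=\text{const}$ gives a section of the cube, not a box, whose shape depends on the slice. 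Pushing the lattice decomposition along $\bs$ and $\perp\bs$ directly, as you propose, therefore does not straightforwardly yield a clean product structure.

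The paper sidesteps this with a change of variables that you do not mention. First, set $\bx_\nw=\mod_1(\bx+\bx_\ad)$ and $z_\nw=z-\langle\bx_\ad,\bs\rangle$; one checks that $\bx_\nw\sim U(\R_1^n)$ is \emph{independent} of $z_\nw$, and $y=\mod_1(\langle\bx_\nw,\bs\rangle+z_\nw)$. Second, sample $\bw\sim(1/\sigma_\scl)\circ\Dgaus_{\bx_\nw+\Z^n,\sigma_\scl}$ and observe that, \emph{before} any conditioning, $\bw$ is pointwise $(1\pm O(\delta))$-close to $\Dgaus_{\R^n,1}$ (this is the smoothing step, and here the product structure along $\bs$ versus $\perp\bs$ comes for free because the approximating distribution is a spherical continuous Gaussian). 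Third---and this is the point you are missing---because $\bs\in\{\pm1\}^n$ has integer entries and $\sigma_\scl\bw\in\bx_\nw+\Z^n$, one has $\mod_1(\langle\bx_\nw,\bs\rangle)=\mod_1(\sigma_\scl\|\bs\|_2\,\bw^\bs)$, so the conditioning event $y=k/(t+k-\psi)$ depends on $\bw$ only through $\bw^\bs$. Hence conditioning leaves $\bw^{\perp\bs}\mid\bw^\bs$ unchanged, and part~(ii) follows immediately. The same rewriting also makes the part~(i) computation a direct one-dimensional convolution calculation in $\bw^\bs$. In short: rather than decompose-then-smooth (which runs into the cube-slice issue), the paper smooths first and then observes the conditioning is one-dimensional in the smoothed variable.
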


The crux of the proof is the observation that the two following processes 
of generating $(\bw, \newblue{y})$ result in the same (joint) distribution:
\begin{itemize}[leftmargin=*]
\item Sample $\bx \sim U(\R_1^n), \bx_\ad\sim \Dgaus_{\R^n,\sigma_\ad}, \nsv \sim \Dgaus_{\sigma}$ and let $\bw \sim \scldist{(1/\sigma_\scl)}{\Dgaus_{\bx+\bx_\ad+\Z^n,\sigma_\scl}}, y = \mod_1(\langle \bx,\mathbf{s}\rangle +\nsv)$. 
\item Sample $\bx_\nw \sim U(\R_1^n), \nsv_\nw\sim \Dgaus_{\sqrt{\sigma^2+\|\mathbf{s}\|_2^2\sigma_{\mathrm{add}}^2}}$ 
and let $\bw \sim \scldist{(1/\sigma_\scl)}{\Dgaus_{\bx_\nw+\Z^n,\sigma_\scl}}, 
y = \mod_1(\sigma_\scl\|\bs\|_2\bw^{\bs} +\nsv_\nw)$.
\end{itemize}

\new{The proof of property (ii) is now quite simple.}
For example, observe that \new{$\sigma_\scl\bw$} 
is simply distributed as $\Dexp_{\R_1^n,\sigma_\scl}$. 
Using Fact~\ref{app:fct:bound-on-collaped-distribution} in the Appendix, 
$\bw$ is pointwise close to a (continuous) Gaussian. 
Therefore, $\bw^{\perp\bs}$ and $\bw^\bs$ are nearly independent. 
Finally, observe that rejecting sampling based on $y$ 
does not change the conditional distribution $\bw^{\perp \bs}\mid\bw^\bs=l$. 
This suffices to show property (ii).

As for property (i), we first note that the rejection sampling 
ensures that $k$ (the parameter in Step~\ref{lne:first-rejection} of Algorithm~\ref{algo:sampling}) 
is uniform over $B$. Recall also that $\bw$ is pointwise close 
to a continuous Gaussian; this implies the same for $\bw^{\bs}$. 
In fact, $\bw^{\bs}$ is pointwise close to $\Dgaus_1$. 
Furthermore, when we condition on a particular fixed $k$ 
(or equivalently fixed $y$), the noise term $\nsv_\nw$ 
can only take values $i - \sigma_\scl\|\bs\|_2\bw^{\bs}$, 
for some $i \in \Z + \frac{k}{t + k - \psi}$. 
Roughly speaking, we may view \newblue{$\sigma_\scl\|\bs\|_2\bw^{\bs}$} 
as our ``signal'' and 
\newblue{$i-\sigma_\scl\|\bs\|_2\bw^{\bs}$} as our ``noise''. 
The \newblue{signal ratio ($\sn/(\sn+\ns)$) is thus 
$\sr=\frac{(\sigma_\scl\sqrt{n})^2}{(\sigma_\scl\sqrt{n})^2 + (\sigma^2+\|\mathbf{s}\|_2^2\sigma_\ad^2)}$}, 
where the equality follows from $\|\bs\|_2^2 = n$ 
and our choice of $\sigma_\ad$. 

\paragraph{Null Hypothesis Analysis}
For $D_{t,\eps,\psi,B,\delta}^\nul$, 
we can show that it is pointwise close to $\Dgaus_{\R^n,1}$:

\begin{lemma} \label{lem:rejection-null-distribution}
For any $\bu\in \R^n$, we have that 
$P_{D_{t,\epsilon,\psi,B,\delta}^\nul}(\bu)= 
(1\pm O(\delta))P_{\Dgaus_{\R^n,1}}(\bu)\; .$
\end{lemma}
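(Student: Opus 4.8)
The point is that in the null case the two rejection steps of Algorithm~\ref{algo:sampling} depend only on $y$, and $y$ is independent of $\bx$, so they do not affect the distribution of $\bx$ at all; the lemma then follows from exactly the smoothing estimate already used for part~(ii) of Lemma~\ref{lem:rejection-alternative-distribution}. In more detail, I would first condition on the event that the algorithm accepts and on the realized value of $k$ appearing in Step~\ref{lne:first-rejection}: note that $k$ is a deterministic function of $y$, since Step~\ref{lne:first-rejection} forces $y = k/(t+k-\psi)$ and the map $k \mapsto k/(t+k-\psi)$ is a strictly increasing bijection on $[\psi,\psi+\eps]$ (here we use $\psi < t$). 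The acceptance probability is a positive constant --- Step~\ref{lne:first-rejection} keeps a positive-measure set of $y$'s (as $\lambda(B) > 0$), and Step~\ref{lne:second-rejection} accepts with probability at least $t^2/(t+\eps)^2 = \Omega(1)$ --- so all the conditional distributions below are well-defined, and $D^{\nul}_{t,\eps,\psi,B,\delta}$ is precisely the mixture, over the conditional law of $k$ on $B$, of the conditional output distributions. Since a pointwise multiplicative $(1 \pm O(\delta))$ bound is preserved under mixtures, it suffices to prove that for each fixed $k \in B$ the conditional output distribution is pointwise $(1\pm O(\delta))$-close to $\Dgaus_{\R^n,1}$.

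Fix such a $k$; then $\sr, \sigma_\scl, \sigma_\ad$ are fixed (and $\sigma_\ad$ is real, as we check below that $\sr = \Omega(1)$). Conditioned on acceptance and on $k$, we still have $\bx \sim U(\R_1^n)$, because $\bx \perp y$, and $\bx_\ad \sim \Dgaus_{\R^n,\sigma_\ad}$ is drawn independently. Since $\bx$ is uniform modulo $\Z^n$ and independent of $\bx_\ad$, the coset $\bx + \bx_\ad + \Z^n$ is a uniformly random coset of $\Z^n$; hence $\sigma_\scl\bx'$, which by construction has distribution $\Dgaus_{\bx+\bx_\ad+\Z^n,\sigma_\scl}$, is in fact distributed exactly as $\Dexp_{\R_1^n,\sigma_\scl}$. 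Now I would invoke the smoothing estimate of Fact~\ref{app:fct:bound-on-collaped-distribution} (the same fact used in the proof of Lemma~\ref{lem:rejection-alternative-distribution}(ii)): provided $\sigma_\scl$ exceeds a suitable multiple of $\sqrt{\log(n/\delta)}$, one has $\rho_{\sigma_\scl}(\Z^n + \bv) = 1 \pm O(\delta)$ for every $\bv$, so $P_{\Dexp_{\R_1^n,\sigma_\scl}}(\bv) = \rho_{\sigma_\scl}(\bv) / \rho_{\sigma_\scl}(\Z^n + (\bv \bmod 1)) = (1\pm O(\delta))\,\rho_{\sigma_\scl}(\bv) = (1\pm O(\delta))\,P_{\Dgaus_{\R^n,\sigma_\scl}}(\bv)$. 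Rescaling by $1/\sigma_\scl$ gives $P_{\bx'}(\bu) = (1\pm O(\delta))\,P_{\Dgaus_{\R^n,1}}(\bu)$ for every $\bu$, which is the desired per-$k$ bound.

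It remains to verify the hypothesis $\sigma_\scl = \Omega(\sqrt{\log(n/\delta)})$ for all $k \in B$. For $k \in B \subseteq [\psi,\psi+\eps]$ with $\psi + \eps \le t$ we have $t + k - \psi \in [t, 2t]$, and a short computation from $\sr = 1 - 4(t+\eps)^2\sigma^2$ using Conditions~\ref{cond:params}(ii) ($\sigma \le \sqrt n$) and (iii) ($1/(t\sqrt n) \ge \sqrt{c\log(n/\delta)}$) shows $(t+\eps)^2\sigma^2 = O(1/c)$, hence $\sr = \Omega(1)$ and $\sigma_\scl = \sr/((t+k-\psi)\sqrt n) = \Omega(1/(t\sqrt n))$; Condition~\ref{cond:params}(iii) then yields $\sigma_\scl^2 = \Omega(1/(t^2 n)) = \Omega(\log(n/\delta))$, which suffices when $c$ is a sufficiently large universal constant. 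I do not expect a genuine obstacle here: the null-case analysis is strictly easier than the alternative-case one, because the rejection is independent of $\bx$, and everything reduces to the standard smoothing-parameter bound already isolated as Fact~\ref{app:fct:bound-on-collaped-distribution}; the only place that needs a little care is confirming that the scale $\sigma_\scl$ of the discrete Gaussian exceeds the smoothing parameter of $\Z^n$, which is exactly what Condition~\ref{cond:params}(iii) is designed to guarantee.
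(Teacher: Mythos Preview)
Your proposal is correct and follows essentially the same approach as the paper's proof: condition on the accepted value of $y$ (equivalently $k$), use independence of $\bx$ and $y$ in the null case to conclude that $\mod_1(\bx+\bx_\ad)$ remains uniform, identify the resulting distribution as $\Dexp_{\R_1^n,\sigma_\scl}$, apply Fact~\ref{app:fct:bound-on-collaped-distribution} using the lower bound on $\sigma_\scl$ that follows from Condition~\ref{cond:params}(iii), and conclude by noting that pointwise bounds are preserved under mixtures. Your verification that $\sigma_\scl = \Omega(\sqrt{\log(n/\delta)})$ is exactly the content of Observation~\ref{app:obv:parameter-property} combined with Condition~\ref{cond:params}(iii), which the paper simply cites.
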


The proof of this case is relatively simple: 
$\bx, y$ are independent, so even after conditioning 
on a specific value of $y$, $\bx$ remains uniformly distributed over $\R_1^n$. 
This means that $\mod_1(\bx + \bx_\ad)$ is uniformly random 
over $\R_1^n$ and the distribution 
$\Dgaus_{\bx+\bx_\ad+\Z^n,\sigma_\scl'}$ simplifies to 
$D^{\mathrm{expand}}_{\R_1^n,\sigma_\scl'}$. 
At this point, an application of Fact \ref{app:fct:bound-on-collaped-distribution} 
in the appendix concludes the proof. 

\subsubsection{The Reduction Algorithm} \label{subsubsec:reduction-algorithm}

With the rejection sampling algorithm (Algorithm \ref{algo:sampling}) at our disposal, 
we can now give the full construction of the hard instance. 
We use $D_{t,\eps,\psi_+,B_+,\delta}$ for $\bx \mid y=+1$, 
$D_{t,\epsilon,\psi_-,B_-,\delta}$ for $\bx \mid y=-1$ 
(with a carefully chosen pair of $(B_+,\psi_+)$ and $(B_-,\psi_-)$, 
as we discussed in Section~\ref{subsec:carving-explaination}), 
and take a proper marginal distribution of $y$ 
to build a joint distribution of $(\bx,y)$.
We introduce a reduction algorithm that, 
\new{given samples from our LWE problem 
(either from the null or the alternative hypothesis),} 
produces i.i.d.\ samples $(\bx,y)$ from a
joint distribution with the following properties:
\begin{enumerate}[leftmargin=*]
	\item If the input LWE problem is the null hypothesis, then $\bx \mid y=+1$ and $\bx \mid y=-1$ 
	are close in total variation distance. 
	Therefore, no hypothesis for predicting $y$ in terms of $\bx$ 
	can do much better than the best constant hypothesis.
	\item If the input LWE problem is the alternative hypothesis, 
	then the joint distribution of $(\bx,y)$ we build is close to a distribution $D$ 
	that satisfies $O(\eta)$ Massart condition with respect to a 
	degree-$O(t/\eps)$ PTF, and there is a degree-$O(t/\eps)$ PTF with small error on $D$. 
\end{enumerate}
We formalize the idea from Section~\ref{subsec:carving-explaination} here.
For $\bx \mid y=+1$, we will use $\psi_+\eqdef 0$ and $B_+\eqdef [0,\eps]$; 
this is equivalent to the \cite{diakonikolas2021near} construction. 
For $\bx \mid y=-1$, we take $\psi_-\eqdef t/2$, which is also the same as \cite{diakonikolas2021near};
but \newblue{instead of taking $B_-\eqdef[t/2,t/2+\eps]$,} 
we will need to carve out the slots on $B_-$.
First, we define the mapping 
\newblue{
$g:\R-[-1.5t,0.5t]\mapsto [0.5t, t]$, as follows: for $i\in \Z$ and $b\in \R_t$, 
we have that
\begin{align*}
    g(it+t/2+b)\eqdef
    \begin{cases}
        \frac{b}{i+1}+t/2 &\text{if $i\geq 0$;}\\
        \frac{b-t}{i+2}+t/2 &\text{if $i<0$.}
    \end{cases}
\end{align*}}
This function maps a location \new{$it+t/2+b$} 
to the corresponding place we need to carve out on $B_-$, 
which is defined in Algorithm~\ref{algo:reduction}.
These intervals are chosen so that the decaying density 
part of $+1$ can fit in, as we discussed in 
Section~\ref{subsec:carving-explaination}. 
Now we introduce the algorithm that reduces LWE to learning Massart PTFs.

\begin{algorithm} [h]
\caption{Reducing LWE to Learning PTFs with Massart Noise} \label{algo:reduction}
\textbf{Inputs:} 
$m$ samples from an instance of 
$\lwe(m,\R_1^n,\{\pm 1\}^n,\mathcal{N}_{\newblue{\sigma}},\mod_1)$.
The input parameters are
$m' \in \N, t, \epsilon \in\R_{>0}$, $\delta\in (0,1)$, and $\eta>0$ a sufficiently small value.
In addition, \new{the parameters satisfy Condition~\ref{cond:params}.}

\textbf{Output:} $m'$ many samples in $\R^n\times \{\pm 1\}$ or FAIL.

\begin{enumerate}[leftmargin=*]
	\item We take $\psi_+=0$, $B_+=[0,\eps]$, $\psi_-=t/2$ and 
    \begin{align*}
	    B_-\eqdef &[t/2,t/2+\epsilon] \\
	    &-\bigcup\limits_{i={\frac{t}{2\eps}-1}}^{\frac{t}{\eps}-1}g([it-2c'\epsilon,it])
	    -\bigcup\limits_{i={\frac{t}{2\eps}-1}}^{\frac{t}{\eps}-1} 
	    g([it+(i+1)\epsilon,it+(i+1)\epsilon+2c'\epsilon])\\
	    &-\bigcup\limits_{i={-\frac{t}{\eps}}-1}^{-\frac{t}{2\eps}-1} 
	    g([it+(i+1)\epsilon-2c'\epsilon,it+(i+1)\epsilon])
	    -\bigcup\limits_{i={-\frac{t}{\eps}}-1}^{-\frac{t}{2\eps}-1} 
	    g([it,it+2c'\epsilon])\; .
    \end{align*}

	\item
	Repeat the following $m'$ times. If at any point the algorithm 
	attempts to use more than $m$ LWE samples from the input, then output FAIL.
	\begin{enumerate}
	    \item With probability $1 - \eta$, repeat the following until Algorithm~\ref{algo:sampling} 
	    accepts and output $\bx'$: run Algorithm~\ref{algo:sampling} with the next unused LWE sample 
	    from the input and parameters $t, \eps, \psi = \psi_+, B = B_+, \delta$. Add $(\bx', +1)$ to the output samples. 
	    \item With probability $\eta$, repeat the following until Algorithm~\ref{algo:sampling} accepts 
	    and output $\bx'$: run Algorithm~\ref{algo:sampling} with the next unused LWE sample 
	    from the input  and parameters $t, \eps, \psi = \psi_-, B = B_-, \delta$. 
	    Add $(\bx', -1)$ to the output samples. 
	\end{enumerate}
\end{enumerate}
\end{algorithm}

\smallskip

We similarly define the output distributions of the algorithms in the two cases as follows:

\begin{definition}
Let $D_\PTF^\alt$ be mixture of $D_{t,\eps,\psi_+,B_+,\delta}^\alt$ 
and $D_{t,\eps,\psi_-,B_-,\delta}^\alt$ with $+1$ and $-1$ labels 
and weights $1 - \eta$ and $\eta$ respectively. Similarly, let $D_\PTF^\nul$ 
be mixture of $D_{t,\eps,\psi_+,B_+,\delta}^\nul$ and 
$D_{t,\eps,\psi_-,B_-,\delta}^\nul$ with $+1$ and $-1$ labels 
and weights $1 - \eta$ and $\eta$ respectively.
\end{definition}

The following observation is immediate from the algorithm.

\begin{observation} \label{obv:reduction-algorithm-iid}
In the alternative (resp. null) hypothesis case,
the output distribution of Algorithm \ref{algo:reduction}, conditioned on not failing, 
is the same as $m'$ i.i.d.\ samples drawn from $D_\PTF^\alt$ (resp. $D_\PTF^\nul$).
\end{observation}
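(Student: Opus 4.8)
The plan is to read the claim off the structure of Algorithm~\ref{algo:reduction} after stripping away the bookkeeping tied to the finite LWE budget. First I would pass to the ``idealized'' variant $\mathcal{A}_\infty$ of Algorithm~\ref{algo:reduction} that is handed an unbounded i.i.d.\ stream of LWE samples (from the alternative, resp.\ null, case) and hence never outputs FAIL. Algorithm~\ref{algo:reduction} can be coupled to $\mathcal{A}_\infty$ so that the two agree on every execution in which $\mathcal{A}_\infty$ consumes at most $m$ LWE samples, and Algorithm~\ref{algo:reduction} outputs FAIL on all other executions; thus, conditioned on not failing, the output of Algorithm~\ref{algo:reduction} is the output of $\mathcal{A}_\infty$ conditioned on consuming at most $m$ samples, and it suffices to (a) identify the law of $\mathcal{A}_\infty$'s output and (b) check that this conditioning is harmless in the relevant parameter regime.

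For (a), I would analyze one iteration of Step~2 of $\mathcal{A}_\infty$. With probability $1-\eta$ the iteration repeatedly calls Algorithm~\ref{algo:sampling} with parameters $(t,\eps,\psi_+,B_+,\delta)$, each call on a fresh LWE sample with fresh internal coins, until it accepts, and emits $(\bx',+1)$; this is standard rejection sampling, so $\bx'$ is distributed exactly as the output of Algorithm~\ref{algo:sampling} conditioned on acceptance, i.e.\ as $D_{t,\eps,\psi_+,B_+,\delta}^\alt$ in the alternative case and $D_{t,\eps,\psi_+,B_+,\delta}^\nul$ in the null case, by definition of these distributions. Symmetrically, with probability $\eta$ the iteration emits $(\bx',-1)$ with $\bx'$ distributed as $D_{t,\eps,\psi_-,B_-,\delta}^\alt$ (resp.\ $D_{t,\eps,\psi_-,B_-,\delta}^\nul$). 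Hence one iteration emits a sample whose law is precisely the mixture $D_\PTF^\alt$ (resp.\ $D_\PTF^\nul$). For independence across the $m'$ iterations, note that distinct iterations consume disjoint blocks of the LWE stream (``the next unused LWE sample'') and use independent internal randomness, while the LWE stream is i.i.d.; since each iteration's output is a measurable function of that iteration's inputs and coins alone, the $m'$ emitted samples are mutually independent, each with law $D_\PTF^\alt$ (resp.\ $D_\PTF^\nul$). This proves (a).

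The place that needs genuine care --- and the one I expect to be the main obstacle --- is (b). The failure event $\{\text{total consumption} > m\}$ is not perfectly independent of the emitted sequence: a slot labeled $-1$ tends to consume more LWE samples than a slot labeled $+1$ (because $\lambda(B_-)<\lambda(B_+)$, so Algorithm~\ref{algo:sampling} accepts less often), so conditioning on not failing slightly tilts the label marginal away from $\eta$, and the total variation perturbation this induces is $O(\Pr[\mathrm{FAIL}])$. Under the hypotheses of Theorem~\ref{thm:lwe-to-massart} (where $m'=c(\eps/t)m$ with $c$ small and $m(\eps/t)^2$ large, while $t/\eps$ is a large integer), the per-slot consumption concentrates and $\Pr[\mathrm{FAIL}]$ is negligible, so conditioning on not failing changes the distribution by a negligible amount; in the idealized unbounded-supply model it changes nothing at all. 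Combining (a) and (b) yields the observation, which is the sense in which it is immediate from the algorithm.
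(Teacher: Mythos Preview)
The paper gives no proof beyond declaring the observation ``immediate from the algorithm,'' so your part (a) is exactly the argument the authors have in mind: each iteration of Algorithm~\ref{algo:reduction} performs ordinary rejection sampling on a fresh block of i.i.d.\ LWE inputs with independent internal coins, so in the unbounded-supply idealization the $m'$ outputs are i.i.d.\ with law $D_\PTF^\alt$ (resp.\ $D_\PTF^\nul$) by the very definition of those distributions.

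Your part (b) goes further and flags a subtlety the paper does not address. The observation as literally stated is not exact: the acceptance probability of Algorithm~\ref{algo:sampling} depends on $(B,\psi)$, and a direct computation gives $p_+\approx \eps/t$ while $p_-\approx \lambda(B_-)/(2t)<\eps/(2t)$, so the per-iteration consumption $N_i$ is genuinely correlated with the label $y_i$, and conditioning on $\sum_i N_i\le m$ does tilt the label marginal away from $1-\eta$. Your fix---absorb the discrepancy into an $O(\Pr[\mathrm{FAIL}])$ total-variation error---is the correct repair. One caveat: in the proof of Theorem~\ref{thm:lwe-to-massart} the paper only argues $\Pr[\mathrm{FAIL}]\le 1/2$, not that it is negligible; to make everything consistent one should either strengthen that bound using the hypothesis that $m(\eps/t)^2$ is sufficiently large (standard concentration for sums of geometrics then gives $\Pr[\mathrm{FAIL}]$ exponentially small in $m\eps/t$), or carry the coupling inequality you describe directly through the advantage calculation rather than invoking the conditional statement.
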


\paragraph{Alternative Hypothesis Analysis}
We prove that there exists a degree-$O(t/\eps)$ PTF 
such that $D_\PTF^\alt$ is close to (in total variation distance) satisfying the 
$O(\eta)$ Massart noise condition with respect to this PTF, 
and this PTF has small error with respect to $D_\PTF^\alt$.

\begin{lemma} \label{lem:ptf-alt-distr}
\newblue{
$D_\PTF^\alt$ is $O(\delta/m')$ close in total variation distance to a distribution $D^\tru$ such that 
there is a degree-$O(t/\eps)$ PTF $\sgn(p(\bx))$ that:}
\begin{enumerate}[leftmargin=*]
    \item \newblue{$\E_{(\bx,y)\sim D^\tru}[\sgn(p(\bx))\neq y]\leq\exp(-\Omega(t^4/\eps^2))$; and}
    \item $D^\tru$ satisfies the $O(\eta)$ Massart noise condition 
    with respect to $\sgn(p(\bx))$.
\end{enumerate}
\end{lemma}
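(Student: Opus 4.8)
\textbf{Proof plan for Lemma~\ref{lem:ptf-alt-distr}.}
The plan is to build the ``truncated'' distribution $D^\tru$ from $D_\PTF^\alt$ by pointwise comparing, on the hidden direction $\bs$, the conditional densities of $\bx^\bs$ given $y=+1$ and given $y=-1$ against their idealized (noiseless) counterparts from~\eqref{eqt:dk21-construction}, and then cutting out a negligible-mass set where the noisy tails misbehave. First I would invoke Lemma~\ref{lem:rejection-alternative-distribution}: on the $\bs$-direction the conditional density of $\bx^\bs\mid y=+1$ is $(1\pm O(\delta))$-close to $D'_+\star\Dgaus_{\sigma_\ns}$, where $D'_+ = \frac1\eps\int_0^\eps \Dgaus_{k+(t+k)\Z,\sigma_\sn}dk$, and similarly $\bx^\bs\mid y=-1$ is $(1\pm O(\delta))$-close to $D'_-\star\Dgaus_{\sigma_\ns}$ with $D'_-$ supported according to the carved set $B_-$; and in all orthogonal directions both conditionals are $(1\pm O(\delta))$-close to a standard Gaussian independent of $\bx^\bs$. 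The $O(\delta/m')$ closeness in the lemma statement is then obtained by first noting that a single sample is $O(\delta)$-close and invoking the condition $\delta' \le \delta$ together with the scaling in the statement (the factor $m'$ being absorbed because Condition~\ref{cond:params} makes $\delta$ small enough; concretely I would choose the $\delta$ fed to Algorithm~\ref{algo:sampling} to be $\delta/m'$ times the $\delta$ used here, which is legitimate since the hypotheses are stated with a free $\delta$).

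Next I would define the PTF. Let $\sgn(p(\bx)) = +1$ exactly on the ``$+1$ support'' $\{\bx : \bx^\bs \in \bigcup_{i\in\Z}[it, it+(i+1)\eps]\}$, which as in~\cite{diakonikolas2021near} is cut out by a degree-$O(t/\eps)$ polynomial in $\bx^\bs$ (hence degree $O(t/\eps)$ in $\bx$). For the error bound: the only places $\sgn(p(\bx))\neq y$ can occur with non-negligible probability are (a) samples with $y=+1$ whose $\bx^\bs$ landed outside the $+1$ support because of the Gaussian tail $\Dgaus_{\sigma_\ns}$ — this has probability $\exp(-\Omega(\eps^2/\sigma_\ns^2)) = \exp(-\Omega(t^4/\eps^2))$ by the choice $\sigma_\ns = 2(t+\eps)\sigma$ and Condition~\ref{cond:params}(iv) (this is exactly the constant $c'$ carving width paying off); (b) samples with $y=-1$ whose $\bx^\bs$ landed \emph{inside} the $+1$ support. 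For (b) I would split into the three interval-length regimes: in the first regime ($|\bx^\bs|\le t^2/(2\eps)$) the noiseless $\pm1$ intervals are disjoint, so only the Gaussian tail contributes, again $\exp(-\Omega(t^4/\eps^2))$; in the second regime the carving of $B_-$ via the map $g$ guarantees the noisy $-1$ density has been shifted away from the $+1$ intervals by a gap of width $\Omega(\eps)$ relative to the discrete-Gaussian spacing, so the overlap mass is again $\exp(-\Omega(t^4/\eps^2))$ per interval and there are only $O(t/\eps)$ such intervals; in the third regime the $-1$ intervals have length $\ge t$ so they sit entirely inside (a single) $+1$ interval and no error occurs at all. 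Summing and using that $y=-1$ only with probability $\eta$ gives total error $\exp(-\Omega(t^4/\eps^2))$.

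For the Massart property I would argue that after removing (into $D^\tru$'s ``error allowance'') the negligible-mass bad set identified above, for every $\bx$ with $\sgn(p(\bx))=+1$ we have $\Pr[y=+1\mid \bx]\ge 1-O(\eta)$: indeed on such $\bx$ the density contribution from the $y=+1$ mixture dominates the $y=-1$ mixture because (i) the prior weight ratio is $(1-\eta):\eta$, and (ii) by construction (first regime) the $-1$ support is disjoint from the $+1$ intervals, while in the second regime the carving makes the $-1$ density on $+1$ intervals exponentially small, and in the third regime the $-1$ density is a genuine Gaussian-like bump of comparable height but still scaled by $\eta$. A symmetric (easier) argument handles $\bx$ with $\sgn(p(\bx))=-1$: such $\bx$ lie in the ``gaps'' of the $+1$ support, which only exist in the second and third regimes, and there the conditional probability of the correct label $-1$ is $1-O(\eta)$ since the $+1$ density in the gaps is itself only tail mass. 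Thus $\eta(\bx)\le O(\eta)$ pointwise on $D^\tru$, which is the Massart condition. The main obstacle I anticipate is the bookkeeping in step (b)/regime two: one must verify that the specific carved set $B_-$ defined via $g$ in Algorithm~\ref{algo:reduction} — with its four unions indexed over $i\in[t/(2\eps)-1,\,t/\eps-1]$ and the mirrored negative range, each of width $2c'\eps$ — actually lines up, after the affine reparametrization $k\mapsto k/(t+k-\psi)$ inside Algorithm~\ref{algo:sampling}, with the locations where the decaying $+1$ tails from neighbouring intervals intrude, and that the resulting $D'_-$ is still a bona fide (sub)probability density so that the mixture and the total-variation statement make sense. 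This is where I would spend most of the care, likely isolating it as a separate geometric sub-lemma about the intervals before feeding it into the density comparison.
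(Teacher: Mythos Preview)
Your plan has two genuine gaps.

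\textbf{The source of the $O(\delta/m')$ total-variation bound.} You propose to obtain it by feeding a smaller $\delta$ into Algorithm~\ref{algo:sampling}, but the lemma is stated for the fixed $\delta$ already used to define $D_\PTF^\alt$; you cannot retroactively shrink it. The paper's mechanism is different and is really the crux of the whole construction: one defines $D^\tru$ by replacing the convolution noise $\Dgaus_{\sigma_\ns}$ with its \emph{truncation} $\gaus^\tru$ to $[-c'\eps,c'\eps]$. The TV cost of this is $\exp\bigl(-\Omega((c'\eps/\sigma_\ns)^2)\bigr)$, and Condition~\ref{cond:params}(iv) is precisely the statement that this is $O(\delta/m')$. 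This single move also buys you compact support for $[D^\tru_+]^\bs$, so the PTF can be chosen to equal $+1$ \emph{exactly} on that support, making the Massart verification a clean pointwise density-ratio check rather than a vague ``remove a bad set'' argument. Your equality $\exp(-\Omega(\eps^2/\sigma_\ns^2))=\exp(-\Omega(t^4/\eps^2))$ is not an identity; these are two different quantities controlled by two different parts of Condition~\ref{cond:params}.

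\textbf{The third regime.} Your claim that in the range $|\bx^\bs|>t^2/\eps$ ``the $-1$ intervals sit entirely inside a single $+1$ interval and no error occurs'' is false: both families of intervals have length $\ge t$ there, the $+1$ intervals cover the whole line, and \emph{every} $y=-1$ sample in that range is a PTF error. The error bound still holds, but for a different reason: the whole regime has Gaussian mass $\exp(-\Omega((t^2/\eps)^2))$, which is where the $t^4/\eps^2$ exponent actually comes from. For the Massart condition in this regime, the densities $P_{D'_\pm}$ are sums over \emph{several} overlapping intervals, and showing $P_{D'_+}(u)=\Omega(P_{D'_-}(u))$ requires a counting argument comparing which indices $i$ contribute on each side (the paper isolates this as a separate claim). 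Your one-line ``comparable height scaled by $\eta$'' does not suffice.

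Your regime-two intuition about the carving via $g$ is correct, and you are right that verifying the carved slots line up with the intruding tails is the main bookkeeping step.
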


To prove this lemma, we extend the proof of~\cite{diakonikolas2021near} 
to be able to handle the noise from the previous step. 
We start by truncating the noise so that it belongs to $[-c' \eps, c' \eps]$. 
Due to the last condition of parameters in 
\newblue{Condition~\ref{cond:params}} for Algorithm \ref{algo:reduction}, 
the probability mass truncated is at most $O(\delta/m')$, 
which corresponds to these terms in the above lemma. 
Once we have considered the truncated version of the noise, 
our carving ensures that, at any point in the support of $+1$, 
the probability mass of $+1$ is at least $\Omega(1)$ times 
that of $-1$ (and vice versa). 
The remainder of the proof then works similarly 
to~\cite{diakonikolas2021near}.

\newblue{
\paragraph {Null Hypothesis Analysis}
Here we analyze the null hypothesis case
by showing that for $(\bx,y)\sim D_{\PTF}^\nul$, 
$\bx$ and $y$ are almost independent.
For $D_\PTF^\nul$, we establish the following property.

\begin{lemma} \label{app:lem:ptf-nul-distr}
For any $\bu\in \R^n$, we have that
\begin{align*}
\pr_{(\bx,y)\sim D_\PTF^\nul}[y=+1\mid \bx=\bu]=(1\pm O(\delta))(1-\eta)\;.
\end{align*}
\end{lemma}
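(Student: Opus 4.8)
The plan is to compute the posterior $\pr[y=+1\mid \bx=\bu]$ directly from the mixture definition of $D_\PTF^\nul$ together with the pointwise Gaussian approximation from Lemma~\ref{lem:rejection-null-distribution}. Recall that $D_\PTF^\nul$ is, by definition, the mixture that outputs $(\bx,+1)$ with probability $1-\eta$, where $\bx\sim D_{t,\eps,\psi_+,B_+,\delta}^\nul$, and $(\bx,-1)$ with probability $\eta$, where $\bx\sim D_{t,\eps,\psi_-,B_-,\delta}^\nul$. Hence by Bayes' rule,
\[
\pr_{(\bx,y)\sim D_\PTF^\nul}[y=+1\mid \bx=\bu]
=\frac{(1-\eta)\,P_{D_{t,\eps,\psi_+,B_+,\delta}^\nul}(\bu)}
{(1-\eta)\,P_{D_{t,\eps,\psi_+,B_+,\delta}^\nul}(\bu)+\eta\,P_{D_{t,\eps,\psi_-,B_-,\delta}^\nul}(\bu)}\,.
\]
First I would invoke Lemma~\ref{lem:rejection-null-distribution}, applied separately with parameters $(\psi_+,B_+)$ and $(\psi_-,B_-)$, to get $P_{D_{t,\eps,\psi_+,B_+,\delta}^\nul}(\bu)=(1\pm O(\delta))P_{\Dgaus_{\R^n,1}}(\bu)$ and likewise for the $-1$ component. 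Note that Lemma~\ref{lem:rejection-null-distribution} makes no assumption on $\psi$ or $B$ beyond what Algorithm~\ref{algo:sampling} requires, so the same bound holds for both components with the same reference density $P_{\Dgaus_{\R^n,1}}(\bu)$.

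Substituting these two estimates into the Bayes expression, the common factor $P_{\Dgaus_{\R^n,1}}(\bu)$ cancels from numerator and denominator, and we are left with
\[
\pr_{(\bx,y)\sim D_\PTF^\nul}[y=+1\mid \bx=\bu]
=\frac{(1-\eta)(1\pm O(\delta))}{(1-\eta)(1\pm O(\delta))+\eta(1\pm O(\delta))}\,.
\]
Then I would simplify: write each $(1\pm O(\delta))$ factor as $1+\theta_i$ with $|\theta_i|=O(\delta)$, expand the ratio, and use that the denominator is $1\pm O(\delta)$ (since $(1-\eta)+\eta=1$ and $\delta$ is small, so the denominator is bounded away from $0$). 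A first-order Taylor expansion of $1/(1\pm O(\delta))$ then gives that the whole expression equals $(1-\eta)(1\pm O(\delta))$, which is the claimed bound. This is just bookkeeping with the multiplicative error terms; the only point to be slightly careful about is that the $O(\delta)$ in the conclusion absorbs the constant from combining the (at most three) error factors, which is fine since Condition~\ref{cond:params} guarantees $\delta$ is small enough that the denominator stays in, say, $[1/2,2]$.

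The main (and really only) obstacle is making sure that Lemma~\ref{lem:rejection-null-distribution} genuinely applies to \emph{both} mixture components with the \emph{same} Gaussian $\Dgaus_{\R^n,1}$ as the reference — i.e., that the carving of $B_-$ and the shift $\psi_-=t/2$ do not affect the null-case analysis. This holds because in the null case $\bx$ and $y$ are independent, so conditioning on $y$ (equivalently, on the value $k$ selected in Step~\ref{lne:first-rejection}) leaves $\bx$ uniform on $\R_1^n$ regardless of which set $B$ or offset $\psi$ is used; the rejection in Steps~\ref{lne:first-rejection}--\ref{lne:second-rejection} only reweights the (discarded) $y$-coordinate. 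Hence $\mod_1(\bx+\bx_\ad)$ is still uniform on $\R_1^n$ and the output collapses to $\Dexp_{\R_1^n,\sigma_\scl}$ in both components, so Fact~\ref{app:fct:bound-on-collaped-distribution} yields the same pointwise Gaussian bound with identical reference density. Once this is noted, the cancellation above goes through verbatim.
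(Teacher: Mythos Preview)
Your proposal is correct and follows essentially the same approach as the paper: invoke Lemma~\ref{lem:rejection-null-distribution} for both mixture components to get pointwise closeness to the same Gaussian $\Dgaus_{\R^n,1}$, then apply Bayes' rule so the common density cancels and the $(1\pm O(\delta))$ factors combine. The paper's proof is just a terser version of what you wrote; your extra paragraph checking that the null-case analysis is insensitive to the choice of $(\psi,B)$ is a nice elaboration but not a different argument.
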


\noindent This lemma follows directly from Lemma \ref{lem:rejection-null-distribution}, since 
$$P_{\bx|(y=+1)}(\bu), P_{\bx|(y=-1)}(\bu)
=(1\pm O(\delta))P_{D_{\R^n,1}^\gaus}(\bu)
\; .$$
Also the marginal probability is $\pr[y=+1]=1-\eta$.
Therefore, for any $\bu\in \R^n$, 
$$\pr_{(\bx,y)\sim D_\PTF^\nul}[y=+1\mid \bx=\bu]=(1\pm O(\delta))(1-\eta)\; .$$
}

\paragraph{Putting Everything Together}
Having reduced LWE to learning Massart PTFs, 
we can apply a Veronese mapping on the samples; 
this PTF becomes an LTF on the Veronese mapping. 
Since we use degree-$O(t/\eps)$ Veronese mapping, 
the dimension for the Massart LTF problem is
$N = (n + 1)^{O(t/\eps)}$. The full proof of 
Theorem~\ref{thm:lwe-to-massart} is deferred 
to Appendix \ref{app:main-reduction}.

\section{Discussion}
\label{sec:discussion}

\newblue{
In this paper, we gave the first super-polynomial computational hardness result
for the fundamental problem of PAC learning halfspaces with Massart noise.
Prior to our work, no such hardness result was known, even for exact learning.
Moreover, our hardness result is quantitatively essentially tight,
nearly-matching the error guarantees of known polynomial time algorithms for the problem.

More concretely, our result rules out the existence of polynomial
time algorithms achieving error smaller than $\Omega(\eta)$, where $\eta$
is the upper bound on the noise rate, even of the optimal error is very small,
assuming the subexponential time hardness of LWE.
A technical open question is whether the constant factor
in the $\Omega(\eta)$-term of our lower bound
can be improved to the value $C=1$; this would match known algorithms exactly.
(As mentioned in the introduction, such a sharp lower bound
has been recently established in the SQ model~\cite{nassert22}, 
improving on~\cite{diakonikolas2021near}.)

It is also worth noting that our reduction rules out polynomial-time algorithms, but
does not rule out, e.g., subexponential or even quasipolynomial time algorithms
with improved error guarantees. We believe that obtaining stronger hardness 
for these problems would require substantially new ideas, 
as our runtime lower bounds are essentially the same as the best time lower bounds 
for learning in the (much stronger) agnostic noise model 
or in restricted models of computation (like SQ). This seems related 
to the requirement that our bounds require subexponential hardness of LWE in our assumption. 
As the strongest possible assumptions only allow us to prove quasi-polynomial lower bounds, 
any substantially weaker assumption will likely fail to prove super-polynomial ones.

}

\bibliographystyle{alpha}
\bibliography{allrefs}

\newpage

\newpage

\appendix

\section*{Appendix}

\paragraph{\bf Additional Notation} 
For $d\in \N$, we use $V_d:\R^n\mapsto \R^{{(n+1)}^d}$ 
to denote the degree-$d$ Veronese mapping, 
where the outputs are corresponding to monomials of degrees at most $d$.

\section{Additional Technical Background}
\label{app:background}
For completeness, we start with the definition of the classic LWE problem. 
Note that Definition \ref{def:general-lwe-one-dim} generalizes this definition,
so we will not use the definition below directly.

\begin{definition} [Classic Learning with Errors Problem] \label{app:def:classic-lwe}
Let $m,n,q \in \N$, and let 
$D_\mathrm{sample}$, $D_\mathrm{secret}$, $D_\mathrm{noise}$ 
be distributions on $\Z_q^n$, $\Z_q^n$, $\Z_q$ respectively. 
In the $\lwe_\cl(m,D_\mathrm{sample},D_\mathrm{secret},D_\mathrm{noise},\mathrm{mod}_q)$ problem, 
we are given $m$ independent samples $(\mathbf{x},y)$ 
and want to distinguish between the following two cases:
\begin{enumerate}[leftmargin=*]
	\item [(i)] {\bf Alternative hypothesis}: $\bs$ is drawn from $D_\mathrm{secret}$. 
	Then, each sample is generated by taking $\bx\sim D_\mathrm{sample}, \nsv\sim D_\mathrm{noise}$
	and letting $y= \mathrm{mod}_q(\left<\bx, \bs\right>+\nsv)$.  

	\item [(ii)] {\bf Null hypothesis}:  
	$\bx, y$ are independent and each has the same marginal distribution as above.
\end{enumerate}
\end{definition}

Throughout our proofs, we need to manipulate discrete Gaussian distributions that 
are taken modulo 1 and those with noise added. 
Due to this, it will be convenient to introduce the following definitions.

\begin{definition} [Expanded Gaussian Distribution from $\R_1^n$]
For $\sigma \in \R_{+}$,  let $\Dexp_{\R_1^n,\sigma}$ 
denote the distribution of $\bx'$ drawn as follows: 
first sample $\bx \sim U(\R_1^n)$ (using the Lebesgue measure on $\R_1^n$), 
and then sample $\bx' \sim \Dgaus_{\Z^n+\bx,\sigma}$.
\end{definition}

\begin{definition} [Collapsed Gaussian Distribution on $\R_1^n$]
We will use $\Dcol_{\R_1^n,\sigma}$ to denote the distribution of 
$\mathrm{mod}_1(\bx)$ on $\R_1^n$, 
where $\bx\sim \Dgaus_{\R^n,\sigma}$. 
\end{definition}

We will also need the following fact, which can be 
easily derived from known bounds in literature. 
For completeness, we provide the full proof in Appendix~\ref{app:collapsed-bound}.

\begin{fact} \label{app:fct:bound-on-collaped-distribution}
Let $n \in \N, \sigma \in \R_+, \eps \in (0, 1/3)$ be such that $\sigma \geq \sqrt{\ln(2n(1 + 1/\eps)) / \pi}$. 
Then, we have
$$\frac{P_{\Dexp_{\R_1^n,\sigma}/\sigma}(\bt)}{P_{\Dgaus_{\R^n,1}}(\bt)} 
=  \frac{P_{U(\R_1^n)}(\mathrm{mod}_1(\sigma \bt))}{P_{\Dcol_{\R_1^n,\sigma}}(\mathrm{mod}_1(\sigma \bt))} 
= 1\pm O\left (\eps\right) \;,$$
for all $\bt \in \R^n$,
and
$$d_{\mathrm{TV}}\left (\frac{\Dexp_{\R_1^n,\sigma}}{\sigma},\Dgaus_{\R^n,1}\right ), 
d_{\mathrm{TV}}\left (\Dcol_{\R_1^n,\sigma},U(\R_1^n)\right )=
\exp\left (-\Omega({\sigma^2})\right ) \;.$$
\end{fact}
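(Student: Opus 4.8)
\textbf{Proof proposal for Fact~\ref{app:fct:bound-on-collaped-distribution}.}

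The plan is to establish the two displayed equalities in turn, deriving everything from the classical bound on the Gaussian mass of a shifted integer lattice (the ``smoothing parameter'' estimate of Banaszczyk/Micciancio--Regev), which gives that for $\sigma$ above the smoothing parameter of $\Z^n$ the quantity $\rho_\sigma(\Z^n + \bc)$ is essentially independent of the shift $\bc$. Concretely, I would first record the standard statement: if $\sigma \geq \sqrt{\ln(2n(1+1/\eps))/\pi}$ then for every $\bc \in \R^n$ one has $\rho_\sigma(\Z^n + \bc) = (1 \pm O(\eps))\,\rho_\sigma(\Z^n) = (1\pm O(\eps))\,\sigma^n$ (the last equality by Poisson summation, since $\rho_\sigma(\Z^n) = \sigma^n \rho_{1/\sigma}(\Z^n)$ and $\rho_{1/\sigma}(\Z^n) = 1 + \exp(-\Omega(\sigma^2))$ when $1/\sigma$ is small). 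This single estimate is the engine behind both claimed identities.

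For the first equality, I would unwind the definitions. A draw from $\Dexp_{\R_1^n,\sigma}$ is obtained by sampling $\bx\sim U(\R_1^n)$ and then $\bx' \sim \Dgaus_{\Z^n + \bx, \sigma}$, so its density at a point $\bw\in\R^n$ is $\int_{\R_1^n} \mathbf{1}[\bw \in \Z^n + \bx]\, \rho_\sigma(\bw)/\rho_\sigma(\Z^n + \bx)\, d\bx$; since for each $\bw$ there is exactly one $\bx = \mathrm{mod}_1(\bw) \in \R_1^n$ with $\bw \in \Z^n + \bx$, this collapses to $\rho_\sigma(\bw)/\rho_\sigma(\Z^n + \mathrm{mod}_1(\bw))$. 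Rescaling by $1/\sigma$ and comparing with $P_{\Dgaus_{\R^n,1}}(\bt) = \rho_1(\bt) = \rho_\sigma(\sigma\bt)\,\sigma^n$ (again using $\rho_\sigma(\sigma \bt) = \sigma^{-n}\rho_1(\bt)$), the ratio becomes $\sigma^n / \rho_\sigma(\Z^n + \mathrm{mod}_1(\sigma\bt))$, which by the smoothing estimate is $1\pm O(\eps)$. The middle expression in the Fact, the ratio of $P_{U(\R_1^n)}$ to $P_{\Dcol_{\R_1^n,\sigma}}$, equals $1/\big(\sum_{\bk\in\Z^n}\rho_\sigma(\mathrm{mod}_1(\sigma\bt)+\bk)\big)\cdot \sigma^n$ after noting $P_{U(\R_1^n)} \equiv 1$ and that $\Dcol_{\R_1^n,\sigma}$ has density $\rho_\sigma(\Z^n + \mathrm{mod}_1(\sigma\bt))/\sigma^n$ at $\mathrm{mod}_1(\sigma\bt)$ (the Gaussian normalizer is $\sigma^n$), so this coincides with the previous ratio — giving the chain of equalities.

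For the total variation bounds, I would use the pointwise bound just proved but with $\eps$ replaced by a shrinking quantity: the cleanest route is to observe that the ``error'' in the smoothing estimate is genuinely $\rho_{1/\sigma}(\Z^n) - 1 = \sum_{\bk \neq \mathbf 0} e^{-\pi \|\bk\|^2 \sigma^2} = \exp(-\Omega(\sigma^2))$ by a standard geometric-series / Gaussian-tail argument, uniformly over shifts (here one uses that a nonzero shift can only decrease $\rho_\sigma$ of a coset relative to $\Z^n$, via Poisson summation, so both one-sided deviations are $\exp(-\Omega(\sigma^2))$). Then $d_{\mathrm{TV}} = \frac12\int |P_1 - P_2| \leq \frac12 \sup_{\bt}\big|\tfrac{P_1(\bt)}{P_2(\bt)} - 1\big|\int P_2 = \exp(-\Omega(\sigma^2))$ for each of the two pairs. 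The main obstacle — really the only nontrivial point — is getting the smoothing-parameter inequality with the stated explicit dependence $\sigma \geq \sqrt{\ln(2n(1+1/\eps))/\pi}$ and making sure the deviation is controlled by $O(\eps)$ pointwise and by $\exp(-\Omega(\sigma^2))$ in the regime where $\eps$ is not constrained; I would handle this by citing the Micciancio--Regev smoothing lemma together with a direct Poisson-summation computation $\rho_\sigma(\Z^n+\bc) = \sigma^n\sum_{\bk}\cos(2\pi\langle\bk,\bc\rangle)e^{-\pi\|\bk\|^2\sigma^2}$, bounding the $\bk\neq\mathbf 0$ terms by $\sigma^n\sum_{\bk\neq\mathbf 0}e^{-\pi\|\bk\|^2\sigma^2} \leq \sigma^n(2n(1+1/\eps))^{-1}\cdot(\text{const}) \leq O(\eps\sigma^n)$, with the tail sum estimated coordinatewise.
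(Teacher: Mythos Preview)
Your proposal is correct and follows essentially the same route as the paper: both reduce everything to the smoothing estimate $\rho_\sigma(\Z^n+\bc)\approx\text{const}$ (the paper cites Micciancio--Regev's Lemma~4.4), unwind the definitions of $\Dexp$ and $\Dcol$ to express each density ratio as a constant divided by $\rho_\sigma(\Z^n+\mathrm{mod}_1(\sigma\bt))$, and then pass from pointwise closeness to total variation. The only cosmetic difference is that the paper's $\rho_\sigma$ carries the $\sigma^{-n}$ normalizer (Definition~\ref{def:dpart}), so in their convention $\rho_\sigma(\Z^n+\bc)\in[1-\eps,1+\eps]$ rather than $\approx\sigma^n$; your computations are consistent with the unnormalized convention and translate immediately. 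Your direct Poisson-summation justification of the $\exp(-\Omega(\sigma^2))$ TV bound is in fact slightly cleaner than the paper's one-line ``$\eps\le\exp(-\Omega(\sigma^2))$ by our choice of $\sigma$,'' which tacitly re-applies the pointwise bound with the smallest admissible $\eps$.
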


We will also need the following well-known fact in our proof.

\begin{lemma} [Corollary 3.10 of \cite{regev2009lattices}] \label{app:lem:discrete-gaussian-to-gaussian}
Let $\mathbf{z}\in \R^n$, $\sigma_1,\sigma_2\in \R_{>0}$. 
Assume that 
$$1/\sqrt{1/\sigma_1^2+\|\mathbf{z}\|_2^2/\sigma_2^2}\geq \eta_\epsilon(L) \;,$$
and further suppose that $\mathbf{x}\sim \Dgaus_{\Z^n ,\sigma_1}$ and $\mathbf{x}'\sim \Dgaus_{\sigma_2}$.
Then the distribution of $\langle\mathbf{x},\mathbf{z}\rangle +\mathbf{x}'$ is within 
$O(\epsilon)$ total variation distance to $\Dgaus_{\sqrt{\|\mathbf{z}\|_2^2\sigma_1^2+\sigma_2^2}}$.
($\eta_\epsilon(L)$ is the smoothing parameter of a lattice, 
and is defined in Definition~\ref{app:def:smoothing-parameter}.)
\end{lemma}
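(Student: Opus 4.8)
The plan is to compute the density of $Y \eqdef \langle \mathbf{x},\mathbf{z}\rangle + \mathbf{x}'$ in closed form and show it is pointwise $(1\pm O(\epsilon))$ times the density of $\Dgaus_{\tau}$, where $\tau \eqdef \sqrt{\|\mathbf{z}\|_2^2\sigma_1^2 + \sigma_2^2}$; a pointwise multiplicative bound of this kind immediately gives total variation distance $O(\epsilon)$ since both densities integrate to $1$. Conditioning on $\mathbf{x}$ and using that the pmf of $\Dgaus_{\Z^n,\sigma_1}$ at $\mathbf{x}$ is $\rho_{\sigma_1}(\mathbf{x})/\rho_{\sigma_1}(\Z^n)$ and the density of $\Dgaus_{\sigma_2}$ at $r$ is $\sigma_2^{-1}e^{-\pi r^2/\sigma_2^2}$, one gets (up to a positive constant absorbing the $\sigma_1^{-n}$ factors)
$$P_Y(y) \;\propto\; \sum_{\mathbf{x}\in\Z^n}\exp\!\Big(-\pi\|\mathbf{x}\|_2^2/\sigma_1^2 \;-\; \pi(y-\langle \mathbf{x},\mathbf{z}\rangle)^2/\sigma_2^2\Big).$$
The exponent is $-\pi$ times a quadratic in $\mathbf{x}$ with Hessian $A \eqdef \sigma_1^{-2}\bI_n + \sigma_2^{-2}\mathbf{z}\mathbf{z}^T$ and minimizer $\mathbf{x}_0 \eqdef \sigma_2^{-2}\,y\,A^{-1}\mathbf{z}$. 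Completing the square and evaluating the residual constant with the Sherman–Morrison identity $\mathbf{z}^TA^{-1}\mathbf{z} = \sigma_1^2\sigma_2^2\|\mathbf{z}\|_2^2/(\sigma_2^2 + \sigma_1^2\|\mathbf{z}\|_2^2)$, the $y$-dependent constant works out exactly to $-\pi y^2/\tau^2$, so
$$P_Y(y) \;\propto\; e^{-\pi y^2/\tau^2}\cdot\!\!\sum_{\mathbf{x}\in\Z^n}\!\!\exp\!\big(-\pi(\mathbf{x}-\mathbf{x}_0)^TA(\mathbf{x}-\mathbf{x}_0)\big).$$

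The crux is that the remaining lattice sum — a Gaussian sum over the shifted lattice $\Z^n-\mathbf{x}_0$ with precision matrix $A$ — is, up to a factor $1\pm O(\epsilon)$, independent of the shift $\mathbf{x}_0$ (hence of $y$). This is exactly the smoothing lemma for a (non-spherical) Gaussian: by Poisson summation, $\sum_{\mathbf{x}\in\Z^n}e^{-\pi(\mathbf{x}-\mathbf{c})^TA(\mathbf{x}-\mathbf{c})}$ equals $|\det A|^{-1/2}$ times a quantity whose deviation from $1$ is bounded by the mass that the Fourier-dual Gaussian (precision matrix $A^{-1}$, i.e.\ covariance-scale matrix $A$) places on $\Z^n\setminus\{\mathbf 0\}$. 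That dual mass is controlled by the smallest singular value of $A^{-1/2}$, namely $1/\sqrt{\lambda_{\max}(A)}$; since $\lambda_{\max}(A)$ is attained in the $\mathbf z$-direction (as $\|\mathbf z\|_2^2/\sigma_2^2\ge 0$), this equals $1/\sqrt{1/\sigma_1^2+\|\mathbf z\|_2^2/\sigma_2^2}$ — precisely the quantity the hypothesis assumes is $\ge \eta_\epsilon(\Z^n)$. By the definition of the smoothing parameter and monotonicity of $\rho$ in its scale, the deviation is then $\le\epsilon$, uniformly in $y$; comparing with the shift $\mathbf x_0=\mathbf 0$, the ratio of the shifted to the unshifted sum lies in $1\pm O(\epsilon)$.

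Putting this together, $P_Y(y) = (1\pm O(\epsilon))\,c\,e^{-\pi y^2/\tau^2}$ for a constant $c$ independent of $y$; integrating over $y\in\R$ and using $\int_\R e^{-\pi y^2/\tau^2}\,dy = \tau$ forces $c = (1\pm O(\epsilon))\,\tau^{-1}$, so $P_Y(y) = (1\pm O(\epsilon))\,\tau^{-1}e^{-\pi y^2/\tau^2} = (1\pm O(\epsilon))\,P_{\Dgaus_\tau}(y)$ for every $y$, hence $\dtv(D_Y,\Dgaus_\tau) = O(\epsilon)$, which is the claim. The only genuinely delicate point — and the reason the precise form of the hypothesis is what it is — is the bookkeeping for the non-spherical covariance $A^{-1}$: one must identify its worst-case width as $1/\sqrt{1/\sigma_1^2+\|\mathbf z\|_2^2/\sigma_2^2}$ and invoke the smoothing lemma in its general (ellipsoidal) form, which itself rests on Poisson summation. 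Everything else is the routine completion-of-the-square computation and the normalization step. (Since this is verbatim Corollary 3.10 of \cite{regev2009lattices}, one may alternatively simply cite it without reproducing the argument.)
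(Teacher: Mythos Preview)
The paper does not prove this lemma at all: it is stated as a known fact (Corollary~3.10 of \cite{regev2009lattices}) and used as a black box, with no argument reproduced. Your sketch is a correct and faithful outline of Regev's original proof --- condition on $\mathbf{x}$, complete the square in the exponent to factor out $e^{-\pi y^2/\tau^2}$, then use Poisson summation and the smoothing-parameter hypothesis to show the residual lattice sum over $\Z^n-\mathbf{x}_0$ with ellipsoidal precision $A=\sigma_1^{-2}\bI_n+\sigma_2^{-2}\mathbf{z}\mathbf{z}^T$ is $(1\pm O(\epsilon))$ times a constant; the identification of $1/\sqrt{\lambda_{\max}(A)}=1/\sqrt{1/\sigma_1^2+\|\mathbf{z}\|_2^2/\sigma_2^2}$ as the relevant dual-width is exactly right and explains the form of the hypothesis. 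Since the paper simply cites the result, your parenthetical at the end is precisely what the paper does; either approach is fine here. (One cosmetic point: the lemma statement writes $\eta_\epsilon(L)$ without specifying $L$; as your argument implicitly assumes, the intended lattice is $\Z^n$.)
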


\subsection{Proof of Fact \ref{app:fct:bound-on-collaped-distribution}} \label{app:collapsed-bound}
We start by recalling the definition of a lattice.

\begin{definition}[Lattice] \label{app:def:lattice}
Let $\mathcal{B} = (\bv_1,\bv_2,\cdots ,\bv_n)$ be a set of  $n$ linearly independent vectors in $\R^n$.
The lattice $L = L(\mathcal{B})$ defined by $\mathcal{B}$ is the set of all integer linear combinations
of vectors in $\mathcal{B}$, i.e., the set 
$\{\bv \in \R^n: \bv = \sum_{j=1}^n \alpha_j \bv_j, \alpha_j \in \Z \}$.
\end{definition}

Since we only use the integer lattices $\Z^n$, we will only introduce notation as necessary. 
For a more detailed introduction about lattices, 
the reader is referred to \cite{micciancio2002complexity}.

Partially supported Gaussian distributions (Definition~\ref{def:dpart}) 
behave similarly to continuous Gaussian distributions. 
The similarity can be quantified based on the so-called 
smoothing parameter of a lattice defined below.

\begin{definition} [see, e.g., Definition 2.10 of \cite{regev2009lattices}] \label{app:def:smoothing-parameter}
For an $n$-dimensional lattice $L$ and $\eps \in \R_+$, 
we define the \emph{smoothing parameter} $\eta_\epsilon(L)$ 
to be the smallest $s$ such that\footnote{Note that $L^*$ denotes the \emph{dual lattice} of $L$; 
it contains all $\by$ such that $\left<\bx, \by\right>$ is an integer for all $\bx \in L$.} 
$\rho_{1/s}(L^*\setminus \{0\})\leq \epsilon$.
\end{definition}

\begin{lemma}[see, e.g., Lemma 2.12 of \cite{regev2009lattices}] \label{app:lem:smoothing-integer-lattice}
For any $n \in \N$ and $\eps \in \R_+$, 
we have that $\eta_\eps(\Z^n)\leq\sqrt{\frac{\ln(2n(1+1/\eps))}{\pi}}\;.$
\end{lemma}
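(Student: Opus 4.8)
The plan is to follow the classical argument (as in~\cite{micciancio2002complexity, regev2009lattices}), exploiting the self-duality and product structure of the integer lattice. Throughout I use the Gaussian mass function in the form $\rho_{1/s}(\bx) = \exp(-\pi s^2 \|\bx\|_2^2)$, matching the convention of the cited sources. First I would record that $\Z^n$ is self-dual, i.e.\ $(\Z^n)^* = \Z^n$: a vector $\by$ has integer inner product with every $\bx \in \Z^n$ if and only if, testing against the standard basis vectors, each coordinate of $\by$ is an integer. Hence, by Definition~\ref{app:def:smoothing-parameter}, it suffices to show that $s := \sqrt{\ln(2n(1+1/\eps))/\pi}$ already satisfies $\rho_{1/s}(\Z^n \setminus \{\mathbf{0}\}) \leq \eps$, since this forces $\eta_\eps(\Z^n) \le s$.

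Next I would use that $\rho_{1/s}$ factors across coordinates, $\rho_{1/s}(\bx) = \prod_{i=1}^{n} \exp(-\pi s^2 x_i^2)$, so that $\rho_{1/s}(\Z^n) = \theta^n$ with $\theta := \sum_{k \in \Z}\exp(-\pi s^2 k^2)$, and therefore
\[
\rho_{1/s}(\Z^n \setminus \{\mathbf{0}\}) = \theta^n - 1 = (1+\delta)^n - 1, \qquad \delta := \theta - 1 = 2\sum_{k \geq 1}\exp(-\pi s^2 k^2).
\]
Using $(1+\delta)^n \leq e^{n\delta}$, it then suffices to establish the single scalar inequality $n\delta \leq \ln(1+\eps)$. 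Setting $\mu := e^{-\pi s^2}$, the choice of $s$ gives $\mu = \tfrac{\eps}{2n(1+\eps)} \leq \tfrac{1}{2n} \leq \tfrac12$, and the series $\delta = 2\sum_{k\geq1}\mu^{k^2}$ is dominated by its leading term: bounding the tail geometrically, $\sum_{k \geq 2}\mu^{k^2} \leq \mu^4\sum_{j\geq0}\mu^j \leq 2\mu^4$, so $\delta \leq 2\mu + 4\mu^4$ and hence $n\delta \leq 2n\mu + 4n\mu^4 = \tfrac{\eps}{1+\eps} + 4n\mu^4$. Since $4n\mu^4 = \tfrac{\eps^4}{4n^3(1+\eps)^4} \leq \tfrac{\eps^2}{4(1+\eps)^2}$, and since the elementary estimate $\ln(1+\eps) - \tfrac{\eps}{1+\eps} = \int_0^\eps \tfrac{\eps - t}{(1+t)(1+\eps)}\,dt \geq \tfrac{\eps^2}{4(1+\eps)^2}$ holds (bound the integrand below on $[0,\eps/2]$), we conclude $n\delta \leq \tfrac{\eps}{1+\eps} + \tfrac{\eps^2}{4(1+\eps)^2} \leq \ln(1+\eps)$. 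This gives $\rho_{1/s}(\Z^n\setminus\{\mathbf{0}\}) = (1+\delta)^n - 1 \leq e^{n\delta} - 1 \leq \eps$, which is exactly the claim; note that nothing here requires $\eps$ to be small, so the statement holds for all $\eps \in \R_+$.

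The structural parts — self-duality, coordinatewise factorization, the geometric tail bound, and the $\ln(1+\eps)$ inequality — are each one line, so I expect the only real content to be the constant-chasing. The target bound is essentially tight, so a naive tail estimate such as $k^2 \ge k$, which loses a factor of order $2$, is not affordable; the argument must instead match $2n\mu$ \emph{exactly} to $\tfrac{\eps}{1+\eps}$ and then verify that the slack in $\tfrac{\eps}{1+\eps} \le \ln(1+\eps)$, which is of order $\eps^2$, comfortably absorbs the higher-order tail of the theta series (of relative size $O(\mu^4) = O(n^{-4})$). I anticipate this bookkeeping to be the main, and essentially the only, obstacle.
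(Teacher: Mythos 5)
The paper cites this lemma from Regev's work without reproducing a proof, so there is no internal argument to compare against; your derivation is a correct stand-alone proof and follows the standard route (self-duality of $\Z^n$, coordinatewise factorization of $\rho_{1/s}$, the bound $(1+\delta)^n \le e^{n\delta}$, and a tight estimate of the one-dimensional theta function). Your remark about the tail is exactly right: the crude estimate $\delta \le 2\mu/(1-\mu)$ obtained from $k^2 \ge k$ gives $n\delta \approx 2\eps/(1+\eps)$, which exceeds $\ln(1+\eps)$ for small $\eps$, so one must extract the leading term $2\mu$ exactly and absorb the $O(\mu^4)$ tail into the $\Theta(\eps^2)$ slack in $\frac{\eps}{1+\eps}\le\ln(1+\eps)$. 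I verified each step: $\mu=\frac{\eps}{2n(1+\eps)}\le\frac12$; $\sum_{k\ge2}\mu^{k^2}\le\mu^4/(1-\mu)\le 2\mu^4$; $4n\mu^4=\frac{\eps^4}{4n^3(1+\eps)^4}\le\frac{\eps^2}{4(1+\eps)^2}$; and $\ln(1+\eps)-\frac{\eps}{1+\eps}=\int_0^\eps\frac{\eps-t}{(1+t)(1+\eps)}\,dt\ge\frac{\eps^2}{4(1+\eps)^2}$ by restricting to $[0,\eps/2]$ --- all valid for every $\eps>0$ and $n\ge1$. One caveat you handled well: you adopt (and explicitly flag) the \emph{unnormalized} $\rho_{1/s}(\bx)=\exp(-\pi s^2\|\bx\|_2^2)$ in Definition~\ref{app:def:smoothing-parameter}, which is Regev's convention; this differs from the $\sigma^{-n}$-normalized $\rho_\sigma$ of the paper's Definition~\ref{def:dpart}, so the symbol $\rho$ is overloaded in the paper itself, but your reading is the intended one (the smoothing-parameter definition is quoted verbatim from Regev), and the stated bound would not even be the right shape under the normalized convention.
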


The main lemma we will use here is that, 
when $\sigma$ is larger than the smoothing parameter, 
the normalizing factor remains roughly the same 
after a shift by an arbitrary vector $\bv$, as formalized below. 
(Note that in the discrete Gaussian case, the two sides would have been equal.) 
This property follows from the proof of \cite[Lemma 4.4]{micciancio2007worst}, 
in which it was shown that 
$\rho_{\sigma}(L + \bv) \in [(1 - \eps)\det(L^*), (1 + \eps)\det(L^*)]$;
the lemma then follows since $\det((\Z^n)^*) = 1$.

\begin{lemma} [\cite{micciancio2007worst}]\label{app:lem:general-pointwise-bound}
Let $n \in \N, \eps \in (0, 1)$ and $\sigma\geq \eta_\eps(\Z^n)$. 
Then, for any $\bv \in \R^n$,  we have that 
$\rho_{\sigma}(\Z^n + \bv) \in [1 - \eps, 1 + \eps].$	
\end{lemma}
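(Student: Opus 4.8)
The plan is to prove the estimate via the Poisson summation formula, the classical route to smoothing-parameter bounds. Set $f(\bx) \eqdef \sigma^{-n}\exp(-\pi\|\bx\|_2^2/\sigma^2)$, so that $f = \rho_\sigma$ in the notation of Definition~\ref{def:dpart}; $f$ is a probability density on $\R^n$ whose Fourier transform is $\hat f(\bw) = \exp(-\pi\sigma^2\|\bw\|_2^2)$. Since $f$ and $\hat f$ decay faster than any polynomial, all the sums below converge absolutely and Poisson summation applies. Because $\Z^n$ is self-dual with $\det(\Z^n)=1$, it yields, for every $\bv\in\R^n$,
$$\rho_\sigma(\Z^n+\bv) \;=\; \sum_{\bt\in\Z^n} f(\bt+\bv) \;=\; \sum_{\bw\in\Z^n}\hat f(\bw)\,e^{2\pi i\langle\bw,\bv\rangle} \;=\; 1 \;+\; \sum_{\bw\in\Z^n\setminus\{\mathbf{0}\}} e^{-\pi\sigma^2\|\bw\|_2^2}\,e^{2\pi i\langle\bw,\bv\rangle},$$
where the $\bw=\mathbf{0}$ term contributes exactly $1$ and carries all the dependence on $\bv$ into the remaining phases.

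Next I would bound the error term by the triangle inequality, discarding the phases:
$$\bigl|\rho_\sigma(\Z^n+\bv)-1\bigr| \;\le\; \sum_{\bw\in\Z^n\setminus\{\mathbf{0}\}} e^{-\pi\sigma^2\|\bw\|_2^2}.$$
This right-hand side is, up to the normalization appearing in Definition~\ref{def:dpart}, exactly $\rho_{1/\sigma}\bigl((\Z^n)^*\setminus\{\mathbf{0}\}\bigr)$. Each summand is non-increasing in $\sigma$, so the whole sum is non-increasing in $\sigma$; hence by the definition of the smoothing parameter (Definition~\ref{app:def:smoothing-parameter}) together with the hypothesis $\sigma\ge\eta_\eps(\Z^n)$, the sum is at most $\eps$. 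Combining the two displays gives $\rho_\sigma(\Z^n+\bv)\in[1-\eps,1+\eps]$, as claimed. Equivalently, one may simply quote the proof of \cite[Lemma 4.4]{micciancio2007worst}, where the more general identity $\rho_\sigma(L+\bv)\in[(1-\eps)\det(L^*),(1+\eps)\det(L^*)]$ is established for an arbitrary lattice $L$, and then specialize to $L=\Z^n$, using $\det((\Z^n)^*)=1$.

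The only point requiring care is the bookkeeping of normalization constants: the paper's $\rho_\sigma$ carries the extra factor $\sigma^{-n}$, and one should verify that this factor is handled consistently between the Poisson-summed expression and the smoothing-parameter inequality — in the usual formulation the factor $\sigma^n$ produced by Poisson summation cancels the $\sigma^{-n}$ in $f$, leaving a clean sum of unnormalized dual Gaussians, which is the quantity controlled by $\eta_\eps(\Z^n)$. Everything else — absolute convergence, applicability of Poisson summation, and the final triangle-inequality bound — is routine, so I expect the normalization check to be the only genuine (and minor) obstacle.
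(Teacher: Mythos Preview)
Your proposal is correct and is precisely the standard Poisson summation argument behind \cite[Lemma 4.4]{micciancio2007worst}; the paper itself does not give a proof but simply cites that lemma and specializes to $L=\Z^n$ with $\det((\Z^n)^*)=1$, which you also note as an alternative. Your flag about the normalization is well placed: the paper's $\rho_\sigma$ carries the extra $\sigma^{-n}$ factor, and once Poisson summation cancels it, the tail sum $\sum_{\bw\in\Z^n\setminus\{\mathbf 0\}}e^{-\pi\sigma^2\|\bw\|_2^2}$ is exactly the (unnormalized) quantity bounded by $\eps$ in the standard smoothing-parameter definition being cited from Regev.
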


We are now ready to prove Fact~\ref{app:fct:bound-on-collaped-distribution}.

\begin{proof}[Proof of Fact~\ref{app:fct:bound-on-collaped-distribution}]
Let $\br = \mathrm{mod}_1(\sigma \bt)$.
Notice that
\begin{align*}
\frac{P_{(1/\sigma)\circ\Dexp_{\R_1^n,\sigma} }(\bt)}{P_{\Dgaus_{\R^n,1}}(\bt)}
&= \frac{1}{P_{\Dgaus_{\R^n,1}}(\bt)} \cdot P_{U(\R_1^n)}(\br) \cdot P_{\Dgaus_{\Z^n + \br, \sigma}}(\sigma \bt) \\
&= \frac{1}{P_{\Dgaus_{\R^n,1}}(\bt)} \cdot P_{\Dgaus_{\Z^n + \br, \sigma}}(\sigma \bt) \\
&= \frac{1}{P_{\Dgaus_{\R^n,1}}(\bt)} \cdot \frac{\rho_{\sigma}(\sigma\bt)}{\rho_{\sigma}(\Z^n + \br)} \\
&= \frac{1}{P_{\Dgaus_{\R^n,1}}(\bt)} \cdot \frac{p_{1}(\bt)}{\rho_{\sigma}(\Z^n + \br)} \\
&= \frac{1}{\rho_{\sigma}(\Z^n + \br)} \\
&\overset{(\text{Lemmas~\ref{app:lem:general-pointwise-bound},~\ref{app:lem:smoothing-integer-lattice}})}{=} \frac{1}{1 \pm \eps} \\
&= 1 \pm O(\eps) \;.
\end{align*}
Notice also that 
\begin{align*}
\frac{P_{\Dcol_{\R_1^n,\sigma}}(\br)}{P_{U(\R_1^n)}(\br)} &= P_{\Dcol_{\R_1^n,\sigma}}(\br) \\
&= \sum_{\bu \in \Z^n} \rho_{\sigma}(\bu + \br) \\
&= \rho_{\sigma}(\Z^n + \br) \\
&= \frac{P_{\Dgaus_{\R^n,1}}(\bt)}{P_{(1/\sigma)\circ\Dexp_{\R_1^n,\sigma}}(\bt)} \;,
\end{align*}
where the last equality follows from the previously derived equality above.
These prove the first equality in Fact~\ref{app:fct:bound-on-collaped-distribution}.

Next, note that pointwise closeness immediately yields the same bound on the TV distance. 
Therefore, we have 
$$d_{\mathrm{TV}}\left ((1/\sigma)\circ \Dexp_{\R_1^n,\sigma},\Dgaus_{\R^n,1}\right ),
d_{\mathrm{TV}}\left (\Dcol_{\R_1^n,\sigma},U(\R_1^n)\right ) \leq \eps \leq \exp(-\Omega(\sigma^2)) \;,$$
where the second inequality follows from our choice of $\sigma$.
\end{proof}

\section{Hardness of LWE with Binary Secret, Continuous Samples, and Continuous Noise}
\label{app:lwe-reduction}

The main theorem of this section is a reduction from 
$\lwe(n,\Z_q^l,\Z_q^l,\Dgaus_{\Z,\sigma},\mod_q)$ to \\ 
$\lwe(m,\R_1^n,\{\pm 1\}^n,\Dgaus_{\sigma'},\mod_1)$. 
The purpose of this reduction is to massage the LWE problem 
into its variant with binary secret, continuous samples and continuous noise,
so we can further reduce it to the $\massart$ problem.

We reiterate that this reduction between LWE problems 
follows from the previous works~\cite{Mic18,vinod2022}; 
we provide the full proof here for completeness. 
The main theorem regarding the reduction is presented below.

\begin{theorem} \label{app:thm:main-lwe-reduction}
Let $n,m,l,q\in \N$, $\sigma,\sigma'\in \R$, where the parameters satisfy:
\new{
\begin{enumerate}
    \item $\log (q)/2^l=\delta^{\omega(1)}$ \newblue{(where $\omega(1)$ goes to infinity as $n$ goes to infinity)},
    \item $\sigma= \omega(\sqrt{\log(mn/\delta)})$ 
    \item $n\geq 2l\log_2(q)+\omega(\log(1/\delta))$, and
	\item $\sigma'=c\sqrt{n}\, \sigma/q$, where $c$ is a sufficiently large constant.
\end{enumerate}} 
\noindent Suppose there is no $T+\poly(m,n,l,\log q,\log(1/\delta))$ time distinguisher for 
$\lwe(n,\Z_q^l,\Z_q^l,\Dgaus_{\Z,\sigma},\mod_q)$ 
with $ \eps/m$ advantage. 
Then there is no $T$ time distinguisher for 
$\lwe(m,\R_1^n,\{\pm 1\}^n,\gaus_{\sigma'},\mod_1)$ 
with $2\eps+O(\delta)$ advantage. 
\end{theorem}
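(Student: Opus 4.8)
\textbf{Proof proposal for Theorem~\ref{app:thm:main-lwe-reduction}.}
The plan is to follow the standard ``modulus switching + secret switching + rounding'' pipeline from~\cite{Mic18, vinod2022}, composing three elementary reductions. First I would reduce the standard LWE problem $\lwe(n, \Z_q^l, \Z_q^l, \Dgaus_{\Z,\sigma}, \mod_q)$ to the variant with \emph{binary} secret $\lwe(n', \Z_q^{n'}, \{0,1\}^{n'}, \Dgaus_{\Z,\sigma'}, \mod_q)$ for an appropriate $n'$. This is the classical binary-secret reduction: writing each coordinate of $\bs \in \Z_q^l$ in binary costs a factor of $\log_2 q$ in the dimension, so $n' \approx l \log_2 q$; the extra slack in hypothesis $n \geq 2l\log_2 q + \omega(\log(1/\delta))$ is there so that, after padding the secret with uniform random bits, the resulting sample matrix is (statistically close to) full rank and the leftover-hash / rerandomization argument goes through with error $\delta^{\omega(1)}$. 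The noise distribution is preserved in this step, and the $\{0,1\}$ secret is then mapped to a $\{\pm 1\}$ secret by the affine change of variables $s_i \mapsto (1+s_i)/2$, which only shifts $y$ by a publicly computable quantity $\tfrac12\langle \bx, \mathbf{1}\rangle$ and hence preserves both the alternative and null distributions up to an efficient transformation.

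Second, I would pass from the discrete domain $\Z_q^n$ to the continuous torus $\R_1^n$ and from discrete Gaussian noise to continuous Gaussian noise. Given a discrete sample $(\bx, y) \in \Z_q^n \times \Z_q$, output $(\bx/q + \mathbf{u}, y/q + z')$ where $\mathbf{u}$ is a tiny continuous smoothing perturbation on $\R^n/\Z^n$ (or simply reinterpret $\bx/q$ as uniform on a fine sublattice of $\R_1^n$ and observe it is statistically close to $U(\R_1^n)$ since $q$ is large) and $z'$ is fresh continuous Gaussian noise of scale $\Theta(\sigma'/\,\text{something})$ added to wash out the discretization; the combined noise $z/q + z'$ is within $O(\delta)$ TV of a continuous Gaussian of scale $\sigma' = c\sqrt{n}\,\sigma/q$ by a convolution/smoothing argument (this is where $\sigma = \omega(\sqrt{\log(mn/\delta)})$ is used — it guarantees the discrete Gaussian mod $1$ is close to uniform / that the smoothing parameter condition of Lemma~\ref{app:lem:discrete-gaussian-to-gaussian} holds, per-sample, and a union bound over $m$ samples gives the $O(\delta)$ loss). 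In the null case, independence and near-uniformity of $(\bx, y)$ are preserved by the same maps, so the null hypothesis of the continuous problem is matched as well. Composing the two reductions: a $T$-time distinguisher for the continuous binary-secret problem with advantage $2\eps + O(\delta)$ yields, after subtracting the $O(\delta)$ statistical error introduced by the reduction, a $(T + \poly(m,n,l,\log q, \log(1/\delta)))$-time distinguisher with advantage $\geq \eps$ for the discrete problem — but the hypothesis of the theorem is stated with $m$ samples on the target side and $n$ samples on the source side, so I would set up the sample bookkeeping so that producing $m$ continuous samples consumes $n$ discrete samples (here $n$ plays the role of ``number of LWE samples'' in the source problem, consistent with the theorem statement), and the advantage loss $\eps/m$ accounts for a hybrid argument over the $m$ produced samples if the reduction is not perfectly sample-preserving.

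The main obstacle I expect is the second step: carefully controlling the total variation error accumulated when (a) replacing $U(\tfrac1q \Z_q^n)$ by $U(\R_1^n)$, (b) replacing the scaled discrete Gaussian $z/q$ convolved with fresh continuous noise by a clean continuous Gaussian, and (c) ensuring these approximations hold \emph{jointly} across all $m$ output samples, so that the per-sample $\exp(-\Omega(\sigma^2))$-type bounds (cf.\ Fact~\ref{app:fct:bound-on-collaped-distribution} and Lemma~\ref{app:lem:discrete-gaussian-to-gaussian}) aggregate to only $O(\delta)$. The parameter choices in the hypothesis ($\log q / 2^l = \delta^{\omega(1)}$, $\sigma = \omega(\sqrt{\log(mn/\delta)})$, the dimension slack, and $\sigma' = c\sqrt n\,\sigma/q$) are precisely calibrated to make each of these error terms $\delta^{\omega(1)}$ or $\exp(-\Omega(\sigma^2))$, and the bulk of the write-up will be verifying these bounds and checking that the reduction is efficient (polynomial in all parameters) and maps null to null and alternative to alternative. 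Since each individual ingredient is already present in~\cite{Mic18, vinod2022}, I would cite those for the hard lemmas and focus the exposition on the composition and the explicit constant and parameter tracking needed for our downstream application.
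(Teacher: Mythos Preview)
Your overall pipeline---binary-secret reduction, then discrete-to-continuous for samples and noise, then rescale by $1/q$---matches the paper's, and you correctly identify that all the component lemmas are already in~\cite{Mic18,vinod2022}. However, your noise accounting has two concrete errors that would have to be fixed before the argument goes through.

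First, the binary-secret reduction actually invoked (Theorem~7 of~\cite{vinod2022}, stated in the paper as Lemma~\ref{app:lem:lwe-binary-secret}) is \emph{not} noise-preserving: it takes $\sigma$ to $\sigma_1 = 2\sigma\sqrt{n+1}$. This is where the factor $\sqrt{n}$ in $\sigma' = c\sqrt{n}\,\sigma/q$ actually comes from, not from the added continuous noise $z'$ as your write-up suggests.

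Second, and more seriously, you do not track the label noise introduced by perturbing the sample. If you replace $\bx/q$ by $\bx/q + \mathbf{u}$, then $y/q = \langle \bx/q,\bs\rangle + z/q$ is only of the form $\langle \bx/q + \mathbf{u},\bs\rangle + (\text{noise})$ after absorbing $-\langle \mathbf{u},\bs\rangle$ into the noise. Since $\bs\in\{\pm 1\}^n$, this term has scale $\sqrt{n}$ times the per-coordinate scale of $\mathbf{u}$, and to smooth the $\tfrac1q$-grid you need $\mathbf{u}$ with per-coordinate scale $\Omega(\sqrt{\log(mn/\delta)})$ (before the $1/q$ rescaling); so this contribution is not negligible. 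Your sentence ``the combined noise $z/q + z'$'' omits it entirely. Relatedly, the parenthetical fallback ``$\bx/q$ is statistically close to $U(\R_1^n)$ since $q$ is large'' is simply false for the polynomial $q$ used here: a discrete distribution is at TV distance $1$ from a continuous one, and the required smoothing perturbation is not ``tiny''. The paper handles this by a separate step (Lemma~\ref{app:lem:lwe-continuous-sample}) that adds Gaussian noise to $\bx$ and explicitly verifies, via Lemma~\ref{app:lem:discrete-gaussian-to-gaussian}, that the induced extra label noise $-\langle \bx_\ad,\bs\rangle$ together with the existing noise is $O(\delta/m)$-close in TV to a continuous Gaussian of the target scale; only then does it rescale by $1/q$ (Lemma~\ref{app:rem:resacling}).
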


Before we prove Theorem \ref{app:thm:main-lwe-reduction}, 
let us note that combining it with Assumption \ref{asm:LWE-hardness} 
yields Lemma~\ref{lem:continuous-lwe-hardness}.

\begin{proof}[Proof of Lemma~\ref{lem:continuous-lwe-hardness}]
We take 
$n=l^\alpha$, $m=2^{O(l^{\beta'})}$, $q=l^{\gamma'}$,
and $\sigma=\sqrt{l}$, where $\alpha>1$ and $\beta''>\beta'\in (0,1)$.
Then, from Assumption \ref{asm:LWE-hardness}, it follows that
there is no $2^{O(n^{\beta''/\alpha})}$ time algorithm 
to solve $\lwe\left (n,\Z_q^l,\Z_q^l,\Dgaus_{\Z, n^{\alpha/2}},\mod_q\right)$ 
with $2^{-O(n^{\beta''/\alpha} )}$ advantage.

We take $\delta$ to be a sufficiently small constant 
and apply \newblue{Theorem}~\ref{app:thm:main-lwe-reduction}.
Then we have that no $2^{O(n^{\beta'/\alpha})}$ time algorithm can solve 
$\lwe\left(2^{O(n^{\beta'/\alpha})},\R_1^n,\{\pm 1\}^n,\Dgaus_{\Z,O\left (n^{1/(2\alpha) 
+1/2-\gamma'/\alpha}\right )},\mod_1\right )$ with $1/3$ advantage.
We rename $\beta=\beta'/\alpha$ and $\gamma=1/(2\alpha)+1/2-\gamma'/\alpha$.
By taking $\alpha>1$ and $\beta'<1$ to be arbitrarily close to $1$ and 
$\gamma'$ to be arbitrarily large constant, 
we can have $\beta\in (0,1)$ to be arbitrarily close to $1$ 
and $\gamma$ to be arbitrarily large.  
Then there is no $2^{O(n^\beta)}$ time algorithm to solve 
$\lwe\left(2^{O(n^\beta)},\R_1^n,\{\pm 1\}^n,\Dgaus_{O\left (n^{-\gamma}\right )}, \mod_1\right)$ 
with $1/3$ advantage.

Given the above, by a standard boosting argument, 
it follows that there is no $2^{O(n^\beta)}$ time algorithm to solve 
$\lwe\left (2^{O(n^\beta)},\R_1^n,\{\pm 1\}^n,\Dgaus_{O\left (n^{-\gamma}\right )},\mod_1\right )$ 
with $2^{-O(n^\beta)}$ advantage.
\end{proof}

\paragraph{From $\Z_q$ Secret to Binary Secret}
\new{
We require the following lemma from \cite{vinod2022} that reduces the
classic LWE problem with $\Z_q$ secret to an LWE problem with binary secret.

\begin{lemma} [Theorem 7 from \cite{vinod2022}] \label{app:lem:lwe-binary-secret}
Let $q,l,n,m\in \N$ and $\sigma\in \R_+$. 
Assuming that there is no time $T+\poly(l,n,\log(q),\log(1/\delta))$ algorithm for solving 
$\lwe(n+1,\Z_q^l,\Z_q^l,\Dgaus_{\Z,\sigma},\mod_q)$ with  advantage $(\eps-\delta^{\omega_n(1)})/(2m)$,
there is no time $T$ algorithm for solving
$\lwe(m,\Z_q^{n+1},\{\pm 1\}^{n+1},\Dgaus_{\Z,\sigma'},\mod_q)$ with $\eps$ advantage,
as long as the following holds:
$\log (q)/2^l=\delta^{\omega_n(1)}$,
$\sigma\geq 4\sqrt{\omega(\log(1/\delta))+\log m+\log n}$,
$n\geq 2l\log_2(q)+\omega(\log(1/\delta))$, and
$\sigma'=2\sigma\sqrt{n+1}$.
\end{lemma}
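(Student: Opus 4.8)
This lemma is imported verbatim as Theorem~7 of~\cite{vinod2022}, so the shortest ``proof'' is to cite that paper; the following is the argument I would reconstruct. The task is a sample-by-sample reduction converting an instance of $\lwe(n+1,\Z_q^l,\Z_q^l,\Dgaus_{\Z,\sigma},\mod_q)$ into an instance of $\lwe(m,\Z_q^{n+1},\{\pm 1\}^{n+1},\Dgaus_{\Z,\sigma'},\mod_q)$ that sends structured inputs to structured outputs and uniform inputs to uniform outputs, so that a distinguisher for the target yields one for the source.

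First I would gadget-decompose the secret. Writing each coordinate of $\bs\in\Z_q^l$ in its $\lceil\log_2 q\rceil$ binary digits, an inner product $\langle\ba,\bs\rangle$ over $\Z_q^l$ becomes $\langle\tilde\ba,\tilde\bs\rangle$ over $\Z_q^{l\lceil\log_2 q\rceil}$, where $\tilde\bs\in\{0,1\}^{l\lceil\log_2 q\rceil}$ is uniform and $\tilde\ba$ is the deterministic ``gadget expansion'' of $\ba$ (each coordinate $a_i$ replaced by $a_i,2a_i,4a_i,\dots$); an affine change of variables turns $\{0,1\}$ into $\{\pm1\}$ and is absorbed into the $y$-component. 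The problem is that $\tilde\ba$ no longer ranges over all of $\Z_q^{l\lceil\log_2 q\rceil}$. To fix this I would pad the secret with fresh uniform $\{\pm1\}$ coordinates up to dimension $n+1$ and re-randomize the sample vectors to be genuinely uniform over $\Z_q^{n+1}$ via a leftover-hash-lemma argument; this is exactly where the dimension inflation $n\ge 2l\log_2 q+\omega(\log(1/\delta))$ is needed (roughly, $l\log_2 q$ ``real'' coordinates plus $\gtrsim l\log_2 q$ coordinates of slack for the hash), and where the hypothesis $\log q/2^l=\delta^{\omega_n(1)}$ enters to make the statistical error of the hash negligible. The re-randomization replaces each noise contribution by a $\{\pm1\}$-weighted sum of $\Theta(n)$ discrete Gaussians of width $\sigma$, inflating the noise to width $\Theta(\sigma\sqrt n)$; a smoothing / noise-flooding step — valid because $\sigma\ge 4\sqrt{\omega(\log(1/\delta))+\log m+\log n}$ puts the original Gaussian well above the smoothing parameter of $\Z$ even after a union bound over $m$ samples — shows the result is within statistical distance $\delta^{\omega_n(1)}$ of $\Dgaus_{\Z,\sigma'}$ with $\sigma'=2\sigma\sqrt{n+1}$.

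Finally I would do the bookkeeping. Each output sample is produced from a constant number of input samples, so running the transformation $m$ times and feeding the results to the assumed distinguisher costs $\poly(l,n,\log q,\log(1/\delta))$ overhead; a hybrid over the $m$ samples loses a factor $2m$ in advantage, and the accumulated statistical error is $\delta^{\omega_n(1)}$, giving the stated advantage $(\eps-\delta^{\omega_n(1)})/(2m)$ for the source instance. In the null case all the maps above are $\Z_q$-linear (and measure-preserving in the uniformizing directions), so uniform goes to uniform and there is nothing to check. The main obstacle is the re-randomization: making the sample vectors exactly uniform while keeping the transformed noise provably close to a clean discrete Gaussian of the advertised width $\sigma'$ — this is what simultaneously forces the dimension blow-up (via the leftover hash lemma, hence the $2^l$ versus $\log q/\delta$ condition) and the poly-logarithmic lower bound on $\sigma$ (via noise flooding).
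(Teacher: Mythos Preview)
The paper does not prove this lemma: it is stated as a black-box import from \cite{vinod2022}, with the surrounding text reading simply ``We require the following lemma from \cite{vinod2022}\ldots'' and no argument supplied. Your opening observation --- that the shortest proof is the citation --- is therefore exactly what the paper does.

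Your reconstruction goes beyond what the paper provides. The high-level ingredients you name (bit/gadget decomposition of the secret, leftover-hash re-randomization, $\sqrt{n}$ noise growth, a hybrid costing a factor $m$) are plausible, but one bookkeeping claim is inconsistent with the statement itself. You assert that ``each output sample is produced from a constant number of input samples.'' The source problem supplies only $n+1$ samples while the target requires $m$ (which may be arbitrarily large), and the noise width grows by exactly $2\sqrt{n+1}$; both facts point to the standard construction in which \emph{all} $n+1$ source samples are collected once into $(A,\bb)\in\Z_q^{(n+1)\times l}\times\Z_q^{n+1}$ and each of the $m$ target samples is then produced from the fixed pair $(A,\bb)$ together with fresh per-sample randomness (a short combining vector of length $n+1$, which is precisely what yields the $\sqrt{n+1}$ noise blow-up). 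That reuse of $(A,\bb)$ is also what makes the output samples only computationally i.i.d.\ and hence what forces a genuine hybrid over the $m$ outputs, accounting for the $1/(2m)$ advantage loss you mention.
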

}

\paragraph{From Discrete To Continuous}

In the next step, we show that adding a small amount 
of Gaussian noise on $y$ will render the discrete Gaussian noise 
close to continuous Gaussian noise.

\begin{lemma} [Lemma 15 from \cite{vinod2022}] \label{app:lem:lwe-continuous-noise}
Let $n,m,q\in \N$, $\sigma\in \R_+$, $c$ be a sufficiently large constant and 
suppose that $\sigma>\sqrt{c\log (m/\delta)}$.  
Suppose there is no distinguisher for $\lwe(m,\Z_q^n,\{\pm 1\}^n,\Dgaus_{\Z,\sigma},\mod_q)$
running in time \mbox{$T+\poly(m,n\log(q),\log(1/\delta))$ with $\eps$} advantage. 
Then there is no $T$-time distinguisher for \mbox{$\lwe(m,\Z_q^n,\{\pm 1\}^n,\Dgaus_{\sigma'},\mod_q)$} 
with $\eps+O(\delta)$ advantage, where 
$$\sigma'\geq\sqrt{\sigma^2+c\log (m/\delta)}\; .$$
\end{lemma}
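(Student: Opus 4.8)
The plan is to exhibit an efficient sample-wise randomized reduction $T$ from the discrete-noise problem to the continuous-noise problem, and then argue by contrapositive. I would set $\sigma''\eqdef\sqrt{(\sigma')^2-\sigma^2}$, which is well defined with $(\sigma'')^2\ge c\log(m/\delta)$ by the hypothesis $\sigma'\ge\sqrt{\sigma^2+c\log(m/\delta)}$; note also that the hypothesis on $\sigma$ gives $\sigma^2> c\log(m/\delta)$. On a sample $(\bx,y)\in\Z_q^n\times\Z_q$, the map $T$ would draw a fresh $w\sim\Dgaus_{\sigma''}$ and output $(\bx,\mod_q(y+w))\in\Z_q^n\times\R_q$; over all $m$ samples this costs $\poly(m,n\log q,\log(1/\delta))$ time. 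The remaining work is to show that, on the $m$-fold product, $T$ maps the discrete alternative (resp.\ null) distribution to within $O(\delta)$ total variation distance of the continuous alternative (resp.\ null) distribution. Granting this, precomposing a hypothetical $T$-time distinguisher for the continuous problem with advantage $\eps+O(\delta)$ with the map $T$ produces a $(T+\poly(m,n\log q,\log(1/\delta)))$-time distinguisher for the discrete problem with advantage at least $\eps$, contradicting the assumption.

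For the alternative case, a discrete sample has $y=\mod_q(\langle\bx,\bs\rangle+z)$ with $z\sim\Dgaus_{\Z,\sigma}$ independent of $\bx$, so $T$ outputs $(\bx,\mod_q(\langle\bx,\bs\rangle+z+w))$ with $z+w$ independent of $\bx$; the target continuous sample is $(\bx,\mod_q(\langle\bx,\bs\rangle+z'))$ with $z'\sim\Dgaus_{\sigma'}$ independent of $\bx$. Since the deterministic map $(\bx,\zeta)\mapsto(\bx,\mod_q(\langle\bx,\bs\rangle+\zeta))$ cannot increase total variation and the two $\bx$-marginals agree, it suffices to show $\dtv(z+w,\Dgaus_{\sigma'})=O(\delta/m)$. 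This is exactly Lemma~\ref{app:lem:discrete-gaussian-to-gaussian} applied with $n=1$, lattice $\Z$, $\mathbf{z}$ the scalar $1$, $\sigma_1=\sigma$, $\sigma_2=\sigma''$ (so $\sqrt{\sigma_1^2+\sigma_2^2}=\sigma'$); its hypothesis $1/\sqrt{1/\sigma^2+1/(\sigma'')^2}\ge\eta_{\delta/m}(\Z)$ holds because the left-hand side is at least $\min(\sigma,\sigma'')/\sqrt2\ge\sqrt{c\log(m/\delta)}/\sqrt2$, while $\eta_{\delta/m}(\Z)=O(\sqrt{\log(m/\delta)})$ by Lemma~\ref{app:lem:smoothing-integer-lattice}, so the inequality is met once $c$ is a sufficiently large constant. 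A hybrid argument over the $m$ i.i.d.\ samples upgrades $O(\delta/m)$ to $O(\delta)$.

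For the null case, by Definition~\ref{def:general-lwe-one-dim} the discrete null sample is $(\bx,y_0)$ with $\bx\sim U(\Z_q^n)$ and $y_0\sim U(\Z_q)$ independent, since for $\bs\in\{\pm1\}^n$ the quantity $\langle\bx,\bs\rangle$ is uniform on $\Z_q$. After $T$ we get $(\bx,\mod_q(y_0+w))$, and writing $w=\lfloor w\rfloor+\{w\}$ with $\{w\}=\mod_1(w)$ gives $\mod_q(y_0+w)=\mod_q(y_0+\lfloor w\rfloor)+\{w\}$, a sum lying in $[0,q)$; conditioned on $w$ the term $\mod_q(y_0+\lfloor w\rfloor)$ is uniform on $\{0,\dots,q-1\}$ regardless of $w$, hence it is uniform on $\{0,\dots,q-1\}$ and independent of $\{w\}\sim\Dcol_{\R_1,\sigma''}$. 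The $y$-marginal of the continuous null, namely $\mod_q(U(\Z_q)+\Dgaus_{\sigma'})$, admits the identical decomposition with $\{w\}$ replaced by a draw from $\Dcol_{\R_1,\sigma'}$. Since $\Dcol_{\R_1,\sigma''}$ and $\Dcol_{\R_1,\sigma'}$ are each within $\exp(-\Omega(c\log(m/\delta)))=O(\delta/m)$ of $U(\R_1)$ by Fact~\ref{app:fct:bound-on-collaped-distribution} (using $\sigma'',\sigma'\ge\sqrt{c\log(m/\delta)}$), a triangle inequality through $U(\R_1)$ shows the two $y$-coordinates are $O(\delta/m)$-close; combined with the identical $\bx$-marginals and a hybrid argument, the transformed discrete null is $O(\delta)$-close to the continuous null.

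The hard part, beyond bookkeeping of constants, will be the alternative case: it amounts to the statement that convolving a one-dimensional discrete Gaussian over $\Z$ with a continuous Gaussian of width $\gtrsim\sqrt{\log(m/\delta)}$ is statistically indistinguishable from a continuous Gaussian, which is precisely where the smoothing-parameter hypotheses on $\sigma$ and $\sigma''$ enter (via Lemma~\ref{app:lem:discrete-gaussian-to-gaussian} and Lemma~\ref{app:lem:smoothing-integer-lattice}); the null case is routine once the clean product decomposition over $[0,1)$ and $\{0,\dots,q-1\}$ is noticed.
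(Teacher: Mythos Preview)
Your proposal is correct and follows essentially the same reduction as the paper: add fresh continuous Gaussian noise of variance $(\sigma')^2-\sigma^2$ to $y$, then invoke Lemma~\ref{app:lem:discrete-gaussian-to-gaussian} for the alternative case. Your null-case analysis is in fact more careful than the paper's: the paper simply asserts that both $y$-marginals are $U(\R_q)$ ``by symmetry'' (which is only approximately true), whereas you explicitly decompose into integer and fractional parts and invoke Fact~\ref{app:fct:bound-on-collaped-distribution} to show both fractional parts are $O(\delta/m)$-close to $U(\R_1)$. An even shorter alternative for the null case, implicit in the paper's argument, is to observe that the transformed discrete null and the transformed discrete alternative have \emph{identical} $y$-marginals (both equal $\mod_q(U(\Z_q)+\{e\})$ since the discrete $z$ contributes nothing to the fractional part), so the $O(\delta/m)$ bound already established for the alternative transfers directly.
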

\begin{proof} 
We will give a reduction argument. 
Take $\sigma_{\mathrm{add}}=\sqrt{\sigma'^2-\sigma^2}$. 
Then for each sample $(\bx,y)$ 
from $\lwe(m,\Z_q^n,\{\pm 1\}^n,\Dgaus_{\Z,\sigma},\mathrm{mod}_q)$, 
we return $$(\bx,\mathrm{mod}_1(y+e))\text{ where }e\sim \Dgaus_{\sigma_{\mathrm{add}}}$$ 
as a sample for $\lwe(m,\Z_q^n,\{\pm 1\}^n,\Dgaus_{\sigma'},\mathrm{mod}_q)$. 

Suppose that the input instance is in the alternative hypothesis case. 
We need to argue that after running the reduction algorithm, 
the new noise $z+e$ has at most $O(\delta/m)$ total variation distance from $\Dgaus_{\sigma'}$. 
From Lemma~\ref{app:lem:discrete-gaussian-to-gaussian}, 
we have that for a sufficiently large constant $c$, 
$(z+e)$ is within $O(\delta/m)$ total variation distance to $\Dgaus_{\sigma'}$. 
With $m$ samples, this only decreases the distinguishing advantage 
by at most $O(\delta)$. 
	
Suppose that input instance is from the null hypothesis case. 
We need to show after the reduction algorithm, 
both $\bx$ and $y$ have the same marginal as in the previous case. 
It is easy to verify that after the reduction, 
$\bx$ will have the same marginal as in the previous case. 
For $y$, the marginal distribution of $y$ in the previous case is $U(\R_q)$ by symmetry. 
Similarly, in the null hypothesis case, the distribution of 
$y$ is also $U(\R_q)$ by symmetry.
\end{proof}

We now show that adding a small amount of Gaussian noise 
on the samples will render the samples continuous; 
at the same time, this extra Gaussian noise 
on samples can be interpreted 
as some extra Gaussian noise on the labels. 

\begin{lemma} [Lemma 16 from \cite{vinod2022}] \label{app:lem:lwe-continuous-sample}
Let $n,m,q\in \N$, $\sigma\in \R$, $c$ be a sufficiently large constant. Suppose that 
$\sigma\geq cn^{1/2}\sqrt{\log(mn/\delta)}$. 
Suppose there is no $T+\poly(m,n,\log(q),\log(1/\delta))$-time distinguisher for 
$\lwe(m,\Z_q^n,\{\pm 1\},\Dgaus_{\sigma},\mod_q)$ with $\eps$ advantage.
Then there is no $T$-time distinguisher for $\lwe(m,\R_q^n,\{\pm 1\},\Dgaus_{\sigma'},\mod_q)$
with $\eps+\delta$ advantage, where
$$\sigma'=\sqrt{\sigma^2+cn\log (mn/\delta)}\; .$$
\end{lemma}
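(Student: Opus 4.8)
\textbf{Proof proposal for Lemma~\ref{app:lem:lwe-continuous-sample}.}

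The plan is to mimic the reduction used in the proof of Lemma~\ref{app:lem:lwe-continuous-noise}, but now adding Gaussian noise to the \emph{sample} vector $\bx$ rather than to the label $y$. Concretely, given a sample $(\bx, y)$ from $\lwe(m,\Z_q^n,\{\pm 1\},\Dgaus_{\sigma},\mod_q)$, the reduction algorithm would output $(\mod_q(\bx + \bx_\ad),\; y)$, where $\bx_\ad \sim \Dgaus_{\R^n, \sigma_\ad}$ is fresh independent noise and $\sigma_\ad$ is chosen appropriately (roughly $\sigma_\ad = \Theta(\sqrt{n\log(mn/\delta)})$, so that $\sigma_\ad$ exceeds the smoothing parameter $\eta_{\delta/\poly}(\Z^n)$). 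The key algebraic identity is that $\langle \bx + \bx_\ad, \bs\rangle + z = \langle \bx, \bs\rangle + (\langle \bx_\ad, \bs\rangle + z)$; since $\bs \in \{\pm 1\}^n$ has $\|\bs\|_2 = \sqrt n$, the term $\langle \bx_\ad, \bs\rangle$ is distributed as $\Dgaus_{\sqrt{n}\,\sigma_\ad}$, so the effective label noise becomes (close to) $\Dgaus_{\sqrt{\sigma^2 + n\sigma_\ad^2}}$. Matching this with the target $\sigma' = \sqrt{\sigma^2 + cn\log(mn/\delta)}$ pins down $\sigma_\ad$.

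The first step is the alternative-hypothesis analysis. Here I would argue two things. First, the new sample vector $\mod_q(\bx + \bx_\ad)$ is within $\exp(-\Omega(\sigma_\ad^2/\cdot)) \le O(\delta/m)$ total variation distance of $U(\R_q^n)$ — this follows from Fact~\ref{app:fct:bound-on-collaped-distribution} (or directly from Lemma~\ref{app:lem:general-pointwise-bound}), since $\bx$ is uniform over $\Z_q^n$ and adding a Gaussian of width above the smoothing parameter washes it out to near-uniform over the continuous torus $\R_q^n$. Second, conditioned on the (continuous) value of $\mod_q(\bx+\bx_\ad)$, the residual randomness of $\bx_\ad$ still contributes the right Gaussian to the label; more carefully, one shows the joint distribution of $(\mod_q(\bx+\bx_\ad), y)$ is $O(\delta/m)$-close to the honest $\lwe(m,\R_q^n,\{\pm1\},\Dgaus_{\sigma'},\mod_q)$ alternative distribution. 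This is where Lemma~\ref{app:lem:discrete-gaussian-to-gaussian} enters: the quantity $\langle \bx_\ad, \bs\rangle + z$, with $\bx_\ad$ discretized or handled via the expanded-Gaussian formalism and $z \sim \Dgaus_\sigma$, is within $O(\delta/m)$ TV of $\Dgaus_{\sqrt{\sigma^2 + n\sigma_\ad^2}} = \Dgaus_{\sigma'}$. Summing the per-sample TV errors over the $m$ samples gives total advantage loss $O(\delta)$.

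The second step is the null-hypothesis analysis, which is easy by symmetry: if $(\bx, y)$ are independent with $\bx$ uniform on $\Z_q^n$ and $y$ uniform on $\R_q$ (or whatever the null marginal is), then $\mod_q(\bx + \bx_\ad)$ is independent of $y$ and its marginal is exactly the same near-uniform-on-$\R_q^n$ distribution produced in the alternative case, so the two null distributions coincide up to the same $O(\delta)$ TV slack. Putting the two cases together: a $T$-time distinguisher for $\lwe(m,\R_q^n,\{\pm1\},\Dgaus_{\sigma'},\mod_q)$ with advantage $\eps + \delta$ yields, after composing with the $\poly(m,n,\log q,\log(1/\delta))$-time reduction, a distinguisher for $\lwe(m,\Z_q^n,\{\pm1\},\Dgaus_\sigma,\mod_q)$ with advantage $\eps$, contradicting the hypothesis. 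The main obstacle is bookkeeping the TV errors carefully in the alternative case — in particular making sure the conditioning on $\mod_q(\bx+\bx_\ad)$ does not destroy independence of the residual Gaussian mass that becomes the label noise — but this is handled exactly as in the expanded/collapsed Gaussian calculations of Fact~\ref{app:fct:bound-on-collaped-distribution} combined with Lemma~\ref{app:lem:discrete-gaussian-to-gaussian}, with no new ideas required.
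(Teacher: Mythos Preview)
Your proposal is correct and follows essentially the same reduction as the paper: add independent continuous Gaussian noise $\bx_\ad$ to the sample vector, use Fact~\ref{app:fct:bound-on-collaped-distribution} to show $\mod_q(\bx+\bx_\ad)$ is $O(\delta/m)$-close to $U(\R_q^n)$, and then, conditioning on $\bx+\bx_\ad$ (so that $\bx_\ad$ becomes a discrete Gaussian over a coset of $\Z^n$), apply Lemma~\ref{app:lem:discrete-gaussian-to-gaussian} to conclude that the new noise $z-\langle \bx_\ad,\bs\rangle$ is $O(\delta/m)$-close to $\Dgaus_{\sigma'}$. One small correction: the right scale is $\sigma_\ad=\sqrt{(\sigma'^2-\sigma^2)/n}=\sqrt{c\log(mn/\delta)}$ (no extra $\sqrt n$), so that $n\sigma_\ad^2$ exactly matches $\sigma'^2-\sigma^2$; this value already exceeds the smoothing parameter of $\Z^n$, so your smoothing argument goes through unchanged.
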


\begin{proof} We will give a reduction argument. 
Taking $\sigma_{\mathrm{add}}=\sqrt{\frac{\sigma'^2-\sigma^2}{n}}$, 
then for each sample $(\bx,y)$ 
from $\lwe(m,\Z_q^n,\{\pm 1\}^n,\Dgaus_{\sigma},\mathrm{mod}_q)$, 
we return  	
$$(\mathrm{mod}_q(\mathbf{x+x'}),y)\text{ ,where }\mathbf{x'}\sim \Dgaus_{\R^n,\sigma_{\mathrm{add}}}$$ 
as a sample for $\lwe(m,\R_q^n,\{\pm 1\}^n,\Dgaus_{\sigma'},\mathrm{mod}_q)$.

Suppose the input instance is in the alternative hypothesis case. We need to show the following:
\begin{enumerate}
	\item [(a)] $\mathrm{mod}_q(\mathbf{x+x'})$ is close to $U(\R_q^n)$; and 
	
	\item [(b)] $y=\mathrm{mod}_q(\langle \mathrm{mod}_q(\mathbf{x+x'}),\bS\rangle+z')$, 
	where $z'$ is the noise in this new LWE instance we generated. 
	We need to show $z'$ has distribution close to a independent  $\Dgaus_{\sigma'}$ noise 
	(independent of $\mathrm{mod}_q(\mathbf{x+x'})$). 		
\end{enumerate}
For (a), from the symmetry of $\bx$ and $\bx+\mathbf{u}$ where $\mathbf{u}\in \Z_q^n$, we have
\begin{align*}
d_{\mathrm{TV}}(\mathrm{mod}_q(\mathbf{x+x'}),U(\R_q^n))&
=d_{\mathrm{TV}}(\mathrm{mod}_q(\mathbf{x+x'})|\mathrm{mod}_q(\bx+\mathbf{x'})\in [0,1]^n,U(\R_1^n))\\
&=d_{\mathrm{TV}}(\mathrm{mod}_1(\mathbf{x'}),U(\R_1^n))\;.
\end{align*}
From Fact \ref{app:fct:bound-on-collaped-distribution}, 
we know that for a sufficiently large constant $c$, 
the distribution of $\mathrm{mod}_q(\mathbf{x'})$ 
is $O(\delta/m)$ close to $U(\R_q^n)$.  

For (b), consider 
$\bx\sim U(\Z_q^n), s\sim U(\{\pm 1\}^n), z\sim \Dgaus_{\sigma}\text{ and }y
=\mathrm{mod}_q( \langle \bx,\bS\rangle+z)$. 
Then the new sample satisfies 
$$y=\mathrm{mod}_q(\langle \mathrm{mod}_q(\mathbf{x+x'}),\bS\rangle+(-\langle \mathbf{x'},\bS\rangle+z)) \;,$$ 
where the new noise is $z'=-\langle \mathbf{x'},\bS\rangle+z$.
We now verify the distribution of noise. 
Conditioned on a fixed $\bx+\mathbf{x'}$, 
we have noise as $-\langle \mathbf{x'},\bS\rangle+z$, 
where 
$\mathbf{x'}\sim D^\mathrm{partial}_{\bx+\mathbf{x'}+\Z^n, \sigma_\ad}$ and 
$z\sim \Dgaus_{\sigma}$. 
From Lemma \ref{app:lem:discrete-gaussian-to-gaussian}, 
we have that the distribution of noise is $O(\delta/m)$ 
close to $\Dgaus_{\sigma'}$, for sufficiently large $c$. 
Overall, with $m$ samples, the distinguishing advantage 
has decreased by at most $O(\delta)$.

If the input instance is from the null hypothesis case, 
it is easy to verify that after the reduction, 
both $\bx$ and $\mathbf{y}$ will have the same 
marginal as in the alternative hypothesis case.
\end{proof}

The final step is to rescale the sample and noise by $1/q$.

\begin{lemma} \label{app:rem:resacling}
Suppose there is no $T+\poly(m,n,\log q)$ time distinguisher for the distribution \\
$\lwe(m,\R_q^n,\{\pm 1\}^n,\Dgaus_{\sigma},\mathrm{mod}_q)$
with advantage $\eps$.  
Then there is no $T$ time distinguisher for the distribution 
$\lwe(m,\R_1^n,\{\pm 1\}^n,\Dgaus_{\sigma'},\mathrm{mod}_1)$ with advantage $\eps$, 
where $\sigma'=\sigma/q$. 
\end{lemma}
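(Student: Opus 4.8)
The plan is to give a direct reduction that maps samples of $\lwe(m,\R_1^n,\{\pm 1\}^n,\Dgaus_{\sigma'},\mod_1)$ to samples of $\lwe(m,\R_q^n,\{\pm 1\}^n,\Dgaus_{\sigma},\mod_q)$ by scaling each coordinate of $\bx$ and the label $y$ by a factor of $q$, run the assumed $(T+\poly)$-time distinguisher for the latter, and report its answer. Concretely, given $(\bx,y)\in\R_1^n\times\R_1$, the reduction outputs $(q\bx, qy)\in\R_q^n\times\R_q$. This takes $\poly(m,n,\log q)$ time, so a $T$-time distinguisher for the $\R_1$ problem composed with this reduction yields a $(T+\poly(m,n,\log q))$-time distinguisher for the $\R_q$ problem, which contradicts the hypothesis; this gives the contrapositive form of the statement.

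The key steps are to verify that the reduction maps each hypothesis to the corresponding hypothesis exactly (no error term is introduced, hence the advantage $\eps$ is preserved). First I would handle the alternative hypothesis: if $\bx\sim U(\R_1^n)$ then $q\bx\sim U(\R_q^n)$; if $y=\mod_1(\langle\bx,\bs\rangle+\nsv)$ with $\nsv\sim\Dgaus_{\sigma'}$, then since multiplication by $q$ commutes with reduction in the sense that $q\cdot\mod_1(a)=\mod_q(qa)$, we get $qy=\mod_q(\langle q\bx,\bs\rangle+q\nsv)$, and $q\nsv\sim\Dgaus_{q\sigma'}=\Dgaus_{\sigma}$ by the stated relation $\sigma'=\sigma/q$ together with the scaling behavior of the continuous Gaussian $\Dgaus_{\sigma}$ (recall $\Dgaus_{\sigma}$ has covariance $\sigma^2/(2\pi)\cdot\bI$, so scaling a $\Dgaus_{\sigma'}$ variable by $q$ scales the standard deviation by $q$). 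The secret $\bs\in\{\pm 1\}^n$ is unchanged. Hence the image distribution is exactly the alternative hypothesis of $\lwe(m,\R_q^n,\{\pm 1\}^n,\Dgaus_{\sigma},\mod_q)$. Second, for the null hypothesis: if $\bx$ and $y$ are independent with $\bx\sim U(\R_1^n)$, $y\sim U(\R_1)$, then $q\bx\sim U(\R_q^n)$ and $qy\sim U(\R_q)$ remain independent, matching the null hypothesis of the target problem.

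Since the reduction is an exact bijective change of variables that carries each hypothesis onto the corresponding hypothesis with no statistical slack, the distinguishing advantage is preserved verbatim: a $T$-time, advantage-$\eps$ distinguisher for the $\R_1$ problem would give a $(T+\poly(m,n,\log q))$-time, advantage-$\eps$ distinguisher for the $\R_q$ problem. There is essentially no obstacle here — the only point requiring (minor) care is bookkeeping the identity $q\cdot\mod_1(a)=\mod_q(qa)$ coordinatewise and confirming the scaling law $q\cdot\Dgaus_{\sigma'}=\Dgaus_{q\sigma'}$ for the continuous Gaussian normalization used in the paper; both are immediate from the definitions in Section~\ref{sec:prelims}.
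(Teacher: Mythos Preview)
Your approach is essentially the paper's: the lemma follows by a trivial rescaling that carries each hypothesis exactly onto the corresponding one, so the advantage is preserved verbatim. The paper's proof is a single sentence to this effect.

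One small point of bookkeeping: you have the direction of the reduction reversed. To prove the contrapositive you assume a $T$-time distinguisher $A$ for the $\R_1$ problem and must build a $(T+\poly)$-time distinguisher for the $\R_q$ problem; that means you are \emph{given} $\R_q$ samples $(\bx,y)$ and should output $(\bx/q,\,y/q)\in\R_1^n\times\R_1$ to feed to $A$ (this is exactly the paper's ``rescale by $1/q$''). As written, your map $(\bx,y)\mapsto(q\bx,qy)$ goes the wrong way, and the sentence ``run the assumed $(T+\poly)$-time distinguisher for the latter'' refers to a distinguisher that, by hypothesis, does not exist. Since scaling by $q$ is a bijection, the fix is immediate and all of your verification steps (uniforms map to uniforms, $q\cdot\mod_1=\mod_q\circ(q\cdot)$, $\Dgaus_{\sigma'}$ scales to $\Dgaus_{q\sigma'}$, secret unchanged) go through unchanged with $1/q$ in place of $q$.
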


\begin{proof}
This follows simply by rescaling samples by $1/q$ 
and changing $\mathrm{mod}_q$ to $\mathrm{mod}_1$. 
Note the size of the secret remains unchanged here, 
but the noise is scaled by $1/q$. 
\end{proof}

\paragraph{Putting Things Together: Proof of Theorem~\ref{app:thm:main-lwe-reduction}}
\new{
Now we are ready to prove Theorem $\ref{app:thm:main-lwe-reduction}$.

\begin{proof} [Proof of Theorem $\ref{app:thm:main-lwe-reduction}$]
Suppose there is no $T+\poly(m,n,l,\log q,\log(1/\delta))$ time distinguisher for 
$\lwe(n,\Z_q^l,\Z_q^l,\Dgaus_{\Z,\sigma},\mod_q)$ with $\eps/m$ advantage.
Then, by applying Lemma \ref{app:lem:lwe-binary-secret}, we have that there is no 
$T+\poly(m,n,\log q,\log(1/\delta))$ time distinguisher for 
$\lwe(n,\Z_q^l,\{\pm 1\}^l,\Dgaus_{\Z,\sigma_1},\mod_q)$
with $2\eps+\delta^{\omega_n(1)}$ advantage, 
where $\sigma_1=2\sigma\sqrt{n+1}$.

Then, we apply Lemma \ref{app:lem:lwe-continuous-noise}, 
Lemma \ref{app:lem:lwe-continuous-sample}, and Corollary \ref{app:rem:resacling}.
It follows that there is no time $T$ distinguisher for 
$\lwe(m,\R_1^n,\{\pm 1\}^n,\gaus_{\sigma'},\mod_1)$ 
with $2\eps+O(\delta)$ advantage, 
where $\sigma'\geq \sqrt{\sigma_1^2+cn\log(mn/\delta)}/q$.

Recalling that $\sigma=\omega(\sqrt{\log (mn/\delta)})$, $\sigma_1=2\sigma\sqrt{n+1}$ 
and $\sigma'\geq \sqrt{\sigma_1^2+cn\log(mn/\delta)}/q$,
we have that $\sigma'=c\sqrt{n}\sigma/q$ is sufficient.
\end{proof}
}

\section{Omitted Proofs from Section \ref{sec:main-reduction}}
\label{app:main-reduction}

This section includes additional details of our Massart halfspace hardness reduction, 
and contains the proofs omitted from Section \ref{sec:main-reduction}.

We start with Figure \ref{fig:flow}, which shows a rough flow chart of the reduction algorithm 
and its relation with the relevant theorems and lemmas. 
\begin{figure}[H] 
	\includegraphics[width=16.4cm]{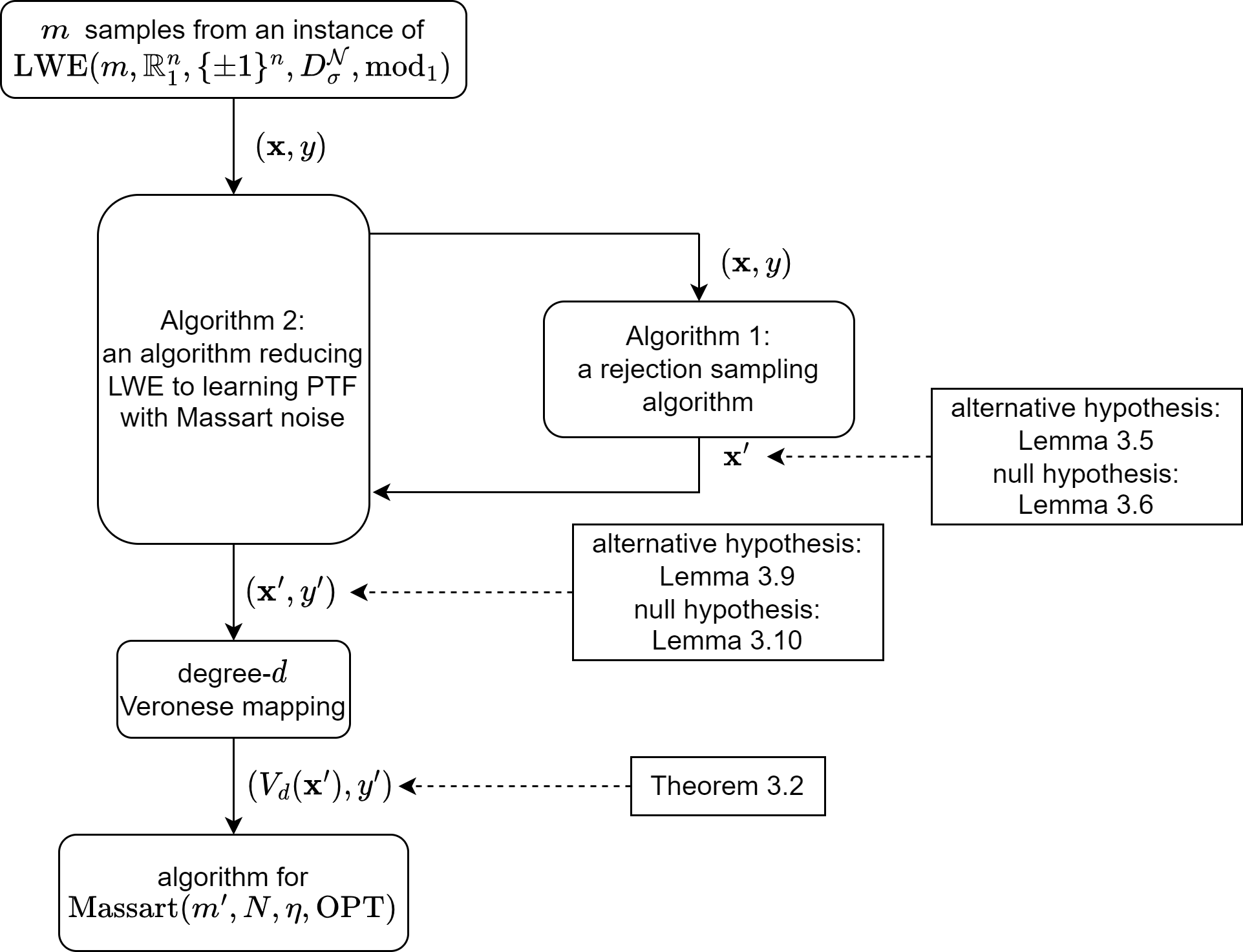}
	\caption{\textbf{Reducing LWE to Learning Halfspaces with Massart Noise.} 
	The diagram shows which step of the analysis each lemma is used for and 
	which case of the input LWE instance (alternative/null hypothesis case) for the reduction algorithm. 
	Lemma \ref{lem:rejection-alternative-distribution} (resp. Lemma \ref{lem:rejection-null-distribution}) 
	analyzes the properties of $\bx'$ when the input LWE instance is 
	from the alternative hypothesis case (resp. null hypothesis case).
	Lemma \ref{lem:ptf-alt-distr} (resp. Lemma \ref{app:lem:ptf-nul-distr}) analyzes 
	the properties of $(\bx',y')$ when the input LWE instance is from 
	the alternative hypothesis case (resp. null hypothesis case).
	Theorem \ref{thm:lwe-to-massart} analyzes the properties of $(V_d(\bx'),y')$ 
	for the input LWE instance from both cases.}  \label{fig:flow}
\end{figure}

\subsection{Illustration of Hard Instances} \label{app:figures}

For the sake of intuition, we additionally present the following figures 
to illustrate the ideas behind our construction. 
Figure \ref{fig:dk-construction} illustrates the original \cite{diakonikolas2021near} construction, 
as discussed in Section \ref{sec:main-reduction}.

\begin{figure}[H] 
	\includegraphics[width=14cm]{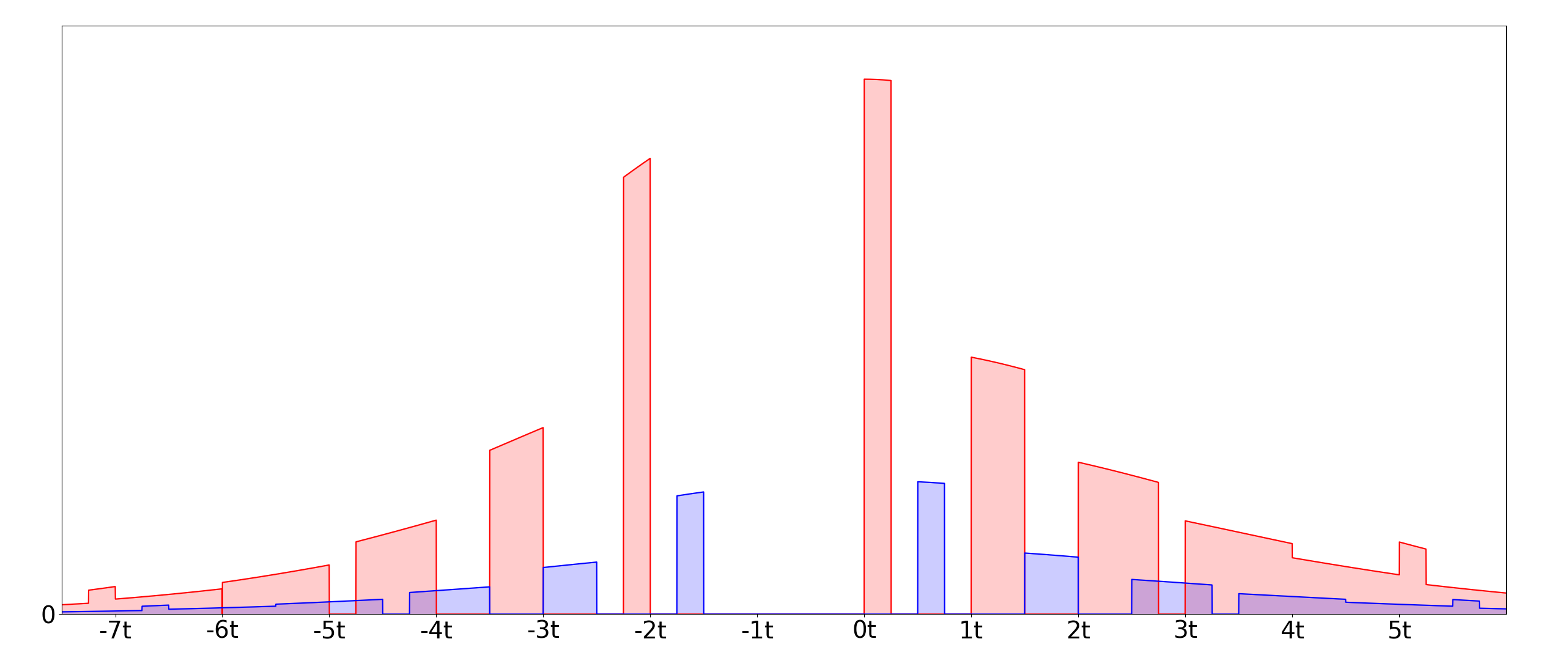}
	\caption{The original SQ-hard construction in \cite{diakonikolas2021near} 
	\newblue{(specifically, the univariate distribution in the hidden direction)}. 
    The red part corresponds to $\pr[y=+1]P_{\left [\bx^s\mid (y=+1)\right ]}(\cdot)$; 
    the blue part corresponds to $\pr[y=-1]P_{\left [\bx^s\mid (y=-1)\right ]}(\cdot)$.} 
    \label{fig:dk-construction}
\end{figure}
As discussed in Section \ref{subsec:carving-explaination}, 
if we try to replace the ``hidden direction discrete Gaussian'' with its noisy variant, 
we will get a construction as the one 
illustrated in Figure \ref{fig:dk-construction-w-noise}.
\begin{figure}[H] 
	\includegraphics[width=14cm]{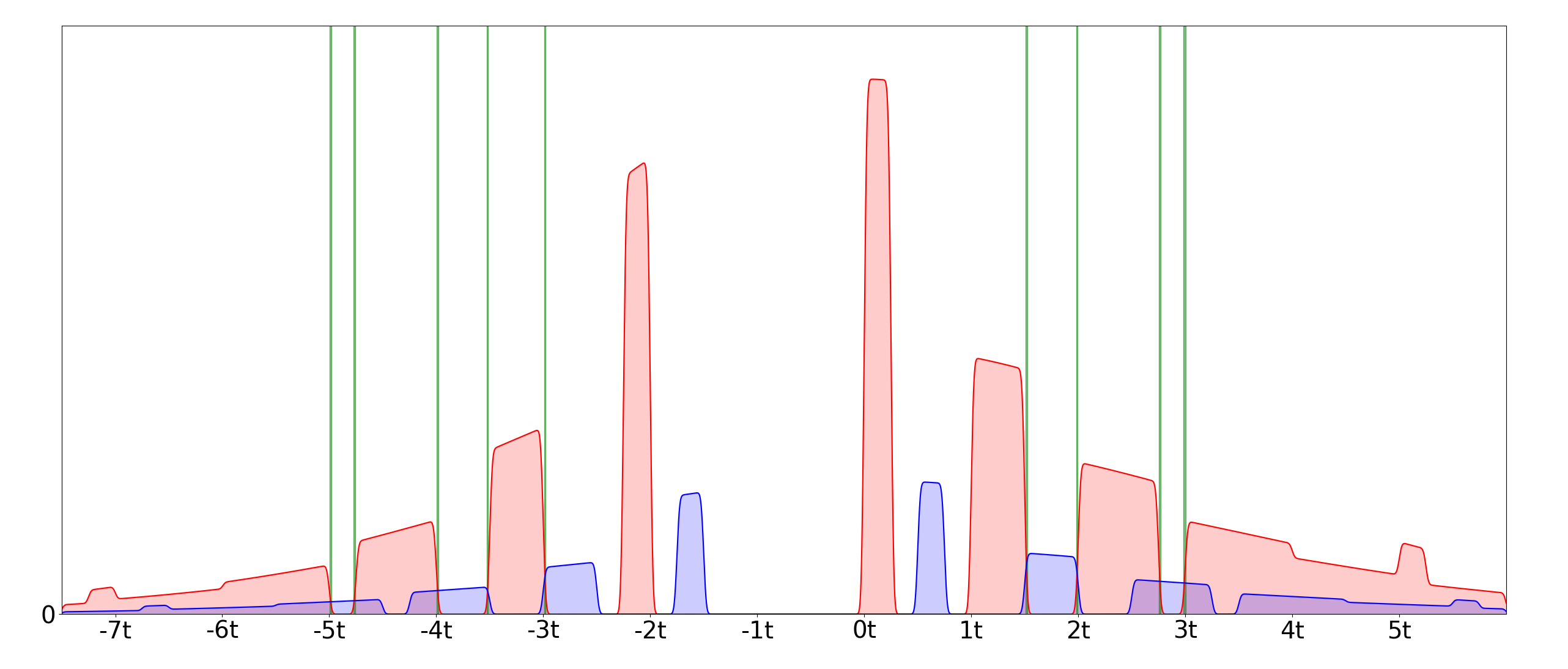}
	\caption{The \cite{diakonikolas2021near} construction with a noisy ``hidden direction discrete Gaussian''. 
	The red part corresponds to $\pr[y=+1]P_{\left [\bx^s\mid (y=+1)\right ]}(\cdot)$, 
	and the blue part corresponds to \mbox{$\pr[y=-1]P_{\left [\bx^s\mid (y=-1)\right ]}(\cdot)$}. 
	Notice the thin green intervals where 
	$\frac{\pr[y=+1]P_{\left [\bx^s\mid (y=+1)\right ]}(\cdot)}
	        {\pr[y=-1]P_{\left [\bx^s\mid (y=-1)\right ]}(\cdot)}$ is close to 1; 
	the labels $+1$ and $-1$ are close to equally likely in these regions, 
	and thus violate the Massart noise condition.
	(\newblue{Notice that, as we explained in Section \ref{subsec:carving-explaination}, 
	there are other places where $\frac{\pr[y=+1]P_{\left [\bx^s\mid (y=+1)\right ]}(\cdot)}
	{\pr[y=-1]P_{\left [\bx^s\mid (y=-1)\right ]}(\cdot)}$ is close to $1$; 
	however, the density of these regions is negligible.})}  
	\label{fig:dk-construction-w-noise}
\end{figure}

Our idea is to modify the \newblue{above} construction by carving empty 
slots on the support of $\bx^s\mid (y=-1)$, so the ``clean'' version 
of the construction (using the ``hidden direction discrete Gaussian'' without noise) 
is as presented in Figure \ref{fig:construction-wo-noise}. 
\begin{figure}[H] 
	\includegraphics[width=14cm]{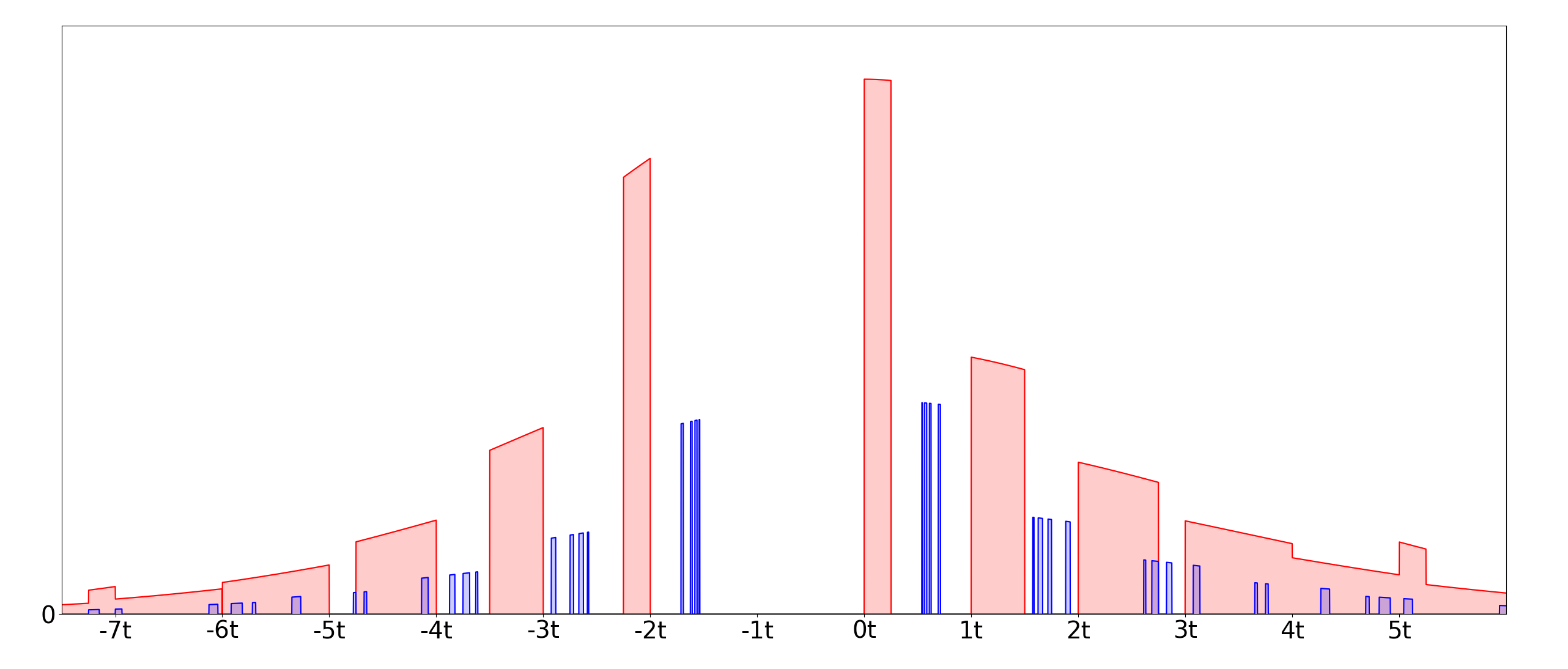}
	\caption{The modified hardness construction with the clean ``hidden direction discrete Gaussian''. 
	The red part corresponds to $\pr[y=+1]P_{\left [\bx^s\mid (y=+1)\right ]}(\cdot)$ and 
	the blue part corresponds to $\pr[y=-1]P_{\left [\bx^s\mid (y=-1)\right ]}(\cdot)$. 
	Compared with the original construction, 
	we carve empty slots on $\bx^s\mid (y=-1)$ for the problematic part to fit in. 
	Since the total mass we carve out is at most a constant fraction, 
	this causes the density to increase by at most a constant multiplicative factor.}  \label{fig:construction-wo-noise}
\end{figure}
With the noisy ``hidden direction discrete Gaussian'', the modified construction is as presented in 
Figure \ref{fig:construction-w-noise}.
\begin{figure}[H] 
	\includegraphics[width=14cm]{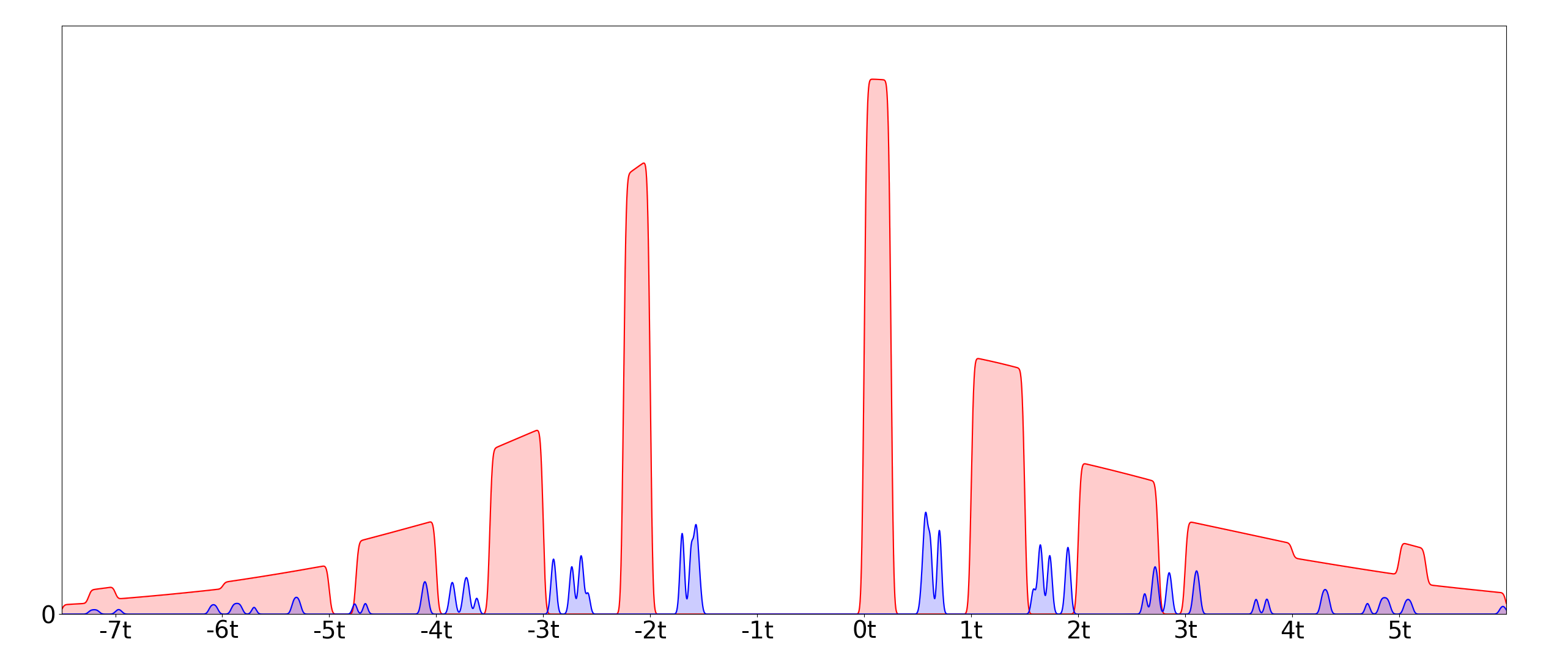}
	\caption{The modified construction with the noisy ``hidden direction discrete Gaussian''. 
	The red part corresponds to $\pr[y=+1]P_{\left [\bx^s\mid (y=+1)\right ]}(\cdot)$ 
	and the blue part corresponds to $\pr[y=-1]P_{\left [\bx^s\mid (y=-1)\right ]}(\cdot)$.}  \label{fig:construction-w-noise}
\end{figure}
\label{app:subsec:basic-rejection}
It is worth noting that even this modified construction does not perfectly satisfy the Massart condition. 
However, the part that violates the Massart condition is negligibly small, 
thus it is very close to a distribution that perfectly satisfies the Massart condition.

\subsection{Omitted Proofs from Section \ref{subsec:basic-rejection}} 
\newblue{
Here we give the proof for Lemma \ref{lem:basic-rs}.
Lemma \ref{lem:basic-rs} is stated only for the sake of intuition, 
and is not needed for the proof of our main theorem.
} 

\begin{proof} [Proof of Lemma \ref{lem:basic-rs}]
Notice $y=\mod_1(\langle \bs,\bx\rangle+z)$.
We rename the variables as $\bx_\nw\eqdef \bx$, $\sigma'_\scl\eqdef\sigma_\scl$ and $z_\nw=z$ 
then we have
$$\bx'\sim (1/\sigma'_\scl)\circ\Dgaus_{\bx_\nw+\Z^n,\sigma'_\scl}|(y=y')$$ where
$\bx_\nw\sim U(\R_1^n)$, $z_\nw\sim \Dgaus_\sigma$
are independent
and $y=\mod_1(\langle\bs,\bx_\nw\rangle+z_\nw)$.
This is the exact same distribution we calculated in 
the proof of Lemma \ref{lem:rejection-alternative-distribution} 
(after we rewrote the distribution in Lemma \ref{lem:rejection-alternative-distribution}). 
The same calculation gives the lemma statement here.
\end{proof}

\subsection{Omitted Proofs from Section \ref{subsec:complete-construction}} \label{app:subsec:complete-construction}

\subsubsection{Analysis of Algorithm \ref{algo:sampling}} 
\label{subsubsec:parameter-validity}

We first prove that the parameters defined in Algorithm \ref{algo:sampling} are well posed. 
In particular, we need to show that the expression inside the square root in Step~\ref{lne:output-sampling} of 
Algorithm \ref{algo:sampling} is positive.

\begin{observation} \label{app:obv:parameter-validity}
Let $\sr, \sigma_\scl$ be as in Algorithm~\ref{algo:sampling}. 
Then $\sr \geq 1/2$ and $(1 - \sr)\sigma_\scl^2 \geq \sr(\sigma/\sqrt{n})^2$.
\end{observation}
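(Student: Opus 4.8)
The plan is to unwind the definitions in Algorithm~\ref{algo:sampling} and reduce both inequalities to the elementary bound $\sr\geq 1/2$, which in turn follows from Condition~\ref{cond:params}. First I would record two trivial consequences of the setup: since $k\in B\subseteq[\psi,\psi+\eps]$, the offset satisfies $t\leq t+k-\psi\leq t+\eps$; and since $\psi+\eps\leq t$ with $\psi>0$, we have $\eps<t$, hence $t+\eps<2t$.

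Next I would bound $1-\sr=4(t+\eps)^2\sigma^2$. Squaring Condition~\ref{cond:params}(iii) gives $t^2 n\leq 1/(c\log(n/\delta))\leq 1/c$ (using $\log(n/\delta)\geq 1$ in the relevant parameter regime), and Condition~\ref{cond:params}(ii) gives $\sigma^2\leq n$. Combining these, $1-\sr=4(t+\eps)^2\sigma^2\leq 16t^2\sigma^2\leq 16\sigma^2/(cn)\leq 16/c\leq 1/2$ once $c$ is a large enough universal constant. This yields $\sr\geq 1/2$, which is the first claim.

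For the second claim I would substitute $\sigma_\scl=\sr/((t+k-\psi)\sqrt{n})$ and use $t+k-\psi\leq t+\eps$:
$$(1-\sr)\sigma_\scl^2=\frac{4(t+\eps)^2\sigma^2\,\sr^2}{(t+k-\psi)^2\,n}\geq\frac{4\sigma^2\sr^2}{n}=4\sr\cdot\sr\frac{\sigma^2}{n}\geq\sr\frac{\sigma^2}{n}=\sr(\sigma/\sqrt{n})^2,$$
where the last step uses $4\sr\geq 1$, which follows from $\sr\geq 1/2$. This proves the observation, and in particular shows that the numerator under the square root defining $\sigma_\ad$ is nonnegative, so $\sigma_\ad$ is well defined. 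There is no genuine obstacle here; the only points requiring a little care are tracking exactly which items of Condition~\ref{cond:params} are being invoked and noting that $\log(n/\delta)\geq 1$ so that item (iii) indeed gives $t^2 n\leq 1/c$.
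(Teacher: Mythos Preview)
Your proof is correct and follows essentially the same approach as the paper: both use Conditions~\ref{cond:params}(i)--(iii) together with $\eps\leq t$ to bound $1-\sr=4(t+\eps)^2\sigma^2$ by a quantity that is at most $1/2$ for large $c$, and then substitute $\sigma_\scl=\sr/((t+k-\psi)\sqrt{n})$ with $t+k-\psi\leq t+\eps$ for the second inequality. The only cosmetic difference is that the paper bounds $(t+\eps)\sigma$ directly and computes the ratio $(1-\sr)\sigma_\scl^2/(\sr(\sigma/\sqrt{n})^2)\geq 2$, whereas you square at the end and bound the left-hand side directly; the content is the same. (One tiny remark: you invoke $\psi>0$ to get $\eps<t$, but $\psi\geq 0$ already gives $\eps\leq t$ from $\psi+\eps\leq t$, which is all you need.)
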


\begin{proof}
The first three conditions of the parameters in Condition~\ref{cond:params} 
yield $(t + \eps)\sigma \leq 2t\sigma \leq 2t\sqrt{n} \leq 2/\sqrt{c\log(n/\delta)}$, 
which is at most $1/8$ for sufficiently large $c$. This implies that $\sr \geq 1/2$.

Moreover, from our choice of $\sigma_\scl$, we have
\begin{align*}
\frac{(1 - \sr)\sigma_\scl^2}{\sr(\sigma/\sqrt{n})^2}
= \frac{\sr(1 - \sr)}{((t + k - \psi)\sigma)^2}
\geq \frac{(1/2)(4(t + \eps)^2\sigma^2)}{((t + \eps)\sigma)^2} 
= 2. & 
\end{align*}
\end{proof}
\newblue{
We note that Observation \ref{app:obv:parameter-validity} 
implies that the expression inside the square root in Step~\ref{lne:output-sampling} of 
Algorithm \ref{algo:sampling} is positive; 
therefore, $\sigma_\ad$ in Step~\ref{lne:output-sampling} is real.
}
It is also useful to observe some other properties 
of the parameters that will be useful later 
in the proof of Lemma \ref{lem:rejection-alternative-distribution}.

\begin{observation} \label{app:obv:parameter-property}
Let $\sr, \sigma_\scl$ be as in Algorithm~\ref{algo:sampling}. 
Then $\sigma_\scl\geq \frac{1}{2(t+k-\psi)\sqrt{n}}$ 
and $\sr=\frac{\sigma_\scl^2}{\sigma_\scl^2+\sigma_\ad^2+\sigma^2/n}$.
\end{observation}

\begin{proof}
The observation $\sigma_\scl\geq \frac{1}{2(t+k-\psi)\sqrt{n}}$ follows from 
$\sr\geq 1/2$ (Observation \ref{app:obv:parameter-validity}) and $\sigma_\scl= \frac{\sr}{(t+k-\psi)\sqrt{n}}$.
Then $\sr=\frac{\sigma_\scl^2}{\sigma_\scl^2+\sigma_\ad^2+\sigma^2/n}$ follows from the definition of 
$\sigma_\ad$ in Algorithm \ref{algo:sampling}.
\end{proof}

We next prove a lower bound on the acceptance probability of Algorithm \ref{algo:sampling}.

\begin{lemma} \label{app:lem:sampling-efficiency}
If $y \sim U(\R_1)$, then Algorithm \ref{algo:sampling} accepts with probability at least
$\Omega(\frac{\lambda(B)(t-\psi)}{t^2})$.
\end{lemma}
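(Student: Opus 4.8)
The plan is to observe that Step~\ref{lne:output-sampling} of Algorithm~\ref{algo:sampling} never rejects (and is well-defined by Observation~\ref{app:obv:parameter-validity}), so the acceptance probability equals the probability that the sample survives Steps~\ref{lne:first-rejection} and~\ref{lne:second-rejection}, an event depending only on $y$. First I would analyze Step~\ref{lne:first-rejection}: define $g\colon[\psi,\psi+\eps]\to\R$ by $g(k)=k/(t+k-\psi)$. Since $\psi>0$ and $\psi+\eps\le t$ (an input requirement of Algorithm~\ref{algo:sampling}), we have $\psi<t$, hence $g'(k)=(t-\psi)/(t+k-\psi)^2>0$, so $g$ is strictly increasing and therefore a bijection from $B$ onto $g(B)$, with $g(B)\subseteq[g(\psi),g(\psi+\eps)]=[\psi/t,(\psi+\eps)/(t+\eps)]\subseteq[0,1)$. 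Consequently a sample passes Step~\ref{lne:first-rejection} exactly when $y\in g(B)$, and in that case the parameter $k$ used in the subsequent steps is the unique preimage $g^{-1}(y)\in B$.

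Next I would carry out the (routine) computation. Since $y\sim U(\R_1)$ has density $1$ on $[0,1)$, and conditioned on a value $k\in B$ Step~\ref{lne:second-rejection} accepts with probability $t^2/(t+k-\psi)^2$ (a valid probability because $k\ge\psi$) while Step~\ref{lne:output-sampling} always accepts, the overall acceptance probability is
\[
\int_{g(B)}\frac{t^2}{\bigl(t+g^{-1}(y)-\psi\bigr)^2}\,dy
=\int_B\frac{t^2}{(t+k-\psi)^2}\,g'(k)\,dk
=\int_B\frac{t^2(t-\psi)}{(t+k-\psi)^4}\,dk,
\]
where the first equality is the change of variables $y=g(k)$. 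For $k\in[\psi,\psi+\eps]$ we have $t+k-\psi\in[t,t+\eps]$, and since $\eps\le t-\psi\le t$ we also get $t+\eps\le 2t$, so $(t+k-\psi)^4\le 16t^4$. Hence the integrand is at least $(t-\psi)/(16t^2)$, and the integral is at least $\lambda(B)(t-\psi)/(16t^2)=\Omega\!\left(\lambda(B)(t-\psi)/t^2\right)$, which is the desired bound.

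The only facts beyond routine estimation that I would need to verify are the two structural properties of Step~\ref{lne:first-rejection}: that $g$ is injective on $[\psi,\psi+\eps]$ — so that the event $\{y\in g(B)\}$ carries exactly the mass $\lambda(g(B))=\int_Bg'$ and the conditioning on $k$ is unambiguous — which is immediate from $g'>0$; and that $g(B)\subseteq[0,1)$, so that the uniform density on $\R_1$ equals $1$ throughout $g(B)$. Since in all our applications $B$ is a finite union of intervals (see the definitions of $B_+,B_-$ in Algorithm~\ref{algo:reduction}), the change-of-variables step is valid with no measure-theoretic subtlety, and I do not expect any genuine obstacle in this lemma.
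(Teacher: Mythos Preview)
Your proof is correct and follows essentially the same approach as the paper: both use the map $g(k)=k/(t+k-\psi)$ with derivative $g'(k)=(t-\psi)/(t+k-\psi)^2$ to control the mass of $g(B)$, and both bound the second rejection via $t+k-\psi\le t+\eps\le 2t$. The only stylistic difference is that the paper treats Steps~\ref{lne:first-rejection} and~\ref{lne:second-rejection} separately (first bounding $\lambda(g(B))\ge \min_k g'(k)\cdot\lambda(B)$, then bounding the conditional acceptance of Step~\ref{lne:second-rejection} by $1/4$), whereas you fold both into a single change-of-variables integral; your version is slightly cleaner and gives an explicit constant. One cosmetic remark: the clause ``since $\psi>0$'' is not actually needed (and in the paper's application $\psi_+=0$); the inequality $\psi<t$ already follows from $\psi+\eps\le t$ and $\eps>0$.
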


\begin{proof} 
The probability of a sample not being rejected by the first rejection in Step \ref{lne:first-rejection} of Algorithm \ref{algo:sampling} is 
$$\lambda \left (\left \{\frac{k}{t+(k-\psi)},k\in B\right \}\right ) 
\geq  \min_{k\in B}\left [\frac{d\left (\frac{k}{t+(k-\psi)}\right )}{dk}\right ]\cdot\lambda(B)
\geq \frac{\lambda(B)(t-\psi)}{(t+\eps)^2}=\Omega\left (\frac{\lambda(B)(t-\psi)}{t^2}\right ) \;,$$ 
where the inequality holds because $\frac{k}{t+(k-\psi)}$ is monotone increasing for $k\in B$. 
Conditioned on passing the first rejection, the second round rejects with probability $1-\frac{t^2}{(t+k-\psi)^2},$ 
which is at most $3/4$ since $t+k-\psi\leq t+\eps\leq 2t$.
Therefore, a sample is accepted with probability at least $\Omega\left (\frac{\lambda(B)(t-\psi)}{t^2}\right)$.
\end{proof}

\subsubsection{Proof of Lemma \ref{lem:rejection-alternative-distribution}}

\begin{proof} [Proof of Lemma \ref{lem:rejection-alternative-distribution}] 
{\bf We first prove property (i).}
We first review the definition of $D_{t,\epsilon,\psi,B,\delta}^\alt$. 
We have $(\bx,y)$ from the alternative hypothesis case of the LWE problem. 
Namely, this means we have an unknown $\mathbf{s}\in \{\pm 1\}^n$ 
and independently sampled $\bx\sim U(\R_1^n)$ and $z\sim \Dgaus_{\sigma}$. 
Then $y=\mod_1(\langle \bx,\mathbf{s}\rangle +z )$. 
We reject unless $y=\frac{k}{t+(k-\psi)}$ for some $k\in B$. 
Then the secondary rejection step rejects with probability $1-\frac{t^2}{(t+k-\psi)^2}$.
We calculate $\sigma_\ad$ and $\sigma_\scl$ which only depend on $y$ 
and output a sample
$$\bx'\sim (1/\sigma_\scl)\circ \Dgaus_{\bx+\bx_\ad+\Z^n,\sigma_\scl}\; ,$$
where $\bx_\ad\sim \Dgaus_{\R^n,\sigma_\ad}$.
Let $k$ be the value such that $y'=\frac{k}{t+(k-\psi)}$. Then since $y\sim U(\R_1)$
\begin{align*}
D_{t,\epsilon,\psi,B,\delta}^\alt
&\propto \int_0^1 \idt(k\in B)
\left (\frac{t^2}{(t+k-\psi)^2} \right )
\left [(1/\sigma_\scl)\circ \Dgaus_{\bx+\bx_\ad+\Z^n,\sigma_\scl}\; 
\bigg | \left (y=y'\right )\right ] dy'\\
&\propto \int_B 
\left [(1/\sigma_\scl)\circ \Dgaus_{\bx+\bx_\ad+\Z^n,\sigma_\scl}\; 
\bigg | \left (y=\frac{k}{t+(k-\psi)}\right )\right ] dk\;.
\end{align*}
Taking the proper scaling factor gives 
\begin{align*}
D_{t,\epsilon,\psi,B,\delta}^\alt
&= \frac{1}{\lambda(B)}\int_B 
\left [(1/\sigma_\scl)\circ \Dgaus_{\bx+\bx_\ad+\Z^n,\sigma_\scl}\; 
\bigg | \left (y=\frac{k}{t+(k-\psi)}\right )\right ] dk \;.
\end{align*}
We will prove that for any fixed \newblue{$k$}, 
\newblue{
letting 
$\bx_k'\sim (1/\sigma_\scl)\circ \Dgaus_{\bx+\bx_\ad+\Z^n,\sigma_\scl}\; 
\bigg | \left (y=\frac{k}{t+(k-\psi)}\right )$,
that
$$P_{\bx_k'^\bs}(u)=(1\pm O(\delta)) P_{\Dgaus_{k+(t+k-\psi)\Z,\sigma_\sn}\star \Dgaus_{\sigma_\ns}}(u)\; .$$
}

We use $\sigma_\ad'$ and $\sigma_\scl'$ to denote the specific values of $\sigma_\ad$ and $\sigma_\scl$
for $y=\frac{k}{t+(k-\psi)}$.
Then 
\begin{align*}
\newblue{D_{\bx_k'}}
&=(1/\sigma_\scl)\circ \Dgaus_{\bx+\bx_\ad+\Z^n,\sigma_\scl}\; \mid
\left (y=\frac{k}{t+(k-\psi)} \right )\\
&=(1/\sigma_\scl)\circ \Dgaus_{\mod_1(\bx+\bx_\ad)+\Z^n,\sigma_\scl}\; \mid
\left (y=\frac{k}{t+(k-\psi)} \right )\\
&=(1/\newblue{\sigma_\scl'})\circ \Dgaus_{\mod_1(\bx+\bx'_\ad)+\Z^n,\sigma'_\scl}\; \mid\left (y=\frac{k}{t+(k-\psi)} \right ) \;,
\end{align*}
where $\bx_\ad'\sim \Dgaus_{\R^n,\sigma_\ad'}$.
Now we will attempt to reason about this random process and replace it with something equivalent.
Letting $\bx_\nw=\mod_1(\bx+\bx'_\ad)$ and $z_\nw=-\langle \bx'_\ad,\mathbf{s}\rangle+z$, we notice that 
$$y=\mod_1(\langle \bx_\nw,\mathbf{s}\rangle +z_\nw)\;.$$ 
We show that $\bx_\nw$ and $z_\nw$ are independent before conditioning on $y$,
therefore we can instead consider the random process 
as sampling independent
$\bx_\nw$ and $z_\nw$ and then conditioning on $y$.
Note that $\bx_\nw=\mod_1(\bx+\bx'_\ad)\sim U(\R_1^n)$. 
If we condition on any fixed $\bx'_\ad$, the conditional distribution of $\bx_\nw$ is always $U(\R_1^n)$, 
therefore $\bx'_\ad$ and $\bx_\nw$ are independent. 
Since both $\bx'_\ad$ and $z$ are independent of $\bx_\nw$, we have that
$z_\nw=-\langle \bx'_\ad,\mathbf{s}\rangle+z$ is also independent of $\bx_\nw$. 
Therefore,
$$D_{\newblue{\bx_k'}}
=(1/\newblue{\sigma_\scl'})\circ \Dgaus_{\bx_\nw+\Z^n,\sigma_\scl'}\; \mid
\left (\mod_1(\langle \bx_\nw,\mathbf{s}\rangle +z_\nw)=y'\right )\; ,$$
where we can interpret $\bx_\nw$ and $z_\nw$ as independent samples
$\bx_\nw\sim U(\R_1^n)$ 
and $z_\nw\sim \Dgaus_{\sqrt{\sigma^2+\|\mathbf{s}\|_2^2\newblue{\sigma_{\ad}'^2}}}$.

Now let $\bw \sim (1/\newblue{\sigma_\scl'})\circ\Dgaus_{\bx_\nw+\Z^n,\sigma'_\scl}$. Since $\bx_\nw \sim U(\R_1^n)$, we have  $\Dgaus_{\bx_\nw+\Z^n,
\sigma_\scl'}
=D^{\mathrm{expand}}_{\R_1^n,\sigma_\scl'}\;$.
\newblue{
From the lower bound of $\sigma_\scl$ in Observation \ref{app:obv:parameter-property} and Condition (iii) in Condition \ref{cond:params}, we have 
$$\sigma_\scl'\geq \frac{1}{2(t+k-\psi)\sqrt{n}}\geq \frac{1}{4t\sqrt{n}}\geq \sqrt{c\log (n/\delta)} \;.$$}
Therefore, applying Fact \ref{app:fct:bound-on-collaped-distribution} yields $$P_{\bw}(\bu)=(1\pm O(\delta)) P_{\Dgaus_{\R^n,1}}(\bu)\; .$$
Since $\mathbf{s}\in \{\pm 1\}^n$ and $\mod_1(\bx_\nw)=\mod_1(\newblue{\sigma_\scl'}\mathbf{w})$, we have that
$$\mod_1(\langle \bx_\nw,\bs\rangle )=\mod_1(\langle \newblue{\sigma_\scl'}\mathbf{w},\bs\rangle )\; .$$ 
We can now rewrite the distribution as
\begin{align*}
    D_{\newblue{\bx_k'}}
    &=(1/\newblue{\sigma_\scl'})\circ\Dgaus_{\mod_1(\bx+\bx'_\ad)+\Z^n,\sigma'_\scl}\;  \mid\left (y=\frac{k}{t+(k-\psi)} \right )\\
	&=(1/\newblue{\sigma_\scl'})\circ\Dgaus_{\bx_\nw+\Z^n,\sigma'_\scl}\;  \mid
    \left (\mod_1(\langle \bx_\nw,\mathbf{s}\rangle +z_\nw)=\frac{k}{t+(k-\psi)}\right )\\
	&=\mathbf{w}\; \big |\left (\mod_1(\langle \newblue{\sigma_\scl'}\bw,\mathbf{s}\rangle +z_\nw)
	=\frac{k}{t+(k-\psi)}\right ) \\
	&=\mathbf{w}\; \big |\left (\mod_1(\newblue{\sigma_\scl'}\|\bs\|_2\bw^\bs +z_\nw)
	=\frac{k}{t+(k-\psi)}\right )\; ,
\end{align*}
where 
$$P_{\mathbf{w}}(\mathbf{u})=(1\pm O(\delta))P_{\Dgaus_{\R^n,1}}(\mathbf{u})\; ,$$
and
$$z_\nw\sim \Dgaus_{\sqrt{\sigma^2+\|\mathbf{s}\|_2^2\newblue{\sigma_\ad'^2}}}$$ 
are independently sampled (this follows from $\bx_\nw$ and $z_\nw$ being independent, since $\bw$ solely
depends on $\bx_\nw$).
The above form allows us the explicitly 
express the PDF function as follows:
\begin{align*}
P_{\newblue{\bx_k'^\bs}}(u)
\propto & P_{\bw^\bs}(u)\sum_{i\in \Z+\frac{k}{t+(k-\psi)}}P_{z_\nw}(i-\newblue{\sigma_\scl'}\|\bs\|_2 u)\\
\propto & (1\pm O(\delta))\sum_{i\in \Z+\frac{k}{t+(k-\psi)}}P_{\Dgaus_1}(u)
	P_{\Dgaus_{\sqrt{\sigma^2+\|\bs\|_2^2\newblue{\sigma_{\ad}'^2}}}}(i-\newblue{\sigma_\scl'}\|\bs\|_2 u)\\
\propto & (1\pm O(\delta))\sum_{i\in \newblue{\sigma_\scl'^{-1}}n^{-1/2}\left (\Z+\frac{k}{t+(k-\psi)}\right )}P_{\Dgaus_1}(u)
	P_{\Dgaus_{\sqrt{\frac{\sigma^2/n+\newblue{\sigma_\ad'^2}}{\newblue{\sigma_\scl'^2}}}}}(i-u) \;.\\
\end{align*}
Letting $\alpha\eqdef \newblue{\sqrt{\frac{\sigma^2/n+\sigma_\ad'^2}{\sigma_\scl'^2}}}$, we notice that
$\alpha=\sqrt{1/\sr-1}$ \newblue{from Observation \ref{app:obv:parameter-property}}. Thus, we can write
\newblue{
\begin{align*}
P_{\newblue{\bx_k'^\bs}}(u)
\propto & (1\pm O(\delta))\sum_{i\in \newblue{\sigma_\scl'^{-1}}n^{-1/2}\left (\Z+\frac{k}{t+(k-\psi)}\right )}P_{\Dgaus_1}(u)
	P_{\Dgaus_{\alpha}}(i-u)\\
\propto & (1\pm O(\delta))\sum_{i\in \newblue{\sigma_\scl'^{-1}}n^{-1/2}\left (\Z+\frac{k}{t+(k-\psi)}\right )}
	\exp\left (-\pi\left (u^2+\frac{(i-u)^2}{\alpha^2}\right )\right )\\
\propto & (1\pm O(\delta))\sum_{i\in \newblue{\sigma_\scl'^{-1}}n^{-1/2}\left (\Z+\frac{k}{t+(k-\psi)}\right )}
	\exp\left (-\pi\left (\frac{(\alpha^2+1)u^2}{\alpha^2}+\frac{i^2}{\alpha^2}-\frac{2iu}{\alpha^2}\right )\right )\\
\propto & (1\pm O(\delta))\sum_{i\in \newblue{\sigma_\scl'^{-1}}n^{-1/2}\left (\Z+\frac{k}{t+(k-\psi)}\right )}
	\exp\left (-\pi\left (\frac{i^2}{\alpha^2+1}+\frac{\left ((\alpha^2+1)^{1/2}u-(\alpha^2+1)^{-1/2}i\right )^2}{\alpha^2}\right )\right )\\
\propto & (1\pm O(\delta))\sum_{i\in \sigma_\scl'^{-1}n^{-1/2}\left (\Z+\frac{k}{t+(k-\psi)}\right )}
	\exp\left (-\pi\left (\frac{\left ((\alpha^2+1)^{-1}i\right )^2}{(\alpha^2+1)^{-1}}+\frac{\left (u-(\alpha^2+1)^{-1}i\right )^2}{\alpha^2(\alpha^2+1)^{-1}}\right )\right )\\
\propto & (1\pm O(\delta))\sum_{i\in \sigma_\scl'^{-1}n^{-1/2}\left (\Z+\frac{k}{t+(k-\psi)}\right )}
	P_{\Dgaus_{(\alpha^2+1)^{-1/2}}}((\alpha^2+1)^{-1}i) P_{\Dgaus_{\alpha(\alpha^2+1)^{-1/2}}}(u-(\alpha^2+1)^{-1}i)   \; .
\end{align*} }
Notice that $(\alpha^2+1)^{-1}=\sr$ from Observation \ref{app:obv:parameter-property}. Thus, we get
\begin{align*}
P_{\newblue{\bx_k'^\bs}}(u)
    \propto & (1\pm O(\delta))\sum_{i\in \sigma_\scl^{-1}n^{-1/2}
    \left (\Z+\frac{k}{t+(k-\psi)}\right )}
    P_{\Dgaus_{\sqrt{\sr}}}(\sr i)
	P_{\Dgaus_{\sqrt{1-\sr}}}(\sr i-u)\\    
    \propto & (1\pm O(\delta))\sum_{i\in \sr\sigma_\scl^{-1}n^{-1/2}
    \left (\Z+\frac{k}{t+(k-\psi)}\right )}
    P_{\Dgaus_{\sqrt{\sr}}}(i)
	P_{\Dgaus_{\sqrt{1-\sr}}}(i-u)\; .
\end{align*}
Since $\sigma_\scl = \frac{\sr}{(t + k - \psi)\sqrt{n}}$ from Step~\ref{lne:output-sampling} of Algorithm \ref{algo:sampling},
\begin{align*}
P_{\newblue{\bx_k'^\bs}}(u)
    \propto & (1\pm O(\delta))\sum_{i\in k+(t+k-\psi)\Z}
    P_{\Dgaus_{\sqrt{\sr}}}(i)
	P_{\Dgaus_{\sqrt{1-\sr}}}(i-u)\; .
\end{align*}
\newblue{
Notice the expression here is propotional to the convolution of $\Dgaus_{k+(t+k-\psi)\Z,\sqrt{\sr}}$ and $\Dgaus_{\sqrt{1-\sr}}$. Therefore, we have 
\begin{align*}
	P_{\bx_k'^\bs}(u)&=(1\pm O(\delta)) P_{\Dgaus_{k+(t+k-\psi)\Z,\sqrt{\sr}}\star \Dgaus_{\sqrt{1-\sr}}}(u)\\
	&=(1\pm O(\delta)) P_{\Dgaus_{k+(t+k-\psi)\Z,\sigma_\sn}\star \Dgaus_{\sigma_\ns}}(u)\; .
\end{align*}

Now we can plug this back and calculate 
$\left [D_{t,\epsilon,\psi,B,\delta}^\alt \right ]^\bs$,
as follows:
\begin{align*}
P_{\left [D_{t,\epsilon,\psi,B,\delta}^\alt \right ]^\bs}(u)
&=
\frac{1}{\lambda(B)}\int_B P_{\bx_k'^\bs}(u) dk\\
&=
\frac{1}{\lambda(B)}(1\pm O(\delta)) \int_B P_{\Dgaus_{k+(t+k-\psi)\Z,\sigma_\sn}\star \Dgaus_{\sigma_\ns}}(u) dk\; .
\end{align*}
Note that inside the integration is the convolution of two distributions, $\Dgaus_{k+(t+k-\psi)\Z,\sigma_\sn}$ and $\Dgaus_{\sigma_\ns}$.
Since $\Dgaus_{\sigma_\ns}$ is independent of $k$, we can interpret it as added after the integration.
Therefore, 
\begin{align*}
P_{\left [D_{t,\epsilon,\psi,B,\delta}^\alt \right ]^\bs}(u)
&=(1\pm O(\delta)) P_{\left [\int_B \frac{1}{\lambda(B)}\Dgaus_{k+(t+k-\psi)\Z,\sigma_\sn} dk\star \Dgaus_{\sigma_\ns}\right ]} (u)\\
&=(1\pm O(\delta)) P_{D'\star \Dgaus_{\sigma_\ns}} (u)\; ,
\end{align*}
where	
$$\sigma_\sn=\sqrt{\sr}\; ,$$
and
$$\sigma_\ns=\sqrt{1-\sr} = 2(t+\eps)\sigma\; .$$
This proves property (i).
}

{\bf Now we prove Property (ii).} 
We prove that for any fixed $y=\frac{k}{t+(k-\psi)}$, 
letting $\bx_k'\sim D_{t,\epsilon,\psi,B,\delta}^\alt\mid \left (y=\frac{k}{t+(k-\psi)}\right )$, 
then $\bx_k'^{\perp\bs}$ is nearly 
independent of $\bx_k'^\bs$ 
in the sense that 
for any $l\in \R$ and $\bu\in \R^{n-1}$, we have that
$$P_{\bx_k'^{\perp \bs}\mid\bx_k'^\bs=l}(\bu)
=(1\pm O(\delta)) P_{\Dgaus_{\R^{n-1},1}}(\bu)\;.$$
Following the same notation we used for proving Property (i), 
we can rewrite the random process as
\begin{align*}
    (1/\sigma_\scl)\circ\Dgaus_{\bx+\bx_\ad+\Z^n,\sigma_\scl}\; \mid
    \left (y=\frac{k}{t+(k-\psi)} \right )
	=\mathbf{w}\; \big |\left (\mod_1(\newblue{\sigma_\scl'}\|\bs\|_2\bw^\bs +z_\nw)
	=\frac{k}{t+(k-\psi)}\right )\; ,
\end{align*}
where 
$$P_{\mathbf{w}}(\mathbf{u})=(1\pm O(\delta))P_{\Dgaus_{\R^n,1}}(\mathbf{u})\; ,$$
and
$$z_{\nw}\sim \Dgaus_{\sqrt{\sigma^2+\|\mathbf{s}\|_2^2\newblue{\sigma_{\ad}'^2}}}$$ 
are independently sampled (before conditioning).
Therefore, we have that
\begin{align*}
P_{\bx_k'^{\perp \bs}\mid\bx_k'^\bs=l}(\bu)
&=P_{\left [\bw^{\perp \bs}\mid \left (\mod_1(\newblue{\sigma_\scl'}\|\bs\|_2\bw^\bs +z_\nw)
=\frac{k}{t+(k-\psi)}\right )\land (\bw^\bs=l)\right ]}(\bu)\\
&=P_{\left [\bw^{\perp \bs}\mid \left (\mod_1(\newblue{\sigma_\scl'}\|\bs\|_2l +z_\nw)
=\frac{k}{t+(k-\psi)}\right )\land (\bw^\bs=l)\right ]}(\bu)\\
&=P_{\left [\bw^{\perp \bs}\mid \bw^\bs=l\right ]}(\bu)\; .
\end{align*}
Recall that the PDF of $\bw$ is pointwise 
close to a Gaussian, as shown earlier in proof for Property (i),
therefore, we obtain
$$P_{\bx_k'^{\perp \bs}\mid\bx_k'^\bs=l}(\bu)
=P_{\bw^{\perp \bs}\mid\bw^\bs=l}(\bu)
=(1\pm O(\delta)) P_{\Dgaus_{\R^{n-1},1}}(\bu) \;.$$
Considering that $\bx'\sim D_{t,\epsilon,\psi,B,\delta}^\alt$ 
is a mixture of
$\bx_k'\sim D_{t,\epsilon,\psi,B,\delta}^\alt\mid \left (y=\frac{k}{t+(k-\psi)}\right )$
for different values of $k$, it follows that
$$P_{\bx'^{\perp \bs}\mid\bx'^\bs=l}(\bu)
=(1\pm O(\delta)) P_{\Dgaus_{\R^{n-1},1}}(\bu)\;.$$
\end{proof}

We also give the following claim obtained from Lemma \ref{lem:rejection-alternative-distribution}, 
which will be useful when we prove the 
Massart condition in Lemma \ref{lem:ptf-alt-distr}.

\begin{claim} \label{app:clm:ptf-alt-distr}
The exact PDF function of the distribution $D'$ in Lemma \ref{lem:rejection-alternative-distribution} is
\newblue{
$$P_{D'}(u)
=\frac{\Theta(t)}{\lambda(B)}
\sum_{i\in \Z}\frac{1}{|i+1|}\mathbf{1}(u\in it+\psi +(i+1)(B-\psi))\rho_{\sigma_\sn}(u)\; .$$
}
\end{claim}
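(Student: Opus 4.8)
The plan is to simply unfold the definition of $D'$ given in Lemma~\ref{lem:rejection-alternative-distribution}, namely $D'=\frac{1}{\lambda(B)}\int_B \Dgaus_{k+(t+k-\psi)\Z,\sigma_\sn}\,dk$, write the pmf of the inner discrete Gaussian explicitly using Definition~\ref{def:dpart}, and then integrate out $k$ by an affine change of variables performed separately on each lattice index $i\in\Z$. Recall first that $\sigma_\sn=\sqrt{\sr}$ does \emph{not} depend on $k$ (Algorithm~\ref{algo:sampling}), so the Gaussian envelope $\rho_{\sigma_\sn}$ is a fixed function; for fixed $k$ the distribution $\Dgaus_{k+(t+k-\psi)\Z,\sigma_\sn}$ puts mass $\rho_{\sigma_\sn}(u_{k,i})/\rho_{\sigma_\sn}\!\bigl(k+(t+k-\psi)\Z\bigr)$ at the point $u_{k,i}\eqdef k+(t+k-\psi)i$.

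First I would observe that since $D'$ averages these discrete measures against the Lebesgue weight $\frac{1}{\lambda(B)}\,dk$ on $B$, for any nonnegative test function $f$ we have, by Tonelli,
\[
\int P_{D'}(u)f(u)\,du=\frac{1}{\lambda(B)}\sum_{i\in\Z}\int_B \frac{\rho_{\sigma_\sn}(u_{k,i})}{\rho_{\sigma_\sn}\!\bigl(k+(t+k-\psi)\Z\bigr)}\,f(u_{k,i})\,dk .
\]
For a fixed $i\neq-1$, the map $k\mapsto u_{k,i}$ is affine: writing $u=ti+\psi+(i+1)(k-\psi)$ one checks $u=u_{k,i}$, so it carries $B$ bijectively onto $ti+\psi+(i+1)(B-\psi)$ with $dk=du/|i+1|$ and $\rho_{\sigma_\sn}(u_{k,i})=\rho_{\sigma_\sn}(u)$. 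The $i=-1$ term is supported on the single point $u=\psi-t$, hence contributes nothing to the density (defined only up to measure zero); this is the harmless term appearing in the claimed formula, and it is why one may include it with the convention $\tfrac{1}{|i+1|}\mathbf 1(\cdots)=0$ there. Substituting in every term gives
\[
P_{D'}(u)=\frac{1}{\lambda(B)}\sum_{i\in\Z,\ i\neq-1}\frac{1}{|i+1|}\cdot\frac{\mathbf 1\!\bigl(u\in it+\psi+(i+1)(B-\psi)\bigr)}{\rho_{\sigma_\sn}\!\bigl(k_i(u)+(t+k_i(u)-\psi)\Z\bigr)}\,\rho_{\sigma_\sn}(u),
\]
where $k_i(u)=\psi+\tfrac{u-ti-\psi}{i+1}\in B$ is the (unique) preimage.

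It then remains only to show the normalizing factor is $\Theta(t)^{-1}$ uniformly, so it can be pulled out of the sum as a single $\Theta(t)$. Every spacing arising here is $c=t+k-\psi$ with $k\in B\subseteq[\psi,\psi+\eps]$ and $\psi+\eps\le t$ (Algorithm~\ref{algo:sampling}), hence $c\in[t,2t]$; also $\sigma_\sn=\sqrt{\sr}\ge 1/\sqrt2$ by Observation~\ref{app:obv:parameter-validity}. Rescaling the lattice via $\rho_{\sigma_\sn}(c\Z+v)=c^{-1}\rho_{\sigma_\sn/c}(\Z+v/c)$ and applying Lemma~\ref{app:lem:general-pointwise-bound} together with Lemma~\ref{app:lem:smoothing-integer-lattice}, using that $\sigma_\sn/c=\Omega(1/t)$ exceeds $\eta_\delta(\Z)$ because Condition~\ref{cond:params}(iii) gives $1/t\ge\sqrt{c\log(n/\delta)}$, we get $\rho_{\sigma_\sn}(c\Z+v)=\frac1c(1\pm o(1))=\Theta(1/t)$, uniformly in $v$ and in $c\in[t,2t]$. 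Plugging this back yields precisely $P_{D'}(u)=\frac{\Theta(t)}{\lambda(B)}\sum_{i\in\Z}\frac{1}{|i+1|}\mathbf 1\!\bigl(u\in it+\psi+(i+1)(B-\psi)\bigr)\rho_{\sigma_\sn}(u)$.

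The only things requiring care here are bookkeeping rather than analysis: the orientation of the change of variables when $i+1<0$ (absorbed into $|i+1|$); the measure‑zero $i=-1$ term; and the point that the per‑term normalizers depend on $u$ but all lie within a constant factor of $1/t$, so that writing a single "$\Theta(t)$" outside the sum is legitimate once one reads it as a function of $u$ bounded between two universal constant multiples of $t$. No genuinely new estimate is needed beyond the coarse‑lattice Poisson‑summation bound already packaged in Lemma~\ref{app:lem:general-pointwise-bound}.
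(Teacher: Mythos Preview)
Your proposal is correct and follows essentially the same approach as the paper: both expand the definition of $D'$, replace the discrete-Gaussian normalizer by $\Theta(t)^{-1}$ via the smoothing bound (Lemma~\ref{app:lem:general-pointwise-bound} applied to the rescaled lattice, using $\sigma_\sn\ge 1/\sqrt2$ and Condition~\ref{cond:params}(iii)), and then convert the $k$-integral into a sum over $i$ by the affine map $k\mapsto u_{k,i}$. Your version is in fact a bit more careful than the paper's, which writes the heuristic identity $\int_B\mathbf 1(u=k+(t+k-\psi)i)\,dk=\tfrac{1}{|i+1|}\mathbf 1(u\in it+\psi+(i+1)(B-\psi))$ without introducing test functions; your explicit change of variables and remark on the measure-zero $i=-1$ term make the density computation rigorous.
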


\begin{proof}
This follows from expanding the expression of $D'$ in Lemma \ref{lem:rejection-alternative-distribution}. We have
\begin{align*}
    P_{D'}(u)
    &=P_{\left [\frac{1}{\lambda(B)}\int_B \Dgaus_{k+(t+k-\psi)\Z,\sigma_\sn} dk\right ]}(u)\\
    &=\frac{1}{\lambda(B)}\int_B P_{\left [\Dgaus_{k+(t+k-\psi)\Z,\sigma_\sn} \right ]}(u)dk\; .
\end{align*}    
Notice that 
\begin{align*}
P_{\left [\Dgaus_{k+(t+k-\psi)\Z,\sigma_\sn}\right ]}(u)
=
\idt\left (u\in k+(t+k-\psi)\Z\right )
\frac{\rho_{\sigma_\sn}(u)}
{\rho_{\sigma_\sn}(k+(t+k-\psi)\Z)}\;.
\end{align*}
Then using that
$\sigma_\sn=\sqrt{\sr}\geq 1/2$ (Observation \ref{app:obv:parameter-validity}) and 
$\frac{1}{t\sqrt{n}}\geq \sqrt{c\log(n/\delta)}$ for a sufficiently large $c$ (Condition \ref{cond:params}), 
we have that $\sigma_\sn/(t+k-\psi)\geq \sqrt{\tilde c\log(1/\delta)}$ for a sufficiently large $\tilde c$.
After that, an application of Lemma~\ref{app:lem:general-pointwise-bound} 
gives that
$$\rho_{\sigma_\sn}(k+(t+k-\psi)\Z)=(t+k-\psi)^{-1}\rho_{\sigma_\sn/(t+k-\psi)}(k/(t+k-\psi)+\Z)=(1\pm O(\delta))(t+k-\psi)^{-1}\; .$$
Plugging this back, we obtain
\begin{align*}
    P_{D'}(u)
    &=\frac{\newblue{1\pm O(\delta)}}{\lambda(B)}\int_B (t+k-\psi)
    \idt\left (u\in k+(t+k-\psi)\Z\right )\rho_{\sigma_\sn}(u)dk\\
    &=\frac{\newblue{1\pm O(\delta)}}{\lambda(B)}\int_B (t+k-\psi)\sum_{i\in \Z}\left [\idt (u=k+(t+k-\psi)i)\right ] 
    \rho_{\sigma_\sn}(u)dk\\
    &=\frac{\Theta(t)}{\lambda(B)} \sum_{i\in \Z}\left [\int_B\idt (u=k+(t+k-\psi)i)dk\right ] \rho_{\sigma_\sn}(u)\\
    &=\frac{\Theta(t)}{\lambda(B)}
    \sum_{i\in \Z}\frac{1}{|i+1|}\mathbf{1}(u\in it+\psi +(i+1)(B-\psi))\rho_{\sigma_\sn}(u)\;. \qedhere
\end{align*}
\end{proof}

\subsubsection{Proof of Lemma \ref{lem:rejection-null-distribution}}

\begin{proof} [Proof of Lemma \ref{lem:rejection-null-distribution}]
We first review the definition of $D_{t,\epsilon,\psi,B,\delta}^\nul$. 
Let $(\bx,y)$ be drawn from the null hypothesis 
case of the LWE problem. 
This means that we have $\bx\sim U(\R_1^n)$ and $y\sim U(\R_1)$ independently. 
In this case, we reject unless $y=\frac{k}{t+(k-\psi)}$ for some $k\in B$. 
Then the secondary rejection step rejects with probability $1-\frac{t^2}{(t+k-\psi)^2}$.
We calculate $\sigma_\ad$ and $\sigma_\scl$ which only depend on $y$, 
and output a sample
$$\bx'\sim (1/\sigma_\scl)\circ\Dgaus_{\bx+\bx_\ad+\Z^n,\sigma_\scl}\; ,$$
where $\bx_\ad\sim \Dgaus_{\R^n,\sigma_\ad}$.
Therefore, the distribution $D_{t,\epsilon,\psi,B,\delta}^\nul$ is a mixture of
$$(1/\sigma_\scl)\circ\Dgaus_{\bx+\bx_\ad+\Z^n,\sigma_\scl}\; \mid\left (y=y' \right )\; ,$$
for different values of $y'$.

We prove that for any fixed $y'$, the following holds:	
$$P_{\left [(1/\sigma_\scl)\circ\Dgaus_{\bx+\bx_\ad+\Z^n,\sigma_\scl}\mid (y=y')\right ]}(\bu)=(1\pm O(\delta))\cdot  P_{\Dgaus_{\R^n,1}}(\bu) \;.$$
We use $\sigma_\ad'$ and $\sigma_\scl'$ to denote the specific values of $\sigma_\ad$ and $\sigma_\scl$
for $y=y'$.
Then 
\begin{align*}
(1/\sigma_\scl)\circ\Dgaus_{\bx+\bx_\ad+\Z^n,\sigma_\scl}\; \mid
\left (y=y' \right )
=&(1/\sigma_\scl)\circ\Dgaus_{\mod_1(\bx+\bx_\ad)+\Z^n,\sigma_\scl}\; \mid
\left (y=y' \right )\\
=&(1/\newblue{\sigma_\scl'})\circ\Dgaus_{\mod_1(\bx+\bx'_\ad)+\Z^n,\sigma'_\scl}\; \mid\left (y=y' \right )\\
=&(1/\newblue{\sigma_\scl'})\circ\Dgaus_{\mod_1(\bx+\bx'_\ad)+\Z^n,\sigma'_\scl} \;,
\end{align*}
where $\bx_\ad'\sim \Dgaus_{\R^n,\sigma_\scl'}$.
Since $\bx$ is always drawn from $U(\R_1^n)$ and 
$\bx_\ad'$ is independent of $\bx$, then 
$\mod_1(\bx+\bx'_\ad)$ is also drawn from $U(\R_1^n)$. 
Thus, we can write 
$$\Dgaus_{\bx+\bx'_\ad+\Z^n,
\sigma_\scl'^2}
=D^{\mathrm{expand}}_{\R_1^n,\sigma_\scl'} \;.$$
\newblue{Applying Fact \ref{app:fct:bound-on-collaped-distribution}, the lower bound of $\sigma_\scl$ in Observation \ref{app:obv:parameter-property} and Condition (iii) in Condition \ref{cond:params}} 
yields
$$P_{D^{\mathrm{expand}}_{\R_1^n,\sigma_\scl'}}(\bu)
=(1\pm O(\delta))\cdot  P_{\Dgaus_{\R^n,\sigma_\scl'}}(\bu) \;.$$
Plugging this back, we obtain that
$$P_{\left [(1/\sigma_\scl)\circ\Dgaus_{\bx+\bx_\ad+\Z^n,\sigma_\scl}\mid (y=y')\right ]}(\bu)
=(1\pm O(\delta))\cdot  P_{\Dgaus_{\R^n,1}}(\bu) \;.$$
Then, since $D_{t,\epsilon,\psi,B,\delta}^\nul$ 
is a mixture of 
$(1/\sigma_\scl)\circ\Dgaus_{\bx+\bx_\ad+\Z^n,\sigma_\scl}\mid (y=y')$ 
for different $y'$, we get that
$$P_{D_{t,\epsilon,\psi,B,\delta}^\nul}(\bu)=(1\pm O(\delta))\cdot  P_{\Dgaus_{\R^n,1}}(\bu) \;.$$
\end{proof}

\subsubsection{Proof of Lemma \ref{lem:ptf-alt-distr}}

Before we prove Lemma \ref{lem:ptf-alt-distr}, we restate the definition for $B_-$ as a refresher.
$B_-$ is defined by carving out $O(t/\eps)$ many empty slots in $[t/2,t]$ in order to make the problematic part of $\bx^{\bs}|y=+1$ to fit in (see Figure \ref{fig:construction-wo-noise} and Figure \ref{fig:construction-w-noise}).
To define $B_-$, we need to first define a mapping $g$ that maps
a location to the corresponding place we need to carve out on $B_-$. The function $g:\R-[-1.5t,0.5t]\mapsto [0.5t, t]$ is defined as follows: for $i\in \Z$ and $b\in \R_t$, 
we have that
\begin{align*}
    g(it+t/2+b)\eqdef
    \begin{cases}
        \frac{b}{i+1}+t/2 &\text{if $i\geq 0$;}\\
        \frac{b-t}{i+2}+t/2 &\text{if $i<0$.}
    \end{cases}
\end{align*}
Then $B_-$ is defined as follows:
\begin{align*}
    B_-\eqdef &[t/2,t/2+\epsilon] \\
    &-\bigcup\limits_{i={\frac{t}{2\eps}-1}}^{\frac{t}{\eps}-1}g([it-2c'\epsilon,it])
	-\bigcup\limits_{i={\frac{t}{2\eps}-1}}^{\frac{t}{\eps}-1} 
	g([it+(i+1)\epsilon,it+(i+1)\epsilon+2c'\epsilon])\\
	&-\bigcup\limits_{i={-\frac{t}{\eps}}-1}^{-\frac{t}{2\eps}-1} 
	g([it+(i+1)\epsilon-2c'\epsilon,it+(i+1)\epsilon])
	-\bigcup\limits_{i={-\frac{t}{\eps}}-1}^{-\frac{t}{2\eps}-1} 
	g([it,it+2c'\epsilon])\; .
\end{align*}

\begin{proof} [Proof of Lemma \ref{lem:ptf-alt-distr}]
We will prove that there is a distribution $D^\tru$ such that \\$\dtv(D^\tru, D_\PTF^\alt)=O(\delta/m')$ and \newblue{there is a degree-$O(t/\eps)$}
PTF $\sgn(p(\cdot))$ such that
\begin{enumerate}
    \item $\pr_{(\bx,y)\sim D^\tru}[\sgn(p(\bx))\neq y]=\exp(-\Omega(t^4/\eps^2))$; and,
    \item $D^\tru$ satisfies the $O(\eta)$ Massart condition with respect to $\sgn(p(\bx))$.
\end{enumerate}

\newblue{We first give some high level intuition for $D^\tru$. }
First, we recall that $D_\PTF^\alt$ is the $1-\eta:\eta$ mixture of
$D_{t,\epsilon,\psi_+,B_+,\delta}^\alt$ and $D_{t,\epsilon,\psi_-,B_-,\delta}^\alt$ 
with $+1$ and $-1$ labels, respectively
\footnote{\newblue{
That is, $D_\PTF^\alt$ is the joint distribution of $(\bx,y)$ such that 
with probability $1-\eta$ we sample $\bx\sim D_{t,\epsilon,\psi_+,B_+,\delta}^\alt$ and $y=+1$; 
and with probability $\eta$ we sample $\bx\sim D_{t,\epsilon,\psi_-,B_-,\delta}^\alt$ and $y=-1$}}.
Also we recall that both $D_{t,\epsilon,\psi_+,B_+,\delta}^\alt$
and $D_{t,\epsilon,\psi_-,B_-,\delta}^\alt$ 
are noisy ``hidden direction discrete Gaussians'' which
are close to continuous Gaussian on all except the hidden direction $\bs$, and on the hidden direction, they are close
to a linear combination of discrete Gaussian plus an extra continuous Gaussian noise (as shown in Lemma \ref{lem:rejection-alternative-distribution}). 
The idea here is to truncate that extra continuous Gaussian noise on the hidden direction to obtain $D^\tru$.  

Now we formally define $D^\tru$ in the following way. 
We will first define distributions 
$D^\tru_+$ and $D^\tru_-$ such that
$$\dtv(D^\tru_+,D_{t,\epsilon,\psi_+,B_+,\delta}^\alt)=O(\delta/m')\;,$$
and
$$\dtv(D^\tru_-,D_{t,\epsilon,\psi_-,B_-,\delta}^\alt)=O(\delta/m')\; .$$
Then we take $D^\tru$ as the $1-\eta:\eta$ mixture of 
$D^\tru_+$ and $D^\tru_-$ with $+1$ and $-1$ labels respectively.

We define $D^\tru_+$ below; $D^\tru_-$ is defined analogously.
We specify $D^\tru_+$ to satisfy the following requirement:
Let $\bx'\sim D^\tru_+$ and $\bx\sim D_{t,\epsilon,\psi_+,B_+,\delta}^\alt$. 
For any $l\in \R$, we specify the conditional distribution of 
\newblue{$P_{\bx'^{\perp \bs}\mid \bx'^\bs=l}$} to be:
\begin{align*}
\newblue{P_{\bx'^{\perp \bs}\mid \bx'^\bs=l}=P_{\bx^{\perp \bs}\mid \bx^\bs=l}}\;.
\end{align*}
Since the conditional distributions on $\perp \bs$ are the same, to define $D^\tru_+$,
it remains to specify $[D^\tru_+]^\bs$.
According to Lemma \ref{lem:rejection-alternative-distribution} 
property (i),
$\left [ D_{t,\epsilon,\psi_+,B_+,\delta}^\alt\right ]^\bs$
is \newblue{pointwise close to} the convolutional sum of a distribution $D'$ 
and noise drawn from $\Dgaus_{\sigma_\ns}$.
\newblue{Here we use $D'_+$ (resp. $D'_-$) to denote the corresponding $D'$ for $\left [ D_{t,\epsilon,\psi_+,B_+,\delta}^\alt\right ]^\bs$ (resp. $\left [ D_{t,\epsilon,\psi_-,B_-,\delta}^\alt\right ]^\bs$). 
This is $$P_{\left [ D_{t,\epsilon,\psi_+,B_+,\delta}^\alt\right ]^\bs}(u)
=(1\pm O(\delta))P_{D'_+\star \Dgaus_{\sigma_\ns}}(u)\; .$$}
We define a truncated noise distribution $\gaus^\tru$ 
to replace $\Dgaus_{\sigma_\ns}$, as follows:
\begin{align*}
P_{\mathcal{N}^{\mathrm{truncate}}}(u)\propto
	\begin{cases}
		P_{\Dgaus_{\sigma_\ns}} (u) & \text{if $|u|\leq c'\epsilon$,}\\
		0 & \text{ otherwise} \;,
	\end{cases}
\end{align*}
where $c'$ is the parameter in \newblue{Condition \ref{cond:params}}.
We now define $[D^\tru_+]^\bs$ as \newblue{pointwise close to} the convolution sum of the distribution $D'_+$ 
and noise from $\gaus^\tru$, 
\newblue{
namely, $$P_{[D^\tru_+]^\bs}(u)\propto 
\frac{P_{\left [ D_{t,\epsilon,\psi_+,B_+,\delta}^\alt\right ]^\bs}(u)}{P_{D'_+\star \Dgaus_{\sigma_\ns}}(u)}P_{D'_+\star \gaus^\tru}(u)\; ,$$
where
$\frac{P_{\left [ D_{t,\epsilon,\psi_+,B_+,\delta}^\alt\right ]^\bs}(u)}{P_{D'_+\star \Dgaus_{\sigma_\ns}}(u)}=(1\pm O(\delta))$.}
$D^\tru_-$ is defined in the same manner.  

\newblue{
Now we bound $\dtv(D^\tru_+,D_{t,\epsilon,\psi_+,B_+,\delta}^\alt)$.
We first define
$$f(u)=\frac{P_{\left [ D_{t,\epsilon,\psi_+,B_+,\delta}^\alt\right ]^\bs}(u)}{P_{D'_+\star \Dgaus_{\sigma_\ns}}(u)}P_{D'_+\star \gaus^\tru}(u)\; ,$$
and notice that 
$$P_{D^\tru_+}(u)=\left (\int_{\R} f(u) du\right )^{-1} f(u)\; .$$
We then bound $\int_{\R} f(u) du$. Note that 
\begin{align*}
	\int_{\R} \left |f(u)-P_{\left [ D_{t,\epsilon,\psi_+,B_+,\delta}^\alt\right ]^\bs}(u)\right |du
	&= (1\pm O(\delta)) \int_{\R}\left |\int_{\R}P_{D_+}(t)P_{\gaus^\tru}(u-t)dt
	-\int_{\R}P_{D_+}(t)P_{\Dgaus_{\sigma_\ns}}(u-t)dt\right |du\\
	&\leq (1\pm O(\delta)) \int_{\R}P_{D_+}(t)\int_{\R}\left |P_{\gaus^\tru}(u-t)-P_{\Dgaus_{\sigma_\ns}}(u-t)\right |dtdu\\
	&= (1\pm O(\delta)) \int_{\R}\left |P_{\gaus^\tru}(u)
	-P_{\Dgaus_{\sigma_\ns}}(u)\right |du\\
	&= (1\pm O(\delta))2\dtv(\gaus^\tru,\Dgaus_{\sigma_\ns})\;.
\end{align*}
Therefore, we have that 
$\int_{\R} f(u) du=1\pm O(\dtv(\gaus^\tru,\Dgaus_{\sigma_\ns}))$.
To bound our objective $\dtv(D^\tru_+,D_{t,\epsilon,\psi_+,B_+,\delta}^\alt)$, we have
\begin{align*}
    \dtv(D^\tru_+,D_{t,\epsilon,\psi_+,B_+,\delta}^\alt)
    &=\dtv\left (\left [D^\tru_+\right ]^\bs,\left [D_{t,\epsilon,\psi_+,B_+,\delta}^\alt\right ]^\bs\right )\\
    &=\frac{1}{2}\int_{\R}\left | P_{\left [D^\tru_+\right ]^\bs}(u)
    -P_{\left [D_{t,\epsilon,\psi_+,B_+,\delta}^\alt\right ]^\bs}(u)\right | du\; .
\end{align*}
By the triangle inequality, we can write
\begin{align*}
\dtv(D^\tru_+,D_{t,\epsilon,\psi_+,B_+,\delta}^\alt)& \leq 
    \frac{1}{2}\int_{\R}\left |f(u)-P_{\left [D_{t,\epsilon,\psi_+,B_+,\delta}^\alt\right ]^\bs}(u)\right | du
    +\frac{1}{2}\int_{\R}\left | f(u)-P_{\left [D^\tru_+\right ]^\bs}(u)\right |du\\
    &= O\left (\dtv\left (\gaus^\tru,\Dgaus_{\sigma_\ns}\right )\right )+\frac{1}{2}\left |\int_{\R}f(u) du-1\right |\\
    &= O\left (\dtv\left (\gaus^\tru,\Dgaus_{\sigma_\ns}\right )\right )\\
    &= O\left (\exp\left (-\frac{(c'\eps)^2}{2\sigma_\ns^2} \right )\right )\; .
\end{align*}
From Condition (iv) in Condition \ref{cond:params} and $\sigma_\ns=2(t+\eps)\sigma$, 
we have that
$\frac{(c'\epsilon)^2}{2\sigma_\ns^2}\geq \log (m'/\delta)$};
thus,
$$\dtv(D^\tru_+,D_{t,\epsilon,\psi_+,B_+,\delta}^\alt)=O(\delta/m')\; .$$
The same holds for $D^\tru_-$. 
Therefore, we have that
\begin{align*}
    \dtv(D^\tru,D_\PTF^\alt)
    &= (1-\eta)\dtv(D^\tru_+,D_{t,\epsilon,\psi_+,B_+,\delta}^\alt)
        +\eta\dtv(D^\tru_-,D_{t,\epsilon,\psi_-,B_-,\delta}^\alt)\\
    &=O(\delta/m')\; .
\end{align*}

\newblue{
Before we continue with the rest of the proof, we require the following claim 
about the support of these distributions. 
\begin{claim} \label{app:lem:distribution-support}
	The support of distribution $D'_+$ is 
	$$\bigcup\limits_{i\in \Z_-} [it+(i+1)\eps,it]
	\cup
	\bigcup\limits_{i\in \Z_+} [it,it+(i+1)\eps]\; ,$$
	and the support of distribution $D^\tru_+$ is 
	$$\bigcup\limits_{i\in \Z_-} [it+(i+1)\eps-c'\epsilon,it+c'\epsilon]
	\cup
	\bigcup\limits_{i\in \Z_+} [it-c'\epsilon,it+(i+1)\eps+c'\epsilon]\; .$$
	Similarly, the support of distribution $D'_-$ is \textbf{a subset of} 
	$$\bigcup\limits_{i\in \Z_-} [it+t/2+(i+1)\eps,it+t/2]
	\cup
	\bigcup\limits_{i\in \Z_+} [it+t/2,it+t/2+(i+1)\eps]\; ,$$
	and the support of distribution $D^\tru_+$ is \textbf{a subset of}
	$$\bigcup\limits_{i\in \Z_-} [it+t/2+(i+1)\eps-c'\epsilon,it+t/2+c'\epsilon]
	\cup
	\bigcup\limits_{i\in \Z_+} [it+t/2-c'\epsilon,it+t/2+(i+1)\eps+c'\epsilon]\; .$$
\end{claim}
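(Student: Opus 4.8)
The plan is to read each support directly off the explicit densities already established, using that every distribution named in the claim is, up to a multiplicative factor bounded away from $0$, an affine image and/or a convolution of the building-block discrete Gaussians. First I would treat $D'_+$: by definition this is the distribution $D'$ of Lemma~\ref{lem:rejection-alternative-distribution} with $\psi=\psi_+=0$ and $B=B_+=[0,\eps]$, so Claim~\ref{app:clm:ptf-alt-distr} gives $P_{D'_+}(u)=\frac{\Theta(t)}{\eps}\sum_{i\in\Z}\frac{1}{|i+1|}\mathbf{1}\!\left(u\in it+(i+1)[0,\eps]\right)\rho_{\sigma_\sn}(u)$. Since $\rho_{\sigma_\sn}>0$ everywhere and every coefficient $\frac{\Theta(t)}{\eps|i+1|}$ is positive (the $i=-1$ summand has a measure-zero indicator and contributes only the degenerate point $\{-t\}$), the support is $\bigcup_{i\in\Z}\big(it+(i+1)[0,\eps]\big)$; for $i\ge 0$ this interval is $[it,it+(i+1)\eps]$ and for $i\le -1$, since $i+1\le 0$, it is $[it+(i+1)\eps,it]$. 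This is exactly the claimed set.

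Next I would handle $D^\tru_+$, reading the statement as a claim about the hidden-direction marginal $[D^\tru_+]^\bs$ (the conditionals of $D^\tru_+$ on the $\perp\bs$ directions coincide with those of $D_{t,\eps,\psi_+,B_+,\delta}^\alt$, which are full-support Gaussians, so the $n$-dimensional support is determined by this marginal). By the construction of $[D^\tru_+]^\bs$ we have $P_{[D^\tru_+]^\bs}(u)\propto \frac{P_{[D_{t,\eps,\psi_+,B_+,\delta}^\alt]^\bs}(u)}{P_{D'_+\star\Dgaus_{\sigma_\ns}}(u)}\,P_{D'_+\star\gaus^\tru}(u)$, and the first factor equals $1\pm O(\delta)$, hence is bounded away from $0$ for small $\delta$; therefore $\supp([D^\tru_+]^\bs)=\supp(D'_+\star\gaus^\tru)=\supp(D'_+)+[-c'\eps,c'\eps]$, using that $\gaus^\tru$ has a strictly positive density on $[-c'\eps,c'\eps]$ and vanishes outside. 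Adding $[-c'\eps,c'\eps]$ widens each interval $[a,b]$ of $\supp(D'_+)$ to $[a-c'\eps,b+c'\eps]$, which yields exactly the claimed union.

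For the ``$-$'' distributions the computation is identical with $\psi=\psi_-=t/2$ and $B=B_-\subseteq[\psi_-,\psi_-+\eps]$, except that $B_-$ is a proper subset of $[\psi_-,\psi_-+\eps]$ (the carved-out slots), so the conclusions become containments: $B_--\psi_-\subseteq[0,\eps]$ gives $\supp(D'_-)\subseteq\bigcup_i\big(it+t/2+(i+1)[0,\eps]\big)$, i.e.\ each piece lies in $[it+t/2,it+t/2+(i+1)\eps]$ for $i\ge 0$ and in $[it+t/2+(i+1)\eps,it+t/2]$ for $i<0$; convolving with $\gaus^\tru$ then puts $\supp([D^\tru_-]^\bs)$ inside the same intervals widened by $c'\eps$ on each side. (I would also flag that the final displayed line of the claim should read $D^\tru_-$, not $D^\tru_+$.)

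The argument has no genuine obstacle --- it is bookkeeping with Minkowski sums and affine images of intervals. The only points that need care are: handling the degenerate $i=-1$ term in the density formula (a single point, harmless); checking that the ``$1\pm O(\delta)$'' factors are bounded away from zero so that pointwise closeness really does preserve supports; and making explicit that the one-dimensional sets in the claim describe the hidden-direction marginals $[D^\tru_\pm]^\bs$ rather than the full $n$-dimensional distributions $D^\tru_\pm$ on $\R^n$.
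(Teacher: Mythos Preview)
Your proposal is correct and follows exactly the same approach as the paper: the paper's proof is the single sentence ``directly follows from Claim~\ref{app:clm:ptf-alt-distr}, the definition of these distributions, and the fact that the support of $\gaus^\tru$ is $[-c'\eps,c'\eps]$,'' and you have simply unpacked those three ingredients carefully. Your extra care with the degenerate $i=-1$ term, the observation that the $1\pm O(\delta)$ factor preserves supports, and the flagging of the $D^\tru_+$/$D^\tru_-$ typo are all valid refinements of what the paper leaves implicit.
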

The above claim directly follows from Claim~\ref{app:clm:ptf-alt-distr}, 
the definition of these distributions, and the fact that the support of $\gaus^\tru$ 
is $[-c'\eps,c'\eps]$.
}

Now we continue with our proof. With our definition of $D^\tru$, 
it remains to prove that there is a PTF $\sgn(p(\cdot))$
satisfying the following:
\begin{enumerate}
    \item $\pr_{(\bx,y)\sim D^\tru}[\sgn(p(\bx))\neq y]=\exp(-\Omega(t^4/\eps^2))$; and
    \item $D^\tru$ satisfies the $O(\eta)$ Massart condition with respect to $\sgn(p(\bx))$.
\end{enumerate}
From the way we defined $D^\tru$ and Property~(ii) 
of Lemma \ref{lem:rejection-alternative-distribution}, 
we know that for $\bu\in \R^n$
\begin{align*}
\pr_{D^\tru}[y=+1\mid\bx=\bu]=(1\pm O(\delta))\pr_{D^\tru}[y=+1\mid\bx^\bs=\bu^\bs]\;.
\end{align*}
\newblue{Since $(1\pm O(\delta))=O(1)$ and $t/\eps=\Omega(1)$},
therefore, it suffices to prove 
these statements on the subspace spanned by $\bs$:
there is a \newblue{degree-$O(t/\eps)$} PTF $\sgn(p(\langle \bs, \cdot \rangle))$ such that
\begin{enumerate}
    \item $\pr_{(x,y)\sim \left [D^\tru\right ]^\bs}[\sgn(p(\langle \bs, x\rangle))\neq y]=\exp(-\Omega(t^4/\eps^2))$; and
    \item $\left [D^\tru\right ]^\bs$ satisfies the $O(\eta)$ Massart condition with respect to $\sgn(p(\langle \bs, x\rangle))$,
\end{enumerate}
where we abuse the notation slightly 
and use $\left [D^\tru\right ]^\bs$ to denote the $1-\eta:\eta$ mixture 
of $[D^\tru_+]^\bs$ and $[D^\tru_-]^\bs$ with $+1$ and $-1$ labels repectively.
We consider a degree-$O(t/\eps)$ PTF $\sgn(p(u))$ such that 
\newblue{$$\sign(p(u))=
\begin{cases}
	+1  &\text{ if } u\in \bigcup\limits_{i\in \Z_-} [it+(i+1)\eps-c'\epsilon,it+c'\epsilon]\cup\bigcup\limits_{i\in \Z_+} [it-c'\epsilon,it+(i+1)\eps+c'\epsilon]\; ;\\
	-1  &\text{ otherwise}\;.
\end{cases}
$$
Notice that according to Claim \ref{app:clm:interval-quantity}, 
the domain with value $+1$ can be written as the union of $O(t/\eps)$ 
many intervals, thus the above
function must be realizable by taking $p$ as a degree-$O(t/\eps)$ PTF.
}

For item 1, \newblue{from Claim~\ref{app:lem:distribution-support}}, 
the support of $+1$ samples ($\left [D_+^\tru\right ]^\bs$) is 
$$\bigcup\limits_{i\in \Z_-} [it+(i+1)\eps-c'\epsilon,it+c'\epsilon]
	\cup\bigcup\limits_{i\in \Z_+} [it-c'\epsilon,it+(i+1)\eps+c'\epsilon]\; ,$$
and the support for -1 samples ($\left [D_-^\tru\right ]^\bs$) is a subset of
$$\bigcup\limits_{i\in \Z_-} [it+t/2+(i+1)\eps-c'\epsilon,it+t/2+c'\epsilon]
	\cup\bigcup\limits_{i\in \Z_+} [it+t/2-c'\epsilon,it+t/2+(i+1)\eps+c'\epsilon]\; .$$
(The above follows from Claim \ref{app:clm:ptf-alt-distr}.)
For $|i+1|\leq t/(2\eps) -1$, these intervals have length at most $t/2-\eps+2c'\eps\newblue{<t/2}$, 
thus do not overlap (for sufficiently small $c'$).
Therefore, this PTF makes no mistake for $i \in [-t/(2\eps),t/(2\eps)-2]$, 
i.e., \newblue{at least for $u\in [-t^2/(2\eps),t^2/(2\eps)-2t]$}.
Thus, its error is at most $\exp(-\Omega(t^2/\eps)^2)$.

For item 2, we show the following:
\begin{enumerate}
	\item [(i)] If $u\in \bigcup\limits_{i\in \Z_-} [it+(i+1)\eps-c'\epsilon,it+c'\epsilon]
	\cup\bigcup\limits_{i\in \Z_+} [it-c'\epsilon,it+(i+1)\eps+c'\epsilon]$ then 
	$$\pr_{(x,y)\sim\left [D^\tru\right ]^\bs}[y=+1|x=u]\geq 1-O(\eta)\; .$$
	\item [(ii)] Otherwise,
	$$P_{\left [D^\tru\right ]^\bs\mid (y=+1)}(u)=0\;.$$
\end{enumerate} 
Item (ii) is straightforward, 
as from Claim~\ref{app:lem:distribution-support}, the support of $D^\tru\mid (y=+1)$ 
(i.e., $D^\tru_+$) is $\bigcup\limits_{i\in \Z_-} [it+(i+1)\eps-c'\epsilon,it+c'\epsilon]
\cup\bigcup\limits_{i\in \Z_+} [it-c'\epsilon,it+(i+1)\eps+c'\epsilon]$.
Since all the support is in item (i), 
in the ``otherwise'' case, it must be the case that 
$P_{\left [D^\tru\right ]^\bs\mid (y=+1)}(u)=0$.

For item (i), we recall that 
$\left [D^\tru\right ]^\bs$ is 
the $1-\eta:\eta$ mixture of $D^\tru_+$ and $D^\tru_-$ 
with $+1$ and $-1$ labels respectively.
Therefore, we need to show for
\begin{align*}
u\in \bigcup\limits_{i\in \Z_-} [it+(i+1)\eps-c'\epsilon,it+c'\epsilon]
\cup\bigcup\limits_{i\in \Z_+} [it-c'\epsilon,it+(i+1)\eps+c'\epsilon] \;,
\end{align*}
we have that
\begin{align*}
P_{D^\tru_+}(u)=\Omega(P_{D^\tru_-}(u))\;.
\end{align*}
Since $D^\tru_+$ and $D^\tru_-$ are \newblue{pointwise close to} the convolution sums of $D'_+$ or $D'_-$ and $\mathcal{N}^\tru$ respectively, 
the above amounts to
\begin{align*}
\newblue{\int_{-c'\eps}^{c'\eps} P_{\gaus^\tru}(u')P_{D'_+}(u-u')du'
=\Omega\left (\int_{-c'\eps}^{c'\eps} P_{\gaus^\tru}(u')P_{D'_-}(u-u')du'\right )}\;.
\end{align*}
Therefore, it suffices to prove for
$u\in \bigcup\limits_{i\in \Z_-} [it+(i+1)\eps-2c'\epsilon,it+2c'\epsilon]
\cup\bigcup\limits_{i\in \Z_+} [it-2c'\epsilon,it+(i+1)\eps+2c'\epsilon]$, 
it holds that
\begin{align*}
P_{D'_+}(u)=\Omega(P_{D'_-}(u))\;.
\end{align*}
We will consider $$u\in \bigcup\limits_{i\in \Z_-} [it+(i+1)\eps-2c'\epsilon,it+2c'\epsilon]
\cup\bigcup\limits_{i\in \Z_+} [it-2c'\epsilon,it+(i+1)\eps+2c'\epsilon]$$
for three different cases:
\begin{enumerate}[leftmargin=*]
	\item [(A)] $|i+1|\leq t/(2\eps)-1$, 
	\item [(B)] $t/(2\eps)-1<|i+1|\leq t/\eps$, and, 
	\item [(C)] $|i+1|> t/\eps$.
\end{enumerate}
For case (A), we begin by recalling that the support of $D'_-$ is
$$\bigcup\limits_{i\in \Z_-} [it+t/2+(i+1)\eps,it+t/2]
	\cup\bigcup\limits_{i\in \Z_+} [it+t/2,it+t/2+(i+1)\eps]\; .$$
For case (A), the support of both $D'_+$ and $D'_-$ are intervals 
with length at most $t/2-\eps$ (see Claim~\ref{app:lem:distribution-support}). 
Thus, the gaps between $D'_+$ intervals and $D'_-$ intervals are at least $\eps$.
Any $u$ for case (A) is at most $2c'\epsilon$ 
far from a $D'_+$ support interval, 
where $c'$ is sufficiently small, 
\newblue{
combining with the fact that gaps between $D'_+$ intervals and $D'_-$ intervals are at least $\eps$, }
this makes $u$ not inside the support of $D'_-$.
Therefore, for any $u$ in case (A), we have that
$P_{D'_-}(u)=0$, which implies $P_{D'_+}(u)=\Omega(P_{D'_-}(u))$.

\newblue{
For case (B), we note by Claim \ref{app:clm:ptf-alt-distr}, we have
$$P_{D'_-}(u)=\frac{\Theta(t)}{\lambda(B_-)}
\sum_{i\in \Z}\frac{1}{|i+1|}\mathbf{1}(u\in it+t/2 +(i+1)(B_- -t/2))\rho_{\sigma_\sn}(u)\; ,$$
and 
$$P_{D'_+}(u)=\frac{\Theta(t)}{\lambda(B_+)}
\sum_{i\in \Z}\frac{1}{|i+1|}\mathbf{1}(u\in it+(i+1)[0,\eps])\rho_{\sigma_\sn}(u)\; .$$
Note that since $t/(2\eps-1)<|i+1|\leq t/\eps$, thus these intervals has lenghth at most $t$, the $D_+$ (resp. $D_-$) intervals do not overlap with other $D_+$ (resp. $D_-$) intervals.  
Therefore taking
$i=\lfloor u/t \rfloor$, 
$$P_{D'_-}(u)=\frac{\Theta(t)}{\lambda(B_-)|i+1|}\mathbf{1}\left (u\in\bigcup\limits_{i\in \Z} it+t/2 +(i+1)(B_- -t/2)\right )\; ,$$
and 
$$P_{D'_+}(u)=\frac{\Theta(t)}{\lambda(B_+)|i+1|}\mathbf{1}\left (u\in\bigcup\limits_{i\in \Z} it+(i+1)[0,\eps]\right )\; .$$
Let $S_-=\bigcup\limits_{i\in \Z} it+t/2 +(i+1)(B_- -t/2)$ and $S_+=\bigcup\limits_{i\in \Z} it+(i+1)[0,\eps]$.
Therefore it suffices to show that 
$$\lambda(B_-)=\Omega(\lambda(B_+))=\Omega(\eps)\; ,$$
and for any $u$ in case (B)
$$u\in S_-\text{ implies }u\in S_+\; .$$
$\lambda(B_-)=\Omega(\eps)$ follows from the definition of $B_-$, 
since
\begin{align*}
	    B_-\eqdef [t/2,t/2+\epsilon]
	    &-\bigcup\limits_{i={\frac{t}{2\eps}-1}}^{\frac{t}{\eps}-1}g([it-2c'\epsilon,it])
	    -\bigcup\limits_{i={\frac{t}{2\eps}-1}}^{\frac{t}{\eps}-1} 
	    g([it+(i+1)\epsilon,it+(i+1)\epsilon+2c'\epsilon])\\
	    &-\bigcup\limits_{i={-\frac{t}{\eps}}-1}^{-\frac{t}{2\eps}-1} 
	    g([it+(i+1)\epsilon-2c'\epsilon,it+(i+1)\epsilon])
	    -\bigcup\limits_{i={-\frac{t}{\eps}}-1}^{-\frac{t}{2\eps}-1} 
	    g([it,it+2c'\epsilon])\; .
\end{align*}
Therefore, recall the definition of the mapping $g$, we get that 
\newblue{
$$\lambda(B_-)
\geq \epsilon-4\sum\limits_{i={\frac{t}{2\eps}}}^\frac{t}{\eps} \frac{2c'\epsilon}{|i|}
\geq \eps-4\frac{t}{\eps} \frac{2c'\epsilon}{\frac{t}{2\eps}}
=\eps-16c'\eps
=\Omega(\eps)\;,$$	}
where the last equality follows from the fact that 
$c'$ is a sufficiently small constant.
To prove for any $u$ in case (B)
$$u\in S_-\text{ implies }u\in S_+\; .$$
We prove the contrapositive, let $u$ be in case (B) and $u\not\in S_+$, then calculations show
\begin{align}
u\in	    &\bigcup\limits_{i={\frac{t}{2\eps}-1}}^{\frac{t}{\eps}-1}g([it-2c'\epsilon,it])
	    \cup\bigcup\limits_{i={\frac{t}{2\eps}-1}}^{\frac{t}{\eps}-1} 
	    g([it+(i+1)\epsilon,it+(i+1)\epsilon+2c'\epsilon])\\
	    &\cup\bigcup\limits_{i={-\frac{t}{\eps}}-1}^{-\frac{t}{2\eps}-1} 
	    g([it+(i+1)\epsilon-2c'\epsilon,it+(i+1)\epsilon])
	    \cup\bigcup\limits_{i={-\frac{t}{\eps}}-1}^{-\frac{t}{2\eps}-1} 
	    g([it,it+2c'\epsilon])\; .
\end{align}
Notice that those intervals are exactly the intervals we carved out. 
Thus, $u\not \in S_-\; .$
\newblue{(We note that the intuition behind this is the following.
For the interval $[g(a),g(b)]$ we carved out on $B-$, 
we made $[a ,b]$ missing from the support of $D'_-$. 
So what we did can be thought of as carving these intervals 
to make the supports of $D'_+$ and $D'_-$ having $2c'\eps$ gaps 
between each other for $t/(2\eps)-1<|i+1|\leq t/\eps$. 
This ensures that after applying the $\gaus^\tru$ noise, 
their supports still do not overlap.)}
This completes the proof for case~(B).}

It remains to analyze case (C).
By Claim~\ref{app:clm:ptf-alt-distr},
we have that
\begin{align*}
    \frac{P_{D'_+}(u)}{P_{D'_-}(u)}
    &=\frac{\frac{\Theta(t)}{\lambda(B_+)}
    \sum_{i\in \Z}\frac{1}{|i+1|}\mathbf{1}(u\in it+\psi_+ +(i+1)(B_+-\psi_+))\rho_{\sigma_\sn}(u)}
    {\frac{\Theta(t)}{\lambda(B_-)}
	\sum_{i\in \Z}\frac{1}{|i+1|}\mathbf{1}(u\in it+\psi_- +(i+1)(B_- - \psi_-))\rho_{\sigma_\sn}(u)}\\
	&=\Omega\left (
	\frac{\sum_{i\in \Z}\frac{1}{|i+1|}\mathbf{1}(u\in it+\psi_+ +(i+1)(B_+-\psi_+))}
    {\sum_{i\in \Z}\frac{1}{|i+1|}\mathbf{1}(u\in it+\psi_- +(i+1)(B_- - \psi_-))}
	\right )\; ,
\end{align*}
where the second equality follows from $\lambda(B_-)=\Omega(\lambda(B_+))=\Omega(\eps)$.
\newblue{
Since $B_-$ is a subset of $[t/2,t/2+\eps]$, thus
$$\sum_{i\in \Z}\frac{1}{|i+1|}\mathbf{1}(u\in it+\psi_- +(i+1)(B_- - \psi_-))\leq \sum_{i\in \Z}\frac{1}{|i+1|}\mathbf{1}\left (u\in it+t/2 +(i+1)\big (\left [t/2, t/2+\eps\right ] -t/2\big )\right )\; ,$$
therefore}
$$	\frac{P_{D'_+}(u)}{P_{D'_-}(u)}= \Omega\left (
	\frac{\sum_{i\in \Z}\frac{1}{|i+1|}\mathbf{1}(u\in it +(i+1)[0,\eps])}
    {\sum_{i\in \Z}\frac{1}{|i+1|}\mathbf{1}(u\in it+t/2 +(i+1)([t/2, t/2+\eps] - t/2))}
	\right )\; .$$
Thus we just need to show the following claim to finish case~(C).
\begin{claim} \label{app:clm:far-side-massart} 
\newblue{
For 
$$u\in \bigcup\limits_{i\in \{\Z_-:|i+1|> t/\eps\}} [it+(i+1)\eps-2c'\epsilon,it+2c'\epsilon]
\cup\bigcup\limits_{i\in \{\Z_+:|i+1|> t/\eps\}} [it-2c'\epsilon,it+(i+1)\eps+2c'\epsilon]\; ,$$
we have that
$$\frac{\sum_{i\in \Z}\frac{1}{|i+1|}\mathbf{1}(u\in it +(i+1)[0,\eps])}
{\sum_{i\in \Z}\frac{1}{|i+1|}\mathbf{1}(u\in it+t/2 +(i+1)([t/2, t/2+\eps] - t/2))}=\Omega(1)\; .$$}
\end{claim}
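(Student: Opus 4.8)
\textbf{Proof proposal for Claim~\ref{app:clm:far-side-massart}.} The plan is to read both the numerator and the denominator as weighted counts of intervals containing the fixed point $u$ --- the numerator counts the $+1$-intervals $[it,\,it+(i+1)\eps]$ containing $u$, weighted by $1/|i+1|$, and the denominator does the same for the $-1$-intervals $[it+t/2,\,it+t/2+(i+1)\eps]$ --- and to show that both equal $\Theta(\eps/t)$, so that their ratio is $\Theta(1)$, hence $\Omega(1)$. First I would fix $u$ in the stated range and assume $u>0$; the case $u<0$ is entirely analogous, using that for negative indices the set $it+(i+1)[0,\eps]$ is the leftward interval $[it+(i+1)\eps,\,it]$ and working with $|u|$ throughout. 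By hypothesis $u$ lies within $2c'\eps$ of the $i^*$-th $+1$-interval for some $i^*$ with $|i^*+1|>t/\eps$; since $\eps<t$ and $c'$ is a small constant, this forces $u=\Theta(i^* t)$, so $u\geq\Omega(t^2/\eps)$ and in particular $\eps u/t^2=\Omega(1)$.

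Then I would count the $+1$-intervals covering $u$: the condition $u\in it+(i+1)[0,\eps]=[it,\,it+(i+1)\eps]$ with $i\geq 0$ is equivalent to $\frac{u-\eps}{t+\eps}\leq i\leq\frac{u}{t}$, an interval of length $\frac{\eps(u+t)}{t(t+\eps)}=\Theta(\eps u/t^2)$; hence the number $N_+$ of admissible $i$ is $\Theta(\eps u/t^2)$ (the additive $\pm 1$ from floor/ceiling is absorbed because $\eps u/t^2=\Omega(1)$, and $N_+\geq 1$ because the $2c'\eps$-fringe is still covered by an adjacent interval in this overlapping regime). Moreover all admissible $i$ lie within a window of width $\Theta(\eps u/t^2)=\Theta\big((\eps/t)(u/t)\big)$ about $u/t$, and since $\eps/t$ is small (Condition~\ref{cond:params}(i)) this window is narrow relative to $u/t$, so $|i+1|=\Theta(u/t)$ for every admissible $i$. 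Therefore the numerator is $\Theta\big(N_+\cdot(t/u)\big)=\Theta(\eps/t)$.

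The denominator is handled identically: $u\in it+t/2+(i+1)\big([t/2,t/2+\eps]-t/2\big)=[it+t/2,\,it+t/2+(i+1)\eps]$ iff $\frac{u-t/2-\eps}{t+\eps}\leq i\leq\frac{u-t/2}{t}$, which again gives $N_-=\Theta(\eps u/t^2)$ admissible indices, all in a window of width $\Theta\big((\eps/t)(u/t)\big)$ about $u/t$ (shifting the picture by $t/2\ll u$ changes nothing at this scale), so each has $|i+1|=\Theta(u/t)$ and the denominator is $\Theta(\eps/t)$. Dividing yields $\Omega(1)$ (in fact $\Theta(1)$), and the $u<0$ case follows by the same computation with $|u|$ in place of $u$.

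The step I expect to be the main obstacle is not the size estimate itself but the bookkeeping that justifies the two ``$\Theta$'' claims uniformly: one must check that the $2c'\eps$ enlargement in the hypothesis never pushes $u$ out of the union of the covering intervals (here $c'$ small and $|i^*+1|>t/\eps$ are used to guarantee overlap), and that the floor/ceiling corrections and the variation of the weight $1/|i+1|$ over the index window are all swallowed by the hidden constants --- which is exactly where one invokes $\eps/t$ being small from Condition~\ref{cond:params}. None of this is deep, but it requires carefully tracking the parameter inequalities.
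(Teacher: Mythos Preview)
Your argument is correct, but it proceeds differently from the paper's. The paper does not compute either sum explicitly. Instead, it observes that because the numerator intervals $it+(i+1)[0,\eps]$ have length exceeding $t$ in this regime and therefore overlap contiguously, the indices $i$ for which the numerator indicator fires form a \emph{block} $\{j,\dots,k\}$, so the numerator equals $\sum_{i=j}^k 1/|i+1|$. It then compares endpoints: since $u$ lies to the right of the $(j-1)$-th numerator interval and the $(j-2)$-th denominator interval ends even earlier, and symmetrically at the top end, the denominator indices are contained in $\{j-1,\dots,k\}$. Thus the denominator is at most $\sum_{i=j-1}^k 1/|i+1|$, which differs from the numerator by a single extra term of size comparable to all the others, giving the $\Omega(1)$ ratio. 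Your approach instead counts the indices directly --- solving the two linear inequalities to get a real interval for $i$ of length $\Theta(\eps u/t^2)$, noting all such $i$ satisfy $|i+1|=\Theta(u/t)$, and concluding each sum is $\Theta(\eps/t)$. This is more quantitative (you actually evaluate the sums) and arguably more elementary, at the cost of slightly more parameter bookkeeping; the paper's endpoint-comparison is terser and sidesteps the floor/ceiling and weight-variation estimates entirely. One small remark: your claim that the denominator is $\Theta(\eps/t)$ tacitly assumes $N_-\geq 1$; at the very edge of the range this need not hold, but then the denominator is zero and the ratio is trivially $\Omega(1)$, so the conclusion is unaffected. You might phrase that step as ``denominator $=O(\eps/t)$, numerator $=\Theta(\eps/t)$'' to be safe.
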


The idea is to determine for both the numerator and denominator which values of $i$ cause the corresponding indicator function to be 1.
Notice for $(-\infty,-t^2/\eps-t)\cup(t^2/\eps-t,\infty)$, the indicator function in the numerator $it+(i+1)[0,\eps/2]$
will have length at least $t$.
Thus for $u\in (-\infty,-t^2/\eps-2t+\newblue{2c'\eps})\cup (t^2/\eps-\newblue{2c'\eps},\infty)$, 
$u$ must be in at least one of these intervals in the numerator. In particular, there must be $j\leq k$ such that the non-zero terms in the numerator correspond to the indicator terms with $j\leq i \leq k.$
Therefore, the numerator is $\sum_{i=j}^k \frac{1}{|i+1|}$.

\newblue{
Then we examine the terms for the denominator. 
For convenience, we use $l_i^\numer$ and $r_i^\numer$ (resp. $l_i^\deno$ and $r_i^\deno$) 
to denote the left endpoint and right endpoint of the 
$i$th term in the numerator (resp. denominator).
Since the $j-1$th term in the numerator is not satisfied, $u>r_{j-1}^\numer$.
Since $r_{j-1}^\numer>r_{j-2}^\deno$, we know that $u>r_{j-2}^\deno$, 
thus the $j-2$th term in the denominator cannot be satisfied. 
Then, since the $(k+1)$-st term in the numerator is not satisfied, 
we know that $u<l_{k+1}^\numer$. 
Since $l_{k+1}^\numer<l_{k+1}^\deno$, 
$u<l_{k+1}^\deno$, thus the $(k+1)$-st term 
in the denominator is not satisfied. 
Now we have that the denominator 
is at most $\sum_{i=j-1}^k \frac{1}{|i+1|}$.}

Then, we just need to prove for any $j,k\in \newblue{\Z}$ and $j\leq k$ 
(since from the $j$-th to $k$-th term 
in the numerator are satisfied, it must be that $k<-1$ or $j>-1$), 
it holds that
$$\frac{\sum_{i=j}^k \frac{1}{|i+1|}}
{\sum_{i=j-1}^k \frac{1}{|i+1|}}=\Omega(1)\;.$$
This is easy to see, 
since the denominator has at most one term more 
than the numerator and all terms are of comparable size.
This completes the proof for Lemma \ref{lem:ptf-alt-distr}.
\end{proof}

\newblue{
Now we prove the following helper lemma for Lemma \ref{lem:ptf-alt-distr}.

\begin{claim} \label{app:clm:interval-quantity}
$$\bigcup\limits_{i\in \Z_-} [it+(i+1)\eps-c'\epsilon,it+c'\epsilon]\cup\bigcup\limits_{i\in \Z_+} [it-c'\epsilon,it+(i+1)\eps+c'\epsilon]$$
can be equivalently written as the union of $O(t/\eps)$ many intervals on $\R$.
\end{claim}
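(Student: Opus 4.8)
The plan is to observe that the displayed set, which is a union of intervals indexed by $i\in\Z$, is in fact a union of only $O(t/\eps)$ intervals: all the intervals with $|i|$ beyond roughly $t/\eps$ overlap their neighbours and therefore merge into two unbounded intervals (one running to $+\infty$, one to $-\infty$), and only the $O(t/\eps)$ intervals with $|i|=O(t/\eps)$ survive as separate pieces.

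Concretely, first I would treat the positive part $\bigcup_{i\in\Z_+}I_i^+$, where $I_i^+:=[\,it-c'\eps,\ it+(i+1)\eps+c'\eps\,]$. Since the left endpoints $it-c'\eps$ are increasing in $i$, consecutive intervals $I_i^+$ and $I_{i+1}^+$ meet precisely when $it+(i+1)\eps+c'\eps\ge (i+1)t-c'\eps$, i.e.\ when $(i+1)\eps+2c'\eps\ge t$, i.e.\ when $i+1\ge t/\eps-2c'$. Since $t/\eps$ is an integer by Condition~\ref{cond:params}(i) and $c'\in(0,1/2)$, this holds for every integer $i\ge t/\eps-1$; hence $\bigcup_{i\ge t/\eps-1}I_i^+$ is a single interval of the form $[(t/\eps-1)t-c'\eps,\ \infty)$ (a chain of successively overlapping intervals with monotone endpoints has an interval as its union), and the indices $i$ with $i<t/\eps-1$ account for at most $t/\eps-1=O(t/\eps)$ more intervals. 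A symmetric computation handles $\bigcup_{i\in\Z_-}I_i^-$ with $I_i^-:=[\,it+(i+1)\eps-c'\eps,\ it+c'\eps\,]$: here $I_i^-$ meets $I_{i+1}^-$ when $it+c'\eps\ge (i+1)t+(i+2)\eps-c'\eps$, i.e.\ when $(i+2)\eps\le 2c'\eps-t$, which holds for every integer $i\le -t/\eps-1$, so $\bigcup_{i\le -t/\eps-1}I_i^-$ is a single interval running down to $-\infty$, leaving at most $O(t/\eps)$ intervals coming from the indices $-t/\eps-1<i\le -1$.

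Putting the two parts together, the whole set is the union of two unbounded intervals and $O(t/\eps)$ bounded ones, hence a union of $O(t/\eps)$ intervals, which is the claim. The only mildly delicate points are the bookkeeping of which of $\Z_+,\Z_-$ owns the index $i=0$ and the exact merging thresholds; but these thresholds differ from $\pm t/\eps$ only by an additive $O(1)$, and since $t/\eps$ is a large even integer such $O(1)$ shifts do not affect the final $O(t/\eps)$ count. I do not expect any genuine obstacle here beyond carefully verifying the two overlap inequalities and the elementary fact that a chain of successively overlapping intervals with monotone endpoints has an interval as its union.
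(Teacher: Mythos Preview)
Your proposal is correct and follows essentially the same approach as the paper: both arguments observe that once $|i+1|$ exceeds roughly $t/\eps$ the successive intervals have length at least $t$ and hence overlap, merging into two unbounded rays, leaving only $O(t/\eps)$ bounded intervals in the middle. Your version is more explicit about the overlap inequalities (and your caveat about $O(1)$ shifts in the exact threshold is warranted---your stated bound $i\le -t/\eps-1$ for the negative side is off by one when $c'<1/2$), whereas the paper's proof is terser and simply asserts the length-$\ge t$ observation.
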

\begin{proof}
We notice that for $|i+1|\geq t/\eps$, the interval terms inside 
the union has length at least $t$, therefore they overlap with each other and cover the whole space.
Thus, we can rewrite the expression as
\begin{align*}
&\bigcup\limits_{i\in \Z_-} [it+(i+1)\eps-c'\epsilon,it+c'\epsilon]\cup\bigcup\limits_{i\in \Z_+} [it-c'\epsilon,it+(i+1)\eps+c'\epsilon]\\
=&(-\infty,-t^2/\eps-t+c'\eps]\\
&\cup\bigcup\limits_{i\in \{\Z_- |i+1|<t/\eps\}} [it+(i+1)\eps-c'\epsilon,it+c'\epsilon]\cup \bigcup\limits_{i\in \{\Z_+ |i+1|<t/\eps\}} [it-c'\epsilon,it+(i+1)\eps+c'\epsilon]\\
&\cup [ -t^2/\eps-t-c'\eps,\infty)\; , 
\end{align*}
which is the union over $O(t/\eps)$ many intervals.
\end{proof}}

\subsection{Proof of Theorem \ref{thm:lwe-to-massart}} \label{app:subsec:main-proof}

\begin{proof}[Proof of Theorem \ref{thm:lwe-to-massart}]
We give a reduction from $\lwe(m,\R_1^n,\{\pm 1\}^n,\gaus_{\sigma},\mod_1)$ to 
$\massart(m',N,\eta,\opt)$. 
Suppose there is an algorithm $A$ for the problem $\massart(m',N,\eta,\opt)$ with $2\eps'$ distinguishing advantage
such that 
(a) if the input is from the alternative hypothesis case, $A$ outputs ``alternative hypothesis'' with 
probability $\alpha$; and 
(b) if the input is from the null hypothesis case, $A$ outputs ``alternative hypothesis'' with probability 
at most $\alpha-2\eps'$.

Given $m$ samples from $\lwe(m,\R_1^n,\{\pm 1\}^n,\gaus_{\sigma},\mod_1)$, 
we run Algorithm 
\ref{algo:reduction} with input parameters $(m',t,\eps,\delta,\eta')$, 
where $\delta$ and $\eta'/\eta$ 
are sufficiently small positive constants.
If Algorithm \ref{algo:reduction} fails, we output ``alternative hypothesis'' 
with probability $\alpha$ 
and ``null hypothesis'' with probability $1-\alpha$. 
Otherwise, Algorithm \ref{algo:reduction} succeeds, 
and we get $m'$ many i.i.d.\ samples $(\bx_1,y_1),(\bx_2,y_2)\cdots,(\bx_{m'},y_{m'})\in \R^n\times\{\pm 1\}$ 
from $D_\PTF$ 
(they are i.i.d.\ according to Observation \ref{obv:reduction-algorithm-iid}).
With $d=c(t/\eps)$, 
where $c$ is a sufficiently large constant,
we apply the degree-$d$ Veronese mapping
$V_d:\R^n\mapsto\R^N$ on the samples 
to get 
$(V_d(\bx_1),y_1),(V_d(\bx_2),y_2)\cdots,(V_d(\bx_{m'}),y_{m'})\in \R^N\times\{\pm 1\}$.
Then we give these samples to $A$ 
and argue that the above process can distinguish $\lwe(m,\R_1^n,\{\pm 1\}^n,\gaus_{\sigma},\mod_1)$ 
with at least $\eps'-O(\delta)$ advantage. 

We let $D_\LTF$ denote the distribution of $(V_d(\bx),y)$, where $(\bx,y)\sim D_\PTF$ 
($D_\PTF$ depends on $\bs$ 
and which case the original LWE samples come from). 
We note that the samples we provide 
to the Massart distinguisher are 
$m'$ many i.i.d.\ samples from 
$D_\LTF$.
We claim that if we can prove the following items, 
then we are done.
\begin{enumerate}[leftmargin=*]
    \item In the alternative (resp. null) hypothesis case, Algorithm \ref{algo:reduction}
    in the above process fails with probability at most $1/2$.
    \item {\bf Completeness:} If the LWE instance is from the alternative hypothesis case, then $D_\LTF$ has 
    at most $O(\delta/m')$ $\dtv$ distance from a distribution $D$ and there is an LTF $h$ such that 
    \begin{enumerate}
        \item $\pr_{(\bx,y)\sim D}[h(\bx)\neq y]=\exp(-\Omega(t^4/\eps^2))$, and
        \item $D$ satisfies the $\eta$ Massart condition with respect to to the LTF $h$.
    \end{enumerate}
    \item {\bf Soundness:} If the LWE instance is from the null hypothesis case, then $R_\opt(D_\LTF)=\Omega(\eta)$. 
\end{enumerate}
Suppose we have proved the above items. 
In the alternative hypothesis case,
the above process outputs ``alternative hypothesis'' with probability at least $\alpha-O(\delta)$,
since $D$ is at most $O(\delta/m')$ in 
$\dtv$ distance from $D_\LTF$.
Then, in the null hypothesis case, the process outputs 
``alternative hypothesis'' with probability at most $\alpha-\eps'$. 
Therefore, the distinguishing advantage is at least $\eps'-O(\delta)$.
Now it remains to prove these three items.

For the first item, recall that for each sample, 
Algorithm \ref{algo:reduction} uses Algorithm 
\ref{algo:sampling} to generate a new sample. 
If Algorithm \ref{algo:sampling} accepts, 
it outputs a new sample. 
According to Lemma \ref{app:lem:sampling-efficiency}, 
it accepts with probability 
$\Omega(\frac{\lambda(B)(t-\psi)}{t^2})$. 
Under our choice of $\psi$, this is at least 
$\Omega(\eps/t)$ (since we have shown that $\lambda(B_+),\;\lambda(B_-)=\Omega(\eps)$ 
in the proof of Lemma 
\ref{lem:ptf-alt-distr} and $\psi$ is either $\psi_+=0$ or $\psi_-=t/2$).
With $m'=\new{c}(\epsilon/t)m$, 
\new{where $c>0$ is sufficiently small} 
and $m(\epsilon/t)^2$ is sufficiently large, 
by applying the Hoeffding bound, one can see
that Algorithm \ref{algo:reduction} succeeds with probability at least $1/2$.

For the second item, suppose that 
the initial LWE samples are 
from the alternative hypothesis case. 
Then, in the proof of Lemma \ref{lem:ptf-alt-distr} 
(at the beginning of the proof), 
we have shown that there exists 
a distribution $D^\tru_\PTF$ 
(denoted by $D^\tru$ in the proof of Lemma \ref{lem:ptf-alt-distr}) within total variation distance
$O(\delta/m')$ from $D_\PTF^\alt$,
and a degree-$d$ PTF $\sgn(p(\bx))$ such that
\begin{enumerate}
    \item $\pr_{(\bx,y)\sim D^\tru_\PTF}[\sgn(p(\bx))\neq y)=\exp(-\Omega(t^4/\eps^2)]$; and
    \item $D^\tru_\PTF$ satisfies the $O(\eta')$ Massart noise condition with respect to $\sgn(p(\bx))$.
\end{enumerate}
With $\eta'/\eta$ being a sufficiently small constant, 
the second item satisfies the $\eta$ Massart noise condition.
Then, letting $D^\tru_\LTF$ denote the distribution of $(V_d(\bx),y)$ where $(\bx,y)\sim D^\tru_\PTF$, 
one can see that $D^\tru_\LTF$ is at most $O(\delta/m')$ in total variation distance from $D_\LTF$.
Let $h$ denote the corresponding LTF such that $h(V_d(\bx))=\sgn(p(\bx))$. 
Then, after the Veronese mapping,
it must be the case that:
\begin{enumerate}
    \item $\pr_{(\bx,y)\sim D^\tru_\LTF}[h(\bx)\neq y]=\exp(-\Omega(t^4/\eps^2))$; and
    \item $D^\tru_\LTF$ satisfies the $\eta$ Massart noise condition with respect to $h(\bx)$.
\end{enumerate}
This gives the second item.

For the third item, if the initial LWE samples are from the null hypothesis case, then according to 
Lemma \ref{app:lem:ptf-nul-distr},
for any $\bu\in \R^n$, 
$$\pr_{(\bx,y)\sim D_\PTF^\nul}[y=+1\mid \bx=\bu]=(1\pm O(\delta))(1-\eta')\; .$$
Therefore, $R_{\opt}(D_\PTF)=\Omega(\eta')=\Omega(\eta)$, thus $R_{\opt}(D_\LTF)=\Omega(\eta)$.
This completes the proof.
\end{proof}

\section{Putting Everything Together: Proof of Main Hardness Result}
\label{app:proof-main-thm}

Applying Theorem \ref{thm:lwe-to-massart} together 
with Lemma \ref{lem:continuous-lwe-hardness} yields our main theorem:

\begin{theorem} \label{thm:main-formal}
Under Assumption \ref{asm:LWE-hardness}, for any $\zeta \in (0, 1)$, 
there exists $\chi > 0$ such that there is no $N^{O(\log^{\chi} N)}$-time algorithm 
that solves $\massart\left (m'=N^{O(\log^{\chi} N)}, N,\eta=1/3,\opt=1/2^{\log^{1 - \zeta} N}\right)$ 
with $1/3$ advantage.	
\end{theorem}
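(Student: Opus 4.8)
The plan is to derive Theorem~\ref{thm:main-formal} by instantiating Theorem~\ref{thm:lwe-to-massart} with a suitably chosen parameter tuple and feeding in the continuous--LWE hardness of Lemma~\ref{lem:continuous-lwe-hardness} (which rests on Assumption~\ref{asm:LWE-hardness}). Fix $\zeta\in(0,1)$. I would take the advantage parameter $\eps'=1/6$ (so $2\eps'=1/3$), $\delta$ a sufficiently small absolute constant, the noise level $t$ as large as item~(iii) of Condition~\ref{cond:params} permits, i.e.\ $t=\Theta(1/\sqrt{n\log(n/\delta)})$, and $\eps=t/k$ where $k:=t/\eps$ is an even integer. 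The integer $k$ is chosen of order $n^{1/(1+\zeta)}$ (up to polylogarithmic factors), tuned so that, with $N=(n+1)^d$ and $d=\Theta(t/\eps)$, one has $t^4/\eps^2=t^2k^2$ equal to a sufficiently large constant times $\log^{1-\zeta}N$; this is self-consistent because $\log N=\Theta(k\log n)$. Finally I would set $\sigma=\Theta(n^{-\gamma})$ for a large enough absolute constant $\gamma$ (Lemma~\ref{lem:continuous-lwe-hardness} permits any $\gamma>0$), which is exactly what makes item~(iv) of Condition~\ref{cond:params} hold; items~(i)--(ii) are immediate. The reduction consumes $m=2^{O(n^\beta)}$ LWE samples (as in Lemma~\ref{lem:continuous-lwe-hardness}; using fewer only makes LWE harder) and produces $m'=c(\eps/t)m=2^{O(n^\beta)}$ Massart samples, where $\beta\in(0,1)$ is a constant to be pushed close to $1$.

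With these choices, Lemma~\ref{lem:continuous-lwe-hardness} gives that $\lwe(m,\R_1^n,\{\pm1\}^n,\Dgaus_\sigma,\mod_1)$ has no $2^{\Theta(n^\beta)}$-time algorithm with the constant advantage $\eps'-O(\delta)$, even after accounting for the $\poly(m,N,\log(1/\delta))=2^{O(n^\beta)}$ overhead of the reduction (the lemma yields hardness against $2^{cn^\beta}$ time for every constant $c$, so the exponents can be chosen to absorb this overhead). Theorem~\ref{thm:lwe-to-massart} then yields: no $2^{\Theta(n^\beta)}$-time algorithm solves $\massart(m',N,\eta=1/3,\opt_0)$ with $1/3$ advantage, where $\opt_0=\exp(-\Omega(t^4/\eps^2))$. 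Since $t^4/\eps^2$ was taken to be a large enough constant multiple of $\log^{1-\zeta}N$, we get $\opt_0\le 2^{-\log^{1-\zeta}N}$; and because enlarging the target error in Definition~\ref{def:test-Massart} only enlarges the alternative-hypothesis family while leaving the null fixed, hardness transfers from $\opt_0$ to $\opt=2^{-\log^{1-\zeta}N}$.

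It remains to re-express the bounds in terms of $N$. Writing $k=n^{1/(1+\zeta)+o(1)}$, one has $\log^{1+\chi}N=\Theta((k\log n)^{1+\chi})=n^{(1+\chi)/(1+\zeta)+o(1)}$, so choosing $\beta$ close enough to $1$ and any $\chi\in(0,\zeta)$ makes $(1+\chi)/(1+\zeta)<\beta$, hence $\log^{1+\chi}N=o(n^\beta)$. Therefore $N^{O(\log^\chi N)}=2^{O(\log^{1+\chi}N)}=2^{o(n^\beta)}$, which shows simultaneously that the $2^{\Theta(n^\beta)}$-time lower bound rules out every $N^{O(\log^\chi N)}$-time algorithm, and that the sample count $m'=2^{O(n^\beta)}$ produced by the reduction is of the claimed form $N^{O(\log^\chi N)}$. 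This is precisely Theorem~\ref{thm:main-formal} for this $\chi$.

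The step that needs real care is balancing the coupled constraints on $k=t/\eps$: it must be large enough that $\opt_0=\exp(-\Omega(t^2k^2))$ falls below $2^{-\log^{1-\zeta}N}$ --- which, with $t$ pinned near its Condition~\ref{cond:params}(iii) ceiling, forces $k\gtrsim n^{1/(1+\zeta)}$ --- yet small enough that $\log N=\Theta(k\log n)$ stays $n^{\beta-\Omega(1)}$, so that $2^{\Theta(n^\beta)}$ time genuinely exceeds $N^{\Theta(\log^\chi N)}$ time, all while items~(i)--(iv) of Condition~\ref{cond:params} are met by the final tuple $(n,m,m',t,\eps,\sigma,\delta)$. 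The crux is that the admissible exponents for $k$ form a nonempty window $[\,1/(1+\zeta),\ \beta\,)$ as $\beta\uparrow 1$, leaving room for a strictly positive $\chi$ with $(1+\chi)/(1+\zeta)<\beta$; once this is granted, the remainder is routine tracking of the $O(\cdot)$ constants inherited from Theorem~\ref{thm:lwe-to-massart} and Lemma~\ref{lem:continuous-lwe-hardness}.
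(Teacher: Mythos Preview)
Your proposal is correct and follows essentially the same route as the paper: instantiate Lemma~\ref{lem:continuous-lwe-hardness} and Theorem~\ref{thm:lwe-to-massart} with a concrete parameter tuple, verify Condition~\ref{cond:params}, and translate the resulting $2^{\Theta(n^\beta)}$ bounds into $N^{O(\log^\chi N)}$ form. The paper makes the specific choices $t=n^{-0.5-0.2\zeta}$, $\eps=\Theta(n^{-1.5})$ (so $t/\eps=\Theta(n^{1-0.2\zeta})$), $\beta=1-0.1\zeta$, and $\chi=0.01\zeta$, whereas you push $t$ to its Condition~\ref{cond:params}(iii) ceiling and take $k=t/\eps\approx n^{1/(1+\zeta)}$; both tuples sit in the feasible window you correctly identified and lead to the same conclusion.
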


\begin{proof}
We first take $\chi = 0.01\zeta$.
For Lemma \ref{lem:continuous-lwe-hardness}, we take $\beta = 1 - 0.1\zeta$ and $\gamma=-5$. 
For Theorem \ref{thm:lwe-to-massart}, we take 
\begin{enumerate}
    \item $t=n^{-0.5-0.2\zeta}$,
    \item $\epsilon= \Theta(n^{-1.5})$, 
    \item $\delta$ be a sufficiently small constant, 
    \item $\eta=1/3$, and, 
    \item $m'=c(\epsilon/t)m$, where $c$ is a sufficiently small positive constant.
\end{enumerate}
One can easily check that the conditions in Condition \ref{cond:params} are satisfied.
According to our parameters of choice, the remaining parameters for the hardness are:
\begin{enumerate}
	\item $N=n^{O(t/\epsilon)} \leq n^{O(n^{1 - 0.2\zeta})}$,
	\item $\opt= \exp(-\Omega(t^4/\epsilon^2))=2^{-\Omega(n^{1-0.8\zeta})} 
		\leq \newblue{c2^{-n^{1 - \zeta}}
		\leq c2^{-\log^{1 - \zeta} N}}$, \newblue{for any constant c},
	\item \newblue{$m'=c(\epsilon/t)m=\Omega(n^{-\Theta(1-0.2\zeta)}2^{O(n^\beta)})\geq 2^{O(n^{1 - 0.11\zeta})}
		\geq N^{O\left (\log^{\frac{1-0.11\zeta}{1-0.19\zeta}} N\right )}\geq N^{O(\log^{\chi} N)}$, and} 
	\item \newblue{the time complexity lower bound is 
		$$2^{\Omega\left (n^{1 - 0.1\zeta}\right )} = N^{\omega(\log^{\chi} N)}\; .$$}
\end{enumerate}
Therefore, according to Theorem \ref{thm:lwe-to-massart},
there is no $N^{O(\log^{\chi} N)}$-time algorithm that solves 
$\massart(m'=N^{O(\log^{\chi} N)},N,\eta=1/3,\opt=1/2^{\log^{1 - \zeta} N})$ with $1/3$ advantage.
This completes the proof.
\end{proof}

The above theorem gives the following corollary. 
We note that Corollary \ref{crl:massart-hardness} implies 
our informal Theorem \ref{thm:main-inf}, 
and Corollary \ref{crl:massart-hardness} has stronger parameters.

\begin{corollary} \label{crl:massart-hardness} 
Under Assumption \ref{asm:LWE-hardness}, for any $\zeta \in (0, 1)$, there exists $\chi > 0$ 
such that no $N^{O(\log^{\chi} N)}$ time and sample complexity algorithm 
can achieve an error of $2^{O(\log^{1 - \zeta} N)}\cdot \opt$ 
in the task of learning LTFs on $\R^N$  with $\eta=1/3$ Massart noise.
\end{corollary}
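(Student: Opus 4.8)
The plan is to obtain Corollary~\ref{crl:massart-hardness} from the decision hardness of Theorem~\ref{thm:main-formal} by a routine ``learning-to-testing'' reduction. Fix $\zeta\in(0,1)$. First I would invoke Theorem~\ref{thm:main-formal} with the smaller parameter $\zeta':=\zeta/2$; this yields a constant $\chi'>0$ together with a sample bound $m'(N)=N^{\Theta(\log^{\chi'}N)}$ such that, under Assumption~\ref{asm:LWE-hardness}, no $N^{O(\log^{\chi'}N)}$-time algorithm solves $\massart\big(m'(N),N,\eta=1/3,\opt'=1/2^{\log^{1-\zeta'}N}\big)$ with advantage $1/3$. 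I would then declare $\chi:=\chi'/2$, and this is the constant promised by the corollary.

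Next, toward a contradiction, suppose $L$ is an algorithm that learns $\LTF_N$ under $\eta=1/3$ Massart noise using time and sample complexity $N^{O(\log^{\chi}N)}$ and, with probability at least $2/3$ (which may be assumed w.l.o.g.\ by standard amplification), outputs a hypothesis of $0$-$1$ error at most $2^{C_0\log^{1-\zeta}N}\cdot\opt$, where $\opt=R_{\LTF}(D)$ and $C_0$ is the constant hidden in the exponent. I would turn $L$ into a distinguisher for the problem of Theorem~\ref{thm:main-formal} as follows: given $m'(N)$ i.i.d.\ samples from the unknown $D$, run $L$ on $N^{O(\log^{\chi}N)}$ of them to obtain a hypothesis $h$; use another $\poly(N)$ samples to estimate $\wh{\err}(h):=\pr_{(\bx,y)\sim D}[h(\bx)\ne y]$ to additive accuracy $c\eta/8$, where $c$ is the universal constant from Definition~\ref{def:test-Massart}; and output ``alternative hypothesis'' iff $\wh{\err}(h)\le c\eta/2$. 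Since $\chi<\chi'$ we have $\log^{\chi}N=o(\log^{\chi'}N)$, so the number of samples used is at most $m'(N)$ and the running time is $N^{O(\log^{\chi}N)}+\poly(m'(N),N)=N^{O(\log^{\chi'}N)}$, i.e.\ the distinguisher lies in the regime excluded by Theorem~\ref{thm:main-formal}.

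It then remains to verify the two cases. In the alternative case $D$ is (by Definition~\ref{def:test-Massart}) a genuine $\eta=1/3$ Massart $\LTF$ instance with $R_{\LTF}(D)\le\opt'$, so with probability $\ge 2/3$ the hypothesis $h$ returned by $L$ has true error at most $2^{C_0\log^{1-\zeta}N}\cdot\opt'=2^{\,C_0\log^{1-\zeta}N-\log^{1-\zeta'}N}$; because $\zeta'=\zeta/2<\zeta$ the exponent tends to $-\infty$, so this quantity is below $c\eta/4$ for all large $N$, and then by concentration of $\wh{\err}$ the distinguisher outputs ``alternative hypothesis'' with probability $\ge 2/3-o(1)$. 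In the null case the Bayes-optimal error is $\Omega(\eta)$ (indeed exactly $c\eta$ by Definition~\ref{def:test-Massart}), hence \emph{every} hypothesis --- in particular $h$ --- has true error $\ge c\eta$, so $\wh{\err}(h)\ge c\eta-c\eta/8>c\eta/2$ with high probability and the distinguisher outputs ``null hypothesis''. The resulting advantage is $\ge 2/3-o(1)>1/3$, contradicting Theorem~\ref{thm:main-formal}; this establishes the corollary, and the claim that it implies the informal Theorem~\ref{thm:main-inf} follows by observing that $\opt\le 2^{-\log^{1-\zeta}N}$ forces any such learner to beat the constant error rate $\eta=1/3$ by a constant factor, exactly what has been ruled out.

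The only thing requiring care --- and the nearest thing to an ``obstacle'' --- is the bookkeeping of the two independent poly-logarithmic exponents: the slack $2^{O(\log^{1-\zeta}N)}$ on the error and the $\log^{\chi}N$ in the time/sample budget must both be swallowed inside the window left by Theorem~\ref{thm:main-formal}. The choices $\zeta'=\zeta/2$ and $\chi=\chi'/2$ leave room to spare, since $\log^{\chi}N=o(\log^{\chi'}N)$ and $C_0\log^{1-\zeta}N=o(\log^{1-\zeta'}N)$; beyond this there is no genuine difficulty, as all the hardness content already resides in Theorem~\ref{thm:main-formal} (equivalently, in Theorem~\ref{thm:lwe-to-massart}).
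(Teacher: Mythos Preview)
Your proposal is correct and follows exactly the natural learning-to-testing reduction that the paper implicitly intends (the paper states the corollary without proof, simply as a consequence of Theorem~\ref{thm:main-formal}). Your bookkeeping with $\zeta'=\zeta/2$ and $\chi=\chi'/2$ to absorb the two sources of slack is the right move, and your handling of the null case---observing that \emph{any} hypothesis, regardless of what the learner does on a non-Massart input, must incur error at least $c\eta$---correctly sidesteps the fact that the learner has no guarantee there.
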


\end{document}